\DeclareMathOperator*{\argmin}{arg\,min}
\newcommand{\ycomment}[2]{({\color{red}YT: #2})}
\newcommand{\ttl}{Resource Sharing Through Multi-Round Matchings}
\newcommand{\reward}{\mu}
\newcommand{\increward}{\delta}
\newcommand{\totreward}{\mathbb{B}}
\newcommand{\ttbig}{\mathtt{big}}
\newtheorem{theorem}{Theorem}[section]
\newtheorem{lemma}[theorem]{Lemma}{\bfseries}{\itshape}
\newtheorem{proposition}[theorem]
        {Proposition}{\bfseries}{\itshape}
\newtheorem{corollary}[theorem]
        {Corollary}{\bfseries}{\itshape}
\newtheorem{definition}[theorem]{Definition}
\newcommand{\maxtbmrm}{\mbox{\textsc{MaxTB-MRM}}}
\newcommand{\maxsamrm}{\mbox{\textsc{MaxSA-MRM}}}
\newcommand{\agmaxsamrm}{\mbox{\textsc{AG-MaxSA-MRM}}}
\newcommand{\ilpagmaxsamrm}
     {\mbox{\textsc{ILP-AG-MaxSA-MRM}}}
\newcommand{\heuristicagmrm}{\textsc{PS-AG-MaxSA-MRM}}
\newcommand{\agmrm}{\textsc{AG-MRM}}
\newcommand{\mrm}{\textsc{MRM}}
\newcommand{\msmrm}{\textsc{MaxSA-MRM}}
\newcommand{\rmmrm}{\textsc{RM-MRM}}
\newcommand{\tpmrm}{\textsc{TP-MRM}}
\newcommand{\msagmrm}{\textsc{MS-AG-MRM}}
\newcommand{\ilpmsagmrm}{\textsc{ILP-Max-AG-MRM}}
\newcommand{\agset}{X} 
\newcommand{\prin}{\mbox{\textsc{Principal}}}
\newcommand{\mbsoln}{\mathcal{M}}
\newcommand{\rset}{Y} 
\newcommand{\mvcc}{\mbox{\textsc{MVC-Cubic}}}
\newcommand{\algmrm}{\mbox{\textsc{Alg-MRM}}}
\newcommand{\algmaxtbmrm}{\mbox{\textsc{Alg-MaxTB-MRM}}}
\newcommand{\calc}{\mbox{$\mathcal{C}$}}
\newcommand{\compgr}{\mbox{$G(\agset,\,\rset,\,E)$}}
\newcommand{\bbx}{\mbox{$\agset$}}
\newcommand{\bby}{\mbox{$\rset$}}
\newcommand{\cnp}{\textbf{NP}}
\title{\ttl}
\author{Yohai Trabelsi, \textsuperscript{\rm 1}
Abhijin Adiga,\textsuperscript{\rm 2}
Sarit Kraus,\textsuperscript{\rm 1}
S.~S. Ravi,\textsuperscript{\rm 2, \rm 3}
Daniel J. Rosenkrantz\textsuperscript{\rm 2, \rm 3}
}
\newcommand{\cc}{\mbox{\emph{Course-Classroom}}}
\newcommand{\ls}{\mbox{\emph{Lab-Space}}}
\begin{document}

\maketitle 

\begin{abstract}
Applications such as employees sharing office spaces over a workweek
can be modeled as problems where agents are matched to resources
over multiple rounds. Agents' requirements limit the set of compatible
resources and the rounds in which they want to be matched.  Viewing such an
application as a multi-round matching problem on a bipartite compatibility
graph between agents and resources, we show that a  solution 
(i.e., a set of matchings, with one matching per round) can be found
efficiently if one exists.  To cope with situations where a solution does not exist, we consider two extensions. In
the first extension, a benefit function is defined for each agent and the
objective is to find a multi-round matching to maximize the total benefit.  For a
general class of benefit functions satisfying certain properties (including
diminishing returns), we show that this multi-round matching problem is
efficiently solvable.  This class includes utilitarian and Rawlsian welfare
functions.  
For another benefit function, we show that the maximization
problem is NP-hard.  
In the second extension, the objective is to generate advice to
each agent (i.e., a subset of requirements to be relaxed) subject to a
budget constraint so that the agent can be matched.
We show that this budget-constrained advice generation problem is NP-hard.
For this problem, we develop an integer linear programming formulation  as well
as a heuristic based on local search.
We experimentally evaluate our algorithms on
synthetic networks and apply them to two real-world situations: shared
office spaces and matching courses to classrooms.
\end{abstract}

\section{Introduction}
\label{sec:intro}


We consider resource allocation problems that arise in practical
applications such as 
hot desking or shared work spaces~\cite{varone2019dataset,cai2010common},
classroom scheduling~\cite{phillips2015integer},
matching customers with
taxicabs~\cite{karamanis2020assignment,kucharski2020exact}, 
and matching agricultural equipment 
with farms~\cite{Gilbert-2018,RS-2020}.
In such
scenarios, many agents (individuals, cohorts, farms, or in general, entities) are
competing for a limited number of time-shared resources.  
In our formulation, an agent can be matched to at most
one resource in any time slot, but might want to be
matched in more than one time slot.  Agents may have some \emph{restrictions}
that limit the set of resources to which they can be matched or
possible time slots in which they can be matched.  Any resource whose
specifications do not meet an agent's restrictions is \emph{incompatible} with that agent.

In light of the COVID-19 pandemic, such resource allocation problems have
become as important as ever. For example, social-distancing requirements
(such as maintaining a six-foot separation between individuals) led to
a dramatic decrease in the number of individuals who can occupy an enclosed
space, whether it is a classroom~\cite{dmg2020,enriching2020},
workspace~\cite{parker2020covid} or visitation room~\cite{wong2021self}.

We model this resource allocation problem as a $k$-round matching problem
on a bipartite graph, where the two node sets represent the agents and
resources respectively.  
We assume that the rounds are numbered 1 through $k$,
and each round matches a subset of agents with a subset of
resources.
Each agent specifies the set of permissible 
rounds in which it can participate and the
desired number of rounds (or matchings) in which it needs to be assigned a resource. 
Consider for example a classroom scheduling for a
workweek ($k=5$). Each lecturer who wants to schedule class sessions for her courses 
specifies the number of sessions she would like to schedule and the possible weekdays. There may also be additional requirements for classrooms (e.g., room size, location, 
computer lab).
As a result, some
agent-resource pairs become incompatible. 
The \prin{}'s objective is to
find a set of at most $k$ matchings satisfying all
the requirements, if one exists.
In the classroom scheduling example, the existence of such a set means that there exists a solution where each lecturer receives the desired number of sessions on the desired days.

We refer to this as the \textbf{multi-round matching problem} (\mrm). 
We consider two additional problem formulations to cope with the situation when a solution satisfying all the requirements does not exist. 

In the first formulation, 
an agent receives a benefit (or reward) that depends
on the number of rounds where it is matched, and
the objective is to find a solution that maximizes 
the sum of the benefits
over all agents.  
We refer to this as the \textbf{multi-round matching 
for total benefit maximization} problem
(abbreviated as \maxtbmrm{}). 
When applying this formulation to the classroom scheduling example, it is possible to find a solution in which the number of assigned sessions is maximized (by maximizing the corresponding utility function). Alternatively, we can also find a solution in which the minimal assignment ratio for a lecturer (i.e., the number of assigned sessions divided by the number requested) is maximized.

In the second formulation, the objective is to generate suitable advice to
each agent to relax its resource requirements so that it 
becomes compatible with previously incompatible resources. 
However, the agent incurs a cost to
relax its restrictions (e.g., social distancing requirements not met,
increase in travel time).
So, the suggested advice must satisfy the budget
constraints that are chosen by the agents.
In other words, the goal of this problem is to suggest
relaxations to agents' requirements, subject to budget constraints, so that in the new bipartite graph
(obtained after relaxing the requirements), all the
agents' requirements are satisfied.
In scheduling classrooms, for example, the~\prin{} may require that some lecturers waive a subset of their restrictions (e.g., having a far-away room) so that they can get the number of sessions requested.
We refer to this 
to as the \textbf{advice generation for multi-round matching} problem (abbreviated as \agmrm{}).

\smallskip

\noindent
\textbf{Summary of contributions:}

\smallskip

\noindent
\underline{(a) An efficient algorithm for \maxtbmrm{}.}
For a general class of benefit functions that satisfy 
certain properties including monotonicity (i.e., the function is
non-decreasing with increase in the number of rounds)
and diminishing returns (i.e., increase in the value of the
function becomes smaller as the number of rounds is increased),
we show that the \maxtbmrm{} problem can be
solved efficiently by a reduction to the maximum weighted matching problem.
A simple example of such a function (where each agent receives a benefit of 1 for each round in which it is
matched) represents a \textbf{utilitarian social welfare}
function \cite{viner1949bentham}.
Our efficient algorithm for this problem yields
as a corollary an efficient algorithm for the \mrm{}
problem mentioned above.
Our algorithm can also be used for a more complex benefit function that models a \textbf{Rawlsian social welfare function} 
\cite{rawls1999theory,Stark-2020},
where the goal is to maximize the minimum satisfaction
ratio (i.e., the ratio of the number of rounds assigned to an agent to the requested number of rounds) over all
the agents.

\smallskip
\noindent
\underline{(b) Maximizing the number of satisfied agents.}
Given a multi-round matching, we say that an agent is \textbf{satisfied} if the matching satisfies all
the requirements of the agent.
The objective of finding a multi-round matching that satisfies the largest number of agents can be modeled as the problem of maximizing
the total benefit by specifying a simple benefit
function for each agent. However, such a benefit function
\emph{doesn't} have the diminishing returns property.
We show that this optimization problem 
(denoted by \maxsamrm{}) is \cnp-hard,
even when each agent needs to be matched in at most three
rounds. 
This result points out the importance of the
diminishing returns property in obtaining an efficient
algorithm for maximizing the total benefit.

\noindent
\underline{(c) Advice generation.} 
We show that \agmrm{}  is \cnp-hard even when
there is only one agent who needs to be matched 
in two or more rounds. 
Recall that the \agmrm{} problem requires
that each agent must be satisfied (i.e., assigned the desired
number of rounds) in the new
compatibility graph (obtained by relaxing the suggested
restrictions).
It is interesting to note that the hardness of \agmrm{} 
is due to the advice generation part and not the computation of matching. 
(Without the advice generation part, the \agmrm{} problem 
corresponds to \mrm{} on a given compatibility graph, 
which is efficiently solvable as mentioned above.)
The hardness of \agmrm{} directly implies
the hardness of the problem (denoted by \agmaxsamrm{})
where the generated advice must lead to a matching
that satisfies the maximum number of agents.
We present two solution
approaches for the \agmaxsamrm{} problem: 
(i)~an integer linear
program (\ilpagmaxsamrm{})
to find an optimal solution and 
(ii)~a pruned local search heuristic (\heuristicagmrm{}) that uses our algorithm for \mrm{} to
generate solutions of good quality.

\smallskip

\noindent
\underline{(d) Experimental results.}
We present a back-to-the-lab desk-sharing study that has been conducted in
the AI lab at a university to facilitate lab personnel 
intending to return to
the work place during the COVID-19 epidemic.
This study applies our algorithms to guide policies
for returning to work.  
In addition, we present an experimental evaluation
of our algorithms on several synthetic data sets as well as
on a data set for matching courses to classrooms. 

Table~\ref{tab:prob_considered} shows the list of problems 
considered in our work and our main results.
\iftoggle{arxiv}
{
}
{
Due to space limitations, many proofs are omitted;
they can be founded  in an expanded version~\cite{mround-arxiv}.
}

\begin{table}
{\small
\begin{tabular}{|p{0.35\columnwidth}|p{0.55\columnwidth}|}\hline
\textbf{Problem} & \textbf{Results} \\ \hline\hline

\maxtbmrm{} & {Efficient algorithm for a general\newline class of benefit functions. An efficient algorithm for the~\mrm{} 
problem is a corollary.} \\ \hline
\maxsamrm{} & {\cnp-hard (reduction from
             Minimum Vertex Cover for cubic graphs)} \\ \hline         
\agmrm{}  &  {\cnp-hard (reduction from Minimum Set Cover)}\\ 
\agmaxsamrm{} & {ILP and a local search heuristic for
                 \agmaxsamrm{}}
                \\ \hline
\end{tabular}
}
\caption{Overview of problems and main results}
\label{tab:prob_considered}
\end{table}

\section{Related Work}
\label{sse:related}

Resource allocation in multi-agent systems
is a well-studied area (e.g.,
\cite{Chevaleyre-etal-2006,Gorodetski-etal-2003,Dolgov-etal-2006}).  The
general focus of this work is on topics such as how agents express
their requirements, algorithms for allocating resources and evaluating the quality of the
resulting allocations.  Some complexity and  approximability issues
in this context are discussed in Nguyen~et~al. (\citeyear{nguyen2013survey}).  Zahedi
et al. (\citeyear{Zahedi-etal-2020}) study the allocation of
tasks to agents so that the task allocator can answer
queries on counterfactual allocations.

Babaei et al.~(\citeyear{babaei2015survey}) presented a survey of approaches to solving timetabling problems that assign university courses to spaces and time slots, avoiding the violation of hard constraints and trying to satisfy soft constraints as much as possible. A few papers proposed approaches to exam timetabling (e.g., Leite et al.~(\citeyear{leite2019fast}), Elley~(\citeyear{eley2006ant})).
Other work addressed timetabling for high schools (e.g., Fonseca et al.~(\citeyear{fonseca2016integrating}) and Tan et al.~(\citeyear{tan2021survey})).
However, the constraints allowed in the various timetabling problem variations are much more complex than in our setting.

Variants of multi-round matching have been considered in game
theoretic settings. Anshelevich~et al.~(\citeyear{anshelevich2013social})
consider an online matching algorithm where incoming agents are matched in
batches. 
Liu~(\citeyear{liu2020stability}) considers matching a set of long-lived players (akin
to resources in our setting) to short-lived players 
in time-slots. These two works  focus on mechanism design to achieve stability and maximize social welfare.  
Gollapudi et al. (\citeyear{gollapudi2020almost}) and also Caragiannis and Narang (\citeyear{caragiannis2022repeatedly}) consider repeated matching 
with the objective of achieving envy-freeness. However, the latter only considers the case of perfect matching, where agents are matched exactly once in each round. S{\"u}hr~et al.~(\citeyear{suhr2019two}) considered optimizing fairness in repeated matchings motivated by applications to the ride-hailing problem.

Zanker~et~al. (\citeyear{Zanker-etal-2010}) discuss the design and evaluation of
recommendation systems that allow users to specify
soft constraints regarding products of interest.  
A good discussion on the design of constraint-based recommendation systems  
appears in Felfernig~et~al. (\citeyear{Felfernig-etal-2011}).

Zhou and Han (\citeyear{Zhou-Han-2019}) propose an
approach for a graph-based recommendation system that forms groups of
agents with similar preferences to allocate resources.
Trabelsi et al.~(\citeyear{trabelsi2022maximizing}, \citeyear{trabelsi2022maximizing2}) discussed the problem of advising an agent to modify its preferences so that it can be assigned 
a resource in a maximum matching. In this work, however, the advice is given to a single agent and only one matching is generated.

To our knowledge, the problems studied in our paper, 
namely finding multi-round matchings that optimize general benefit
functions and advising 
agents to modify their preferences so that they can be
assigned resources in such matchings,
have  not been addressed in the literature.

\newcommand{\rxbinary}{\mathbf{b}}
\newcommand{\attribs}{\mathcal{A}}
\newcommand{\boolattribs}{\mathcal{B}}

\section{Definitions and Problem Formulation}
\label{sec:definitions}

\subsection{Preliminaries}
\label{sse:prelims}

\paragraph{Agents, resources and matching rounds}
Let $\bbx{} = \{x_1, x_2, \ldots, x_n\}$ be a set of~$n$ \emph{agents} and
$\bby{} = \{y_1, y_2, \ldots, y_m\}$ a set of $m$ \emph{resources}. The
objective of the system or the \prin{} is to generate~$k$ matchings $M_1,\ldots,M_k$ of agents to
resources, which we henceforth refer to as a
{\bfseries $\mathbf k$-round matching}. However, each agent has certain
requirements that need to be satisfied. 
Firstly, each agent~$x_i$ specifies a set~$K_i\subseteq\{1,2,\ldots,k\}$
and wants to be matched in~$\rho_i$ rounds from $K_i$. 

\paragraph{Restrictions and Restrictions Graph}
In addition to round preferences, agents may have restrictions
due to which they cannot be matched to certain (or all) resources. We model
this using an~$\agset\rset$ bipartite graph with multiple labels on edges, with
each label representing a restriction.  We refer to this graph as the
\textbf{restrictions graph} and denote it as~$G_R(\agset,\rset,E_R)$, where
$E_R$ is the edge set.  Each edge $e\in E_R$ is associated with a set $\Gamma_e$ of
agent-specific labels representing restrictions.  

\paragraph{Compatibility Graph}
Let~$\calc_i$
denote the set of edge labels or
restrictions associated with agent~$x_i$. Suppose a set $C \subseteq
\calc_i$ of restrictions makes resource~$y_j$ \emph{incompatible}
with~$x_i$, we draw a labeled edge $\{x_i,y_j\}$ with labels in~$C$. We say
that $C$ is the \textbf{restrictions set} for the edge $\{x_i, y_j\}$. To
make~$x_i$ compatible with~$y_j$, \emph{all} the restrictions in~$C$ must be removed. There can be hard constraints due to which~$y_j$ can never
be compatible with~$x_i$ even if all the restrictions are removed, in which
case,~$x_i$ and~$y_j$ are not adjacent in~$G_R$. Let~$E\subseteq E_R$ be
the set of all edges for which the restrictions set is empty. The
corresponding subgraph~$\compgr$ is called the \textbf{compatibility
graph}.  
An example showing a restrictions graph, a compatibility
graph and multi-round matching appears as Figure~\ref{fig:example1}. This example (Figure~\ref{fig:example1})  is motivated by the \ls{} application
considered in this work.  There are lab
members (agents) who need to be assigned rooms (resources) -- one person
per room -- on five days (i.e., $k=5$) with some preferences for space and
days.  Restrictions on each edge are induced by the mismatch between agent
preferences and resource properties.  For example,~$x_1$ requires a big
room, while~$y_2$ is small.  Therefore, we add a restriction (or
label)~$\ttbig_1$ to the edge~$\{x_1,y_2\}$.  The restrictions graph for
this problem is shown followed by a multi-round matching solution.
\begin{figure}[ht]
\centering
\includegraphics[scale=.40]{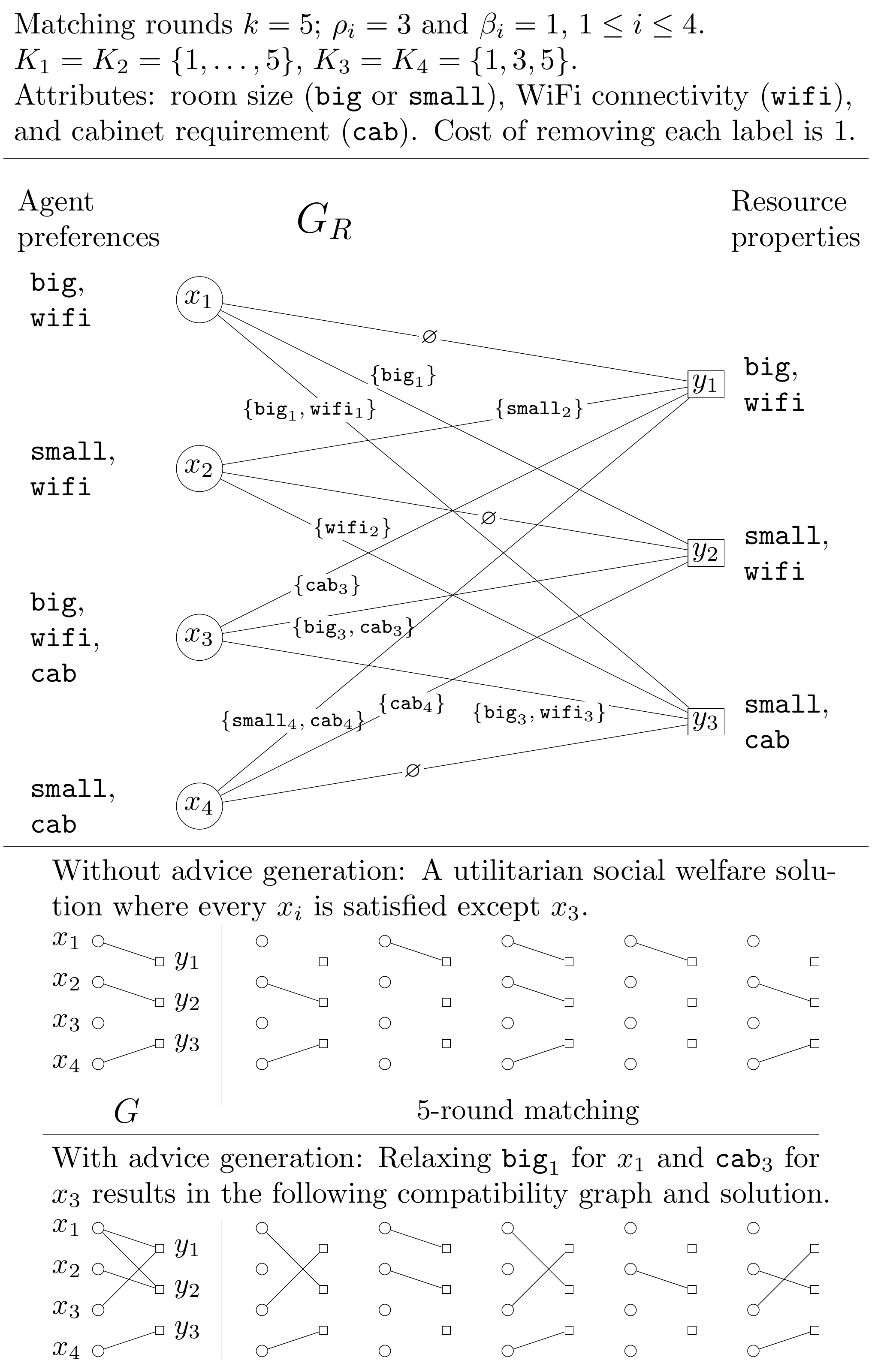}
\caption{An example: A restrictions
graph~$G_R$ is induced by agent preferences and resource attributes. The
first multi-round matching solution corresponds to the compatibility
without the advice generation component. Not all agents are satisfied. The
second solution
satisfies all agents. 
\label{fig:example1}}
\end{figure}

\paragraph{Costs for restrictions removal}
For each restriction~$c_i^t$, there is a positive cost $\psi_i^t$
associated with removing $c_i^t$.  To remove any set of labels, we use an
additive cost model, i.e., for any~$C \subseteq \calc_i$, the cost incurred
by agent~$x_i$ for removing all the restrictions in $C$ is~$\sum_{c_i^t\in C}\psi_i^t$. An agent~$x_i$ is
\textbf{satisfied} if it is matched in~$\rho_i$ rounds belonging to the
set~$K_i$. Now, we formally define the multi-round matching and
advice generation problems.

\subsection{Multi-round Matching}
\label{sse:mrm}
We begin with a definition of the basic multi-round matching problem (\mrm{}) and discuss other versions.

\smallskip
\noindent
\textbf{Multi-Round Matching} (\mrm{})

\noindent
\underline{Instance:}~A compatibility graph $\compgr$, number of
rounds of matching~$k$; for each agent $x_i$, the set of permissible
rounds~$K_i\subseteq\{1,\ldots,k\}$, the number of rounds
$\rho_i\le|K_i|$ in which $x_i$
wants to be matched.

\noindent
\underline{Requirement:}~Find a $k$-round matching 
that satisfies all the agents
(i.e., one that meets the requirements of
all the agents), if one exists.

Since a solution may not exist for a given instance of the $\mrm$ problem,
we consider an extension that uses a \textbf{benefit} (or utility)
function for each agent.  For each agent $x_i$, the benefit function
$\mu_i : \{0, 1, \ldots, \rho_i\} \rightarrow \mathbb{R}^+$, where
$\mathbb{R}^+$ is the set of nonnegative real numbers, gives a benefit
value $\mu_i(\ell)$ when the number of rounds assigned to $x_i$ is $\ell$,
$0 \leq \ell \leq \rho_i$.  If a $k$-round matching assigns $\ell_i \leq
\rho_i$ rounds to agent $x_i$, then the total benefit to all the agents is
given by $\sum_{i=1}^{n} \mu_i(\ell_i)$.  We can now define a more general
version of \mrm{}.

\smallskip

\noindent
\textbf{Multi-Round Matching to Maximize Total Benefit}
(\maxtbmrm)

\noindent
\textbf{\underline{Instance:}}~A compatibility graph $\compgr$, number of
rounds of matching~$k$; for each agent $x_i$, the set of permissible
rounds~$K_i\subseteq\{1,\ldots,k\}$, 
the number of rounds $\rho_i\le|K_i|$ in which
$x_i$ wants to be matched and
the benefit function $\mu_i$.

\noindent
\textbf{\underline{Requirement:}}~Find a $k$-round matching that maximizes the total benefit over all the agents.

As will be shown in Section~\ref{sec:rmmrm_poly}, when the benefit functions satisfy some properties (including monotonicity and diminishing returns), the \maxtbmrm{} problem can be solved efficiently.
One such benefit function allows us to obtain an efficient
algorithm for the \mrm{} problem.
Another benefit function allows us to define the problem
of finding a $k$-round matching that maximizes the number of satisfied
agents (abbreviated as \maxsamrm{}).
However, this benefit function does not satisfy 
the diminishing returns property and so our efficient
algorithm for \maxtbmrm{} cannot be used for this problem.
In fact, we show in Section~\ref{sec:rmmrm_poly}
that the \maxsamrm{} problem is \cnp-hard.


\subsection{Advice Generation}
\label{sec:agmrm}
We define the multi-round matching advice generation problems, namely $\agmrm$ and
$\agmaxsamrm$.

\smallskip
\noindent
\textbf{Advice Generation for Multi-Round Matching}\newline (\agmrm{})

\smallskip
\noindent
\textbf{\underline{Instance:}}~A restrictions graph $G_R(\agset, \rset,
E_R)$, number of rounds of matching~$k$; for each agent $x_i$, the set of
permissible rounds~$K_i\subseteq\{1,\ldots,k\}$, the  number of times
$x_i$ wants to be matched $\rho_i\le|K_i|$, and budget $\beta_i$; for each
label
$c_i^t \in \calc_i$, a cost $\psi_i^t$ of removing 
that label. (Recall that $\calc_i$ is the set of labels
on the edges incident on agent $x_i$ in $G_R$.)

\noindent
\textbf{\underline{Requirement}:}~For each agent $x_i$, is there a subset $\calc'_i \subseteq
\calc_i$ such that the following conditions hold? (i)~The cost of removing
all the labels in $\calc'_i$ for agent $x_i$ is at most $\beta_i$, and (ii)~in the resulting compatibility graph, there exists a
$k$-round matching such that all agents are satisfied.
If so, find such a $k$-round matching.

In Figure~\ref{fig:example1}, the advice generation component is
illustrated in the bottom-most panel. 
In this case, we note that there is a
solution to the $\agmrm$ problem.

In general, since a solution may not exist for a
given $\agmrm$ instance, it is natural to consider the version where the
goal is to maximize the number of satisfied agents.  We denote this version
by \agmaxsamrm{}.

\paragraph{Note:} For convenience, we defined optimization versions of
problems \maxsamrm{}, \agmrm{} and \agmaxsamrm{} above.
In subsequent sections, we show that these problems are \cnp-hard.
It can be seen that the decision versions of these three problems are in \cnp; hence, these versions are \cnp-complete.

\section{Maximizing Total Benefit}
\label{sec:rmmrm_poly} 

\subsection{An Efficient algorithm for \maxtbmrm{}}
In this section, we present our algorithm for the
\maxtbmrm{} problem that finds a $k$-round matching that
maximizes the total benefit over all the agents.
This algorithm requires the benefit function for each agent 
to satisfy some properties. We now specify these
properties.

\smallskip
\noindent
\textbf{Valid benefit functions:}~ 
For each agent $x_i$, $1 \leq i \leq n$,
let~$\reward_i(\ell)$ denote the
non-negative \emph{benefit} that the agent~$x_i$ receives if matched in~$\ell$ rounds, for~$\ell=0,1,2,\ldots,\rho_i$. 
Let~$\increward_i(\ell)=\reward_i(\ell)-
\reward_i(\ell-1)$ for~$\ell=1,2,\ldots,\rho_i$.
We say that the benefit function $\reward_i$
is \textbf{valid} if satisfies
\emph{all} of the following four properties: 
(P1) $\reward_i(0)=0$;
(P2) $\reward_i$ is monotone non-decreasing in~$\ell$;
(P3) $\reward_i$ has the \emph{diminishing returns property}, that is,
$\reward_i(\ell)-\reward_i(\ell-1)\le\reward_i(\ell-1)-\reward_i(\ell-2)$
for~$2\le\ell\le\rho_i$; and 
(P4) $\increward_i(\ell) \leq 1$, for~$\ell=1,2,\ldots,\rho_i$.
Note that~$\reward_i(\cdot)$ satisfies
property~P3
iff~$\increward_i(\cdot)$ is monotone non-increasing in~$\ell$.
Property~P4 can be satisfied by normalizing
the increments in the benefit values.

\smallskip

\noindent
\textbf{Algorithm for \maxtbmrm{}:}~ Recall that the goal of this problem is to find a multi-round matching that maximizes the total benefit over all the agents.
The basic idea behind our algorithm is to reduce the problem to the maximum weighted matching
problem on a larger graph. 
The steps of this construction are shown in
Figure~\ref{fig:alg_rmmrm}. Given a multi-round solution~$\mbsoln$ that assigns $\ell_i$ rounds to agent $x_i$,
$1 \leq i \leq n$,
let~$\totreward(\mbsoln)=\sum_{i=1}^n\reward_i(\ell_i)$ denote the total benefit due to $\mbsoln$.

\begin{figure}[ht]
\rule{\columnwidth}{0.01in}
\small
\begin{enumerate}[leftmargin=*]
\item Given compatibility graph $G(X, Y, E)$, and for each agent~$x_i$,
permissible rounds~$K_i$, the maximum number of rounds~$\rho_i$, and benefit
function~$\reward_i(\cdot)$, create a new edge-weighted bipartite graph
$G'(X', Y', E', w(\cdot))$ as follows.
(The sets $X'$, $Y'$, and $E'$ are all initially empty.)
\begin{description}
\item[Node set $X'$.] For each agent $x_i \in X$ and each $h \in K_i$, add
a node denoted by $x_i^h$ to $X'$.

\item[Node set $Y'$.] For each resource $y_j \in Y$, add~$k$ nodes denoted
by $y_j^h$, where $h = 1, 2, \ldots, k$, to $Y'$. These are called \emph{type-1} resource nodes. 
For every agent~$x_i$, add $\rho_i$ nodes, denoted by~$y^2_{i,p}$, where $p=1, 2, \ldots,\rho_i$; these are called \emph{type-2} resource nodes. 
For every agent~$x_i$, add $|K_i|-\rho_i$ nodes, denoted by
let~$y^3_{i,p}$, where $p=1, 2, \ldots,|K_i|-\rho_i$; these are called \emph{type-3} resource nodes.

\item[Edge set $E'$ and weight~$w$.] For every edge $\{x_i, y_j\} \in E$
and each $h \in K_i$, add edge $\{x_i^h, y_j^h\}$ to $E'$ with
weight~$w(x_i^h,y_j^h)=1$. For each~$h \in K_i$ and~$p=1,\ldots,\rho_i$, add
edge $\{x_i^h, y^2_{i,p}\}$ with
weight~$w(x_i^h,y^2_{i,p})=1-\increward_i(p)$. 
For each~$h\in K_i$
and~$p=1,\ldots,|K_i|-\rho_i$, add edge $\{x_i^h, y^3_{i,p}\}$~with
weight~$w(x_i^h,y^3_{i,p})=k+1$.
\end{description}

\item Find a maximum weighted matching $M^*$ in $G'$.

\item Compute a collection of matchings $\mbsoln^*=\{M_1, M_2, \ldots,
 M_k\}$ for $G$ as follows. 
 The sets $M_1$, $M_2$, $\ldots$, $M_k$ are initially empty.
 For each edge $e=\{x_i^{h},y_j^{h}\} \in M^*$
where~$y_j^h$ is a type-1 resource, add the edge $\{x_i, y_j\}$ to $M_{h}$.
\end{enumerate}
\caption{Steps of \algmaxtbmrm{}.}
\label{fig:alg_rmmrm}
\rule{\columnwidth}{0.01in}
\end{figure}

Now, we establish the following result.
\begin{theorem}\label{thm:alg-correctness}
Algorithm \algmaxtbmrm{} (Figure~\ref{fig:alg_rmmrm})
produces an optimal solution if every 
benefit function~$\reward_i(\cdot)$ is valid.
\end{theorem}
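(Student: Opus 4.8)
The plan is to show that the maximum weighted matching $M^*$ in the auxiliary graph $G'$ and an optimal $k$-round matching stand in a value-preserving correspondence, so that extracting the type-1 edges of $M^*$ (Step~3) yields an optimal solution. First I would analyze the structure of any maximum weighted matching in $G'$. The crucial observation is that the type-3 nodes act as ``overflow'' sinks: agent $x_i$ contributes $|K_i|$ left nodes $x_i^h$ but should receive at most $\rho_i$ real assignments, and the $|K_i|-\rho_i$ type-3 nodes (each incident only to $x_i$'s round-nodes and carrying the large weight $k+1$) absorb the surplus. I would prove by an exchange argument that every type-3 node is matched in $M^*$: since every other edge has weight at most $k$, an unmatched type-3 node together with any round-node that is unmatched or matched by a non-type-3 edge could be re-paired to increase (or at worst not decrease) the total weight, contradicting maximality; a counting check ($\rho_i\ge 1$) shows such a round-node always exists. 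Consequently exactly $|K_i|-\rho_i$ of $x_i$'s round-nodes are consumed by type-3 edges, leaving exactly $\rho_i$ for type-1/type-2 edges, so the number $\ell_i$ of type-1 edges at $x_i$ satisfies $\ell_i\le\rho_i$. This is precisely what guarantees that the recovered matching $\mbsoln^*$ assigns each $x_i$ at most $\rho_i$ rounds, i.e.\ is feasible; that each $M_h$ is a legal matching in $G$ follows from the superscript-$h$ bookkeeping, since distinct copies $x_i^h$ and $y_j^h$ enforce the one-resource-per-agent and one-agent-per-resource constraints within round~$h$.

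Second, I would set up an exact weight accounting per agent. Fixing $x_i$ and letting $\ell_i$ be its number of type-1 edges, the type-1 and type-3 contributions are $\ell_i$ and $(|K_i|-\rho_i)(k+1)$. For the type-2 contribution the remaining $\rho_i-\ell_i$ round-nodes may be matched to type-2 nodes of weight $1-\increward_i(p)$, and here the validity hypotheses enter: P2 gives $\increward_i(p)\ge 0$ and P4 gives $\increward_i(p)\le 1$, so every type-2 weight lies in $[0,1]$ and is ``free'' to include; P3 (diminishing returns) makes $1-\increward_i(p)$ non-decreasing in $p$, so the best $\rho_i-\ell_i$ type-2 weights are exactly those for $p=\ell_i+1,\dots,\rho_i$, with sum
\[
\sum_{p=\ell_i+1}^{\rho_i}\bigl(1-\increward_i(p)\bigr)
=(\rho_i-\ell_i)-\bigl(\reward_i(\rho_i)-\reward_i(\ell_i)\bigr),
\]
the telescoping using $\increward_i(p)=\reward_i(p)-\reward_i(p-1)$ and P1. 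Adding the three contributions, the per-agent weight is $C_i+\reward_i(\ell_i)$ with $C_i=\rho_i+(|K_i|-\rho_i)(k+1)-\reward_i(\rho_i)$ independent of the matching.

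Third, I would combine the two directions with $C=\sum_i C_i$. Any feasible $k$-round matching $\mbsoln$ with counts $\ell_i\le\rho_i$ can be lifted (matching surplus round-nodes to type-3 nodes and then to the best type-2 nodes) to a matching of $G'$ of weight exactly $C+\totreward(\mbsoln)$, so $w(M^*)\ge C+\totreward(\mbsoln)$ for every feasible $\mbsoln$. Conversely, writing $\ell_i^*$ for the type-1 count of $M^*$ at $x_i$, the type-2 contribution of $M^*$ at each agent is at most the best-$(\rho_i-\ell_i^*)$ sum computed above, whence $w(M^*)\le C+\sum_i\reward_i(\ell_i^*)=C+\totreward(\mbsoln^*)$. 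Chaining these gives $\totreward(\mbsoln^*)\ge\totreward(\mbsoln)$ for all feasible $\mbsoln$, so $\mbsoln^*$ is optimal.

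I expect the main obstacle to be the second step: making the type-2 accounting airtight, in particular justifying that in $M^*$ the surplus round-nodes effectively realize the largest-weight type-2 sum. The upper bound $w(M^*)\le C+\totreward(\mbsoln^*)$ is exactly where diminishing returns is genuinely required — P3 forces the ``best available'' type-2 weights to be the contiguous block $p=\ell_i^*+1,\dots,\rho_i$ and validates the telescoping. Without P3 the maximum-weight matching could harvest a larger type-2 sum than $\totreward$ accounts for, breaking the correspondence, which is consistent with the paper's later hardness result for the non-diminishing (\maxsamrm{}) case.
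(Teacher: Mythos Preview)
Your proposal is correct and takes essentially the same approach as the paper: the paper also argues that all type-3 nodes are matched in $M^*$, then establishes a value-preserving correspondence between (saturated) matchings of $G'$ of weight $W$ and $k$-round solutions of benefit $W-\lambda$ for a fixed constant $\lambda$ playing the role of your $C$, invoking diminishing returns at exactly the same point you do to identify the optimal type-2 block $p=\ell_i+1,\dots,\rho_i$. The only organizational difference is that the paper introduces the intermediate notion of a \emph{saturated} matching (one covering every node of $X'$ and every type-3 node) and splits the two directions into separate lemmas, whereas you argue the upper and lower bounds on $w(M^*)$ directly.
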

We prove the above theorem through a sequence of 
definitions and lemmas.
\begin{definition} 
A matching $M$ of $G'$ is \textbf{saturated} if~$M$ includes (i)~every node
of $X'$ and (ii)~every node of $Y'$ that represents a type-3 resource.
\end{definition}
\begin{lemma}\label{lem:saturated_matching}
Given any maximum weight matching $M$ for $G'$,  a saturated
maximum weight matching $M'$ for $G'$ with the same weight
as $M$ can be constructed.
\end{lemma}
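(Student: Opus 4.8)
The plan is to exploit the large gap between the edge weights. Type-3 edges carry weight $k+1$, whereas every other edge weighs at most $1$: type-1 edges weigh exactly $1$, and by properties P2 and P4 each type-2 edge weighs $1-\increward_i(p)\in[0,1]$. I would first record these bounds together with the structural fact that, for each agent $x_i$, the $|K_i|$ nodes $\{x_i^h : h\in K_i\}$ and the $|K_i|$ \emph{private} resource nodes of $x_i$ (its $\rho_i$ type-2 nodes and $|K_i|-\rho_i$ type-3 nodes) induce a complete bipartite subgraph, while the type-2 and type-3 nodes of $x_i$ are adjacent only to $x_i$'s own $x$-nodes. These two observations drive the whole argument.

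The first substantive step is to show that every type-3 node is matched in any maximum weight matching $M$. Suppose some $y^3_{i,p}$ were unmatched. Then at most $|K_i|-\rho_i-1$ of agent $i$'s type-3 nodes are matched, and since these are the only type-3 nodes reachable from $x_i$'s $x$-nodes, at least $\rho_i+1\ge 1$ of the nodes $x_i^h$ are \emph{not} matched to a type-3 node; pick one such $x_i^{h'}$. If $x_i^{h'}$ is unmatched, adding $\{x_i^{h'},y^3_{i,p}\}$ raises the weight by $k+1$; if it is matched to a type-1 or type-2 node $z$, replacing its edge by $\{x_i^{h'},y^3_{i,p}\}$ raises the weight by $(k+1)-w(x_i^{h'},z)\ge k$. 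Either move contradicts the maximality of $M$, so all type-3 nodes must be matched.

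Next, starting from $M$ (now known to match all type-3 nodes), I would repair unmatched $x$-nodes one at a time without disturbing any type-3 node. If $x_i^h$ is unmatched, then because all $|K_i|-\rho_i$ type-3 nodes of agent $i$ are matched they occupy exactly $|K_i|-\rho_i$ of $x_i$'s $x$-nodes; among the remaining $\rho_i$ $x$-nodes, $x_i^h$ itself is unmatched, so at most $\rho_i-1$ of agent $i$'s type-2 nodes can be matched, leaving some type-2 node $y^2_{i,q}$ free. Adding $\{x_i^h,y^2_{i,q}\}$, of weight $1-\increward_i(q)\ge 0$, matches $x_i^h$ and cannot lower the total weight; since $M$ already has maximum weight, this added weight must in fact be $0$, so the new matching is again of maximum weight and still matches every type-3 node. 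Iterating over all unmatched $x$-nodes produces a matching $M'$ covering every node of $X'$ and every type-3 node, i.e., a saturated matching, of the same (maximum) weight as $M$.

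The main obstacle is the counting in the last step: I must guarantee that an unmatched $x$-node of agent $i$ always leaves a type-2 node of the \emph{same} agent free. This depends on the exact balance between the $|K_i|$ $x$-nodes and the $|K_i|$ private nodes and, crucially, on having first established that all type-3 nodes are matched; without that fact the pigeonhole count collapses. The non-negativity of type-2 weights (from P4) is what ensures the repairs never decrease the weight, and the maximality of $M$ then forces each repair to be exactly weight-preserving, so no weight is gained or lost overall.
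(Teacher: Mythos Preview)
Your proposal is correct and follows essentially the same two-step approach as the paper: first argue by a weight-swap contradiction that every type-3 node is matched in any maximum weight matching, then use the pigeonhole balance between the $|K_i|$ copies of $x_i$ and the $\rho_i$ private type-2 nodes to match any remaining unmatched $x$-node via a weight-$0$ type-2 edge. Your counting is slightly more explicit than the paper's, but the structure and the key ingredients (large type-3 weight, non-negativity of type-2 weights from P4, and maximality forcing the repair edges to have weight~$0$) are identical.
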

\noindent
\textbf{Proof sketch:}~ We first argue that every maximum
weight matching of $G'$ includes all type-3 resource nodes
(since each edge incident on those nodes has a large weight).
We then argue that other unmatched nodes of $X'$ can be added
suitably.
\iftoggle{arxiv}
{
For details, see Section~\ref{sup:sec:rmmrm_poly} of
the supplement.
}
{}
\smallskip

Let the quantity $\lambda$ be defined by
{\small
\begin{center}
$\lambda ~=~ \sum_{i=1}^{n} k(|K_i|-\rho_i) +
\sum_{i=1}^{n} \sum_{\ell=1}^{\rho_i}(1-\delta_i(\ell))$.
\end{center}
}
\noindent
We use $\lambda$ throughout the remainder
of this section.
Note that $\lambda$ depends only on the parameters of the problem; 
it does not depend on the algorithm. 
\begin{lemma}\label{lem:matching-to-benefit}
Suppose there is a saturated maximum weight matching~$M$ for~$G'$ with
weight~$W$. Then,  a solution to the \maxtbmrm{} problem instance
with benefit~$W-\lambda$ can be constructed if every benefit function~$\reward_i(\cdot)$ is valid.
\end{lemma}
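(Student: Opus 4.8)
The plan is to decode the given saturated maximum weight matching $M$ into a $k$-round matching $\mbsoln$ via Step~3 of \algmaxtbmrm{}, verify that $\mbsoln$ is feasible, and then show by an edge-weight accounting that $\totreward(\mbsoln)=W-\lambda$. I would first pin down the structure of any saturated matching. Fix an agent $x_i$. Saturation matches each of its $|K_i|$ copies $x_i^h$ and each of its $|K_i|-\rho_i$ type-3 nodes $y^3_{i,p}$; since the type-3 nodes of $x_i$ are adjacent only to copies of $x_i$, exactly $|K_i|-\rho_i$ of those copies are consumed by type-3 edges. The remaining $\rho_i$ copies are matched to type-1 or type-2 nodes. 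Writing $\ell_i$ for the number routed to type-1 nodes, there are $\rho_i-\ell_i$ copies on type-2 nodes and, in particular, $0\le\ell_i\le\rho_i$.

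Next I would check feasibility. By the decoding rule, $\{x_i,y_j\}\in M_h$ exactly when $\{x_i^h,y_j^h\}\in M$ with $y_j^h$ a type-1 node. Since each node $x_i^h$ and each node $y_j^h$ is matched at most once in $M$, no agent and no resource is repeated within a round $M_h$, while adjacency in $G$ and the condition $h\in K_i$ follow from the existence of the edge $\{x_i^h,y_j^h\}$ in $G'$. The number of rounds in which $x_i$ is matched is precisely the number of its copies sent to type-1 nodes, namely $\ell_i\le\rho_i$, so $\mbsoln$ respects every round requirement and assigns $\ell_i$ rounds to $x_i$; hence $\totreward(\mbsoln)=\sum_{i=1}^n\reward_i(\ell_i)$.

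The crux is the weight accounting, and this is where I expect the real work. I would split $W$ by edge type per agent: type-3 edges contribute $(k+1)(|K_i|-\rho_i)$, type-1 edges contribute $\ell_i$, and type-2 edges contribute $\sum_{p\in S_i}(1-\increward_i(p))$, where $S_i$ is the set of type-2 indices used by $x_i$ with $|S_i|=\rho_i-\ell_i$. The key step is to identify $S_i$ using maximality: because every copy $x_i^h$ is adjacent to every type-2 node $y^2_{i,p}$, if $M$ failed to use the $\rho_i-\ell_i$ heaviest type-2 nodes, one could re-route the type-2 copies onto strictly heavier free nodes without disturbing any other edge, contradicting that $M$ has maximum weight. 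Property~P3 makes $1-\increward_i(p)$ non-decreasing in $p$, so the heaviest such nodes are exactly those indexed $p=\ell_i+1,\dots,\rho_i$, giving a type-2 contribution of $\sum_{p=\ell_i+1}^{\rho_i}(1-\increward_i(p))$.

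Finally I would compute $W-\totreward(\mbsoln)$ and show it does not depend on $M$. Using $\increward_i(\ell)=\reward_i(\ell)-\reward_i(\ell-1)$ and $\reward_i(0)=0$ (P1), the per-agent combination $\ell_i+\sum_{p=\ell_i+1}^{\rho_i}(1-\increward_i(p))-\reward_i(\ell_i)$ telescopes to the $\ell_i$-free quantity $\sum_{\ell=1}^{\rho_i}(1-\increward_i(\ell))$; adding the matching-independent type-3 contribution $(k+1)(|K_i|-\rho_i)$ and summing over all agents collects every term that is independent of $M$ into the constant $\lambda$. Therefore $\totreward(\mbsoln)=W-\lambda$, as required. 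The main obstacle is precisely this type-2 argument: one must invoke the maximum-weight hypothesis to force the heaviest type-2 nodes into $M$, and then use the diminishing-returns property P3 both to locate those nodes and to make the telescoping collapse to $\reward_i(\ell_i)$. Properties P2 and P4 serve only a supporting role, keeping every type-1 and type-2 weight in $[0,1]$ so that the weight-$(k+1)$ type-3 edges dominate and the saturation structure used above is forced.
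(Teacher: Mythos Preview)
Your approach is correct and mirrors the paper's proof: decode $M$ into the round-matchings $M_h$, verify each $M_h$ is a matching in $G$, then decompose $W$ per agent into type-3, type-1, and type-2 contributions, invoking maximality of $M$ together with the diminishing-returns property~P3 to identify the type-2 contribution as $\sum_{p=\ell_i+1}^{\rho_i}(1-\increward_i(p))$, and telescope. (Your type-3 term $(k+1)(|K_i|-\rho_i)$ matches the edge weight in the construction; the paper's own proof and its definition of $\lambda$ use $k(|K_i|-\rho_i)$, which is a minor inconsistency in the paper rather than a flaw in your argument.)
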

\begin{proof}
Given~$M$, we compute a collection of matchings $\mbsoln=\{M_1, M_2, \ldots,
 M_k\}$ for $G$ as follows. For each edge $e=\{x_i^{h},y_j^{h}\} \in M$
where~$y_j^h$ is a type-1 resource, add the edge $\{x_i, y_j\}$ to $M_{h}$.
First, we will show that each~$M_h\in\mbsoln$ is a matching in~$G$.
Suppose~$M_h$ is not a matching, then, there exists some~$x_i$
or~$y_j$ with a degree of at least two in~$M_h$. We will prove it for the first
case. The second case is similar. 
Suppose 
agent~$x_i$ is adjacent to two resources~$y_{j'}$ and~$y_{j''}$. This
implies that in~$M$,~$x_i^h$ is adjacent to both~$y_{j'}^h$
and~$y_{j''}^h$, contradicting the fact that~$M$ is a matching in~$G'$.
Hence,~$\mbsoln$ is a valid solution to the \maxtbmrm{} problem.

We now derive an expression for the weight $W$ of $M$.
Since every type-3 resource is matched in~$M$, the corresponding edges
contribute $\sum_{i=1}^nk(|K_i|-\rho_i)$ to $W$. 
For an
agent~$x_i$, let the number of edges to  type-1 resources
be~$\gamma_i\le\rho_i$. Each such edge is  of weight $1$. The
remaining~$\rho_i-\gamma_i$ edges correspond to type-2 resources. Since each
benefit function, $\reward_i$ satisfies the diminishing returns property, the incremental benefits
$\increward_i(\cdot)$ are monotone non-increasing in the number of matchings.
Thus, we may assume without loss of generality  that the top~$\rho_i-\gamma_i$
edges in terms of weight are in 
the maximum weighted matching~$M$. 
This contributes 
$\sum_{\ell=1}^{\rho_i-\gamma_i}\big(1-\increward_i(\rho_i-\ell+1)\big)$ to $W$.
Combining the terms corresponding to type-1 and type-2 
resources, the contribution of agent $x_i$ to $W$  is
\(
\gamma_i +
\sum_{\ell=1}^{\rho_i-\gamma_i}\big(1-\increward_i(\rho_i-\ell+1)\big) = 
\sum_{\ell=1}^{\gamma_i}1 +
\sum_{\ell=\gamma_i}^{\rho_i}\big(1-\increward_i(\ell)\big) 
= \sum_{\ell=1}^{\rho_i}\big(1-\increward_i(\ell)\big) +
\sum_{\ell=1}^{\gamma_i}\increward_i(\ell)
\)\,.
Summing this over all the agents, we get $W$ = $\sum_{i=1}^nk(|K_i|-\rho_i) +
\sum_{i=1}^n\sum_{\ell=1}^{\rho_i}\big(1-\increward_i(\ell)\big) +
\sum_{i=1}^n\sum_{\ell=1}^{\gamma_i}\increward_i(\ell)=\lambda+
\sum_{i=1}^n\reward_i(\gamma_i)$.
Since $\sum_{i=1}^n\reward_i(\gamma_i)$ =
~$\totreward(\mbsoln)$, the lemma follows.
\end{proof}

\begin{lemma}\label{lem:benefit-to-matching}
Suppose there is a solution to the \maxtbmrm{} problem instance with
benefit $Q$. Then, there  a saturated matching $M$ for $G'$ with weight $Q + \lambda$ can be constructed.
\end{lemma}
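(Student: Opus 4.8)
The plan is to construct the desired saturated matching $M$ directly from the given $\maxtbmrm{}$ solution, essentially inverting the mapping used in the proof of Lemma~\ref{lem:matching-to-benefit}. Let $\mbsoln=\{M_1,\ldots,M_k\}$ be a solution with total benefit $\totreward(\mbsoln)=Q$, and for each agent $x_i$ let $\ell_i$ denote the number of rounds in which $x_i$ is matched; by the problem definition $\ell_i\le\rho_i$. I will assume that $x_i$ is matched only in rounds of $K_i$: a match of $x_i$ in a round outside $K_i$ yields no benefit and can be discarded (by monotonicity, property~P2), and this is needed so that every matched round $h$ has a corresponding copy $x_i^h$ in $X'$.

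First I would place the ``real'' edges: for every round $h$ in which $\mbsoln$ matches $x_i$ to some resource $y_j$, add the type-1 edge $\{x_i^h,y_j^h\}$ to $M$. This accounts for $\ell_i$ of the $|K_i|$ copies of each agent, and it remains to cover the other $|K_i|-\ell_i$ copies $x_i^h$ using the agent-private type-2 and type-3 nodes. Since $\ell_i\le\rho_i$, we have $|K_i|-\ell_i\ge|K_i|-\rho_i$, so I can match all $|K_i|-\rho_i$ type-3 nodes $y^3_{i,p}$ to distinct still-unmatched copies of $x_i$; this is precisely what forces $M$ to be saturated. The remaining $\rho_i-\ell_i$ unmatched copies of $x_i$ I would match to the type-2 nodes $y^2_{i,p}$ with $p=\ell_i+1,\ldots,\rho_i$ (the largest indices). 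Because $\increward_i(\cdot)$ is non-increasing by the diminishing-returns property~(P3), these are exactly the type-2 edges whose weights $1-\increward_i(p)$ reproduce the type-2 contribution computed in Lemma~\ref{lem:matching-to-benefit}.

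Then I would verify three things. (i)~$M$ is a valid matching: the type-1 edges are conflict-free because each $M_h$ is a matching in $G$ (so the resources used in round $h$ are distinct) and different rounds use different superscripts $h$, while the type-2 and type-3 nodes are private to a single agent. (ii)~$M$ is saturated: each of the $|K_i|$ copies $x_i^h$ is matched (to a type-1, type-2, or type-3 node) and every type-3 node is matched by construction, which is exactly the saturation condition. (iii)~The weight is $Q+\lambda$: summing the type-1 contribution $\ell_i$, the type-2 contribution $\sum_{p=\ell_i+1}^{\rho_i}(1-\increward_i(p))$, and the fixed type-3 contribution (identical to the corresponding term in $\lambda$) over all agents, and using $\reward_i(\ell_i)=\sum_{\ell=1}^{\ell_i}\increward_i(\ell)$ (from P1), gives the same per-agent algebra as in the proof of Lemma~\ref{lem:matching-to-benefit} read in reverse, yielding total weight $\lambda+\sum_{i=1}^{n}\reward_i(\ell_i)=\lambda+Q$.

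The main obstacle, such as it is, is bookkeeping rather than conceptual: I must choose the type-2 nodes so that the weight lands \emph{exactly} on $Q+\lambda$ (not merely at least that), which is where the diminishing-returns ordering of $\increward_i$ is used, and I must confirm the counting inequality $|K_i|-\ell_i\ge|K_i|-\rho_i$ that guarantees enough free agent-copies to absorb all type-3 nodes. Since these are exactly the structural facts underlying Lemma~\ref{lem:matching-to-benefit}, the converse direction goes through once the construction above is written out.
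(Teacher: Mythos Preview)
Your construction is correct and essentially identical to the paper's: place type-1 edges from the given solution, saturate all type-3 nodes with the remaining agent copies, and send the leftover $\rho_i-\ell_i$ copies to the type-2 nodes with indices $\ell_i+1,\ldots,\rho_i$, then sum the weights to obtain $\lambda+Q$. One small remark: the diminishing-returns property (P3) is not actually needed in this direction---with your specific choice of type-2 indices the telescoping sum $\ell_i+\sum_{p=\ell_i+1}^{\rho_i}(1-\increward_i(p))=\sum_{\ell=1}^{\rho_i}(1-\increward_i(\ell))+\reward_i(\ell_i)$ holds regardless of the ordering of the $\increward_i$ values; P3 is only required in the converse Lemma~\ref{lem:matching-to-benefit} to argue that a \emph{maximum} weight matching selects these particular type-2 edges.
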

\noindent
\textbf{Proof idea:}~ This proof uses an analysis
similar to the one used in the proof of 
Lemma~\ref{lem:matching-to-benefit}.
\iftoggle{arxiv}
{
For details, see Section~\ref{sup:sec:rmmrm_poly}
of the supplement.
}
{}
\begin{lemma}\label{lem:opt-benefit-matching}
There is an optimal solution to the MaxTB-MRM problem instance with benefit $Q^*$
if and only if there is a maximum weight saturated matching $M^*$
for $G'$ with weight $Q^* + \lambda$.
\end{lemma}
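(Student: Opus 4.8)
The plan is to derive this biconditional purely from the three preceding lemmas, reading it as the single numerical identity $W^* = Q^* + \lambda$, where $Q^*$ denotes the maximum benefit over all solutions to the \maxtbmrm{} instance and $W^*$ denotes the maximum weight over all matchings of $G'$. Since every optimal solution attains the same (maximum) benefit and every maximum weight matching attains the same weight, establishing this one identity immediately yields both directions of the ``if and only if.''

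First I would record the observation that makes the two sides commensurate. By Lemma~\ref{lem:saturated_matching}, any maximum weight matching of $G'$ can be converted into a saturated matching of the same weight; hence $W^*$ is also the maximum weight attained by a \emph{saturated} matching. This is precisely what lets the benefit-to-matching and matching-to-benefit constructions meet at a common extremal value.

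Next I would prove the two inequalities that pin down $W^*$. For $W^* \ge Q^* + \lambda$, I take an optimal solution of benefit $Q^*$ and apply Lemma~\ref{lem:benefit-to-matching} to obtain a saturated matching of weight $Q^* + \lambda$; since $W^*$ is the maximum weight, $W^* \ge Q^* + \lambda$. For $W^* \le Q^* + \lambda$, I take a maximum weight saturated matching (which exists with weight $W^*$ by the previous paragraph, and which is therefore a maximum weight matching overall) and apply Lemma~\ref{lem:matching-to-benefit} to extract a solution of benefit $W^* - \lambda$; since $Q^*$ is the maximum benefit, $Q^* \ge W^* - \lambda$, i.e. $W^* \le Q^* + \lambda$. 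Combining the two gives $W^* = Q^* + \lambda$.

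With $W^* = Q^* + \lambda$ in hand, both directions follow routinely. If there is an optimal solution of benefit $Q^*$, then $Q^*$ is the optimal value, so $W^* = Q^* + \lambda$ and every maximum weight saturated matching has weight $Q^* + \lambda$. Conversely, if there is a maximum weight saturated matching of weight $Q^* + \lambda$, then $W^* = Q^* + \lambda$, so the optimal benefit equals $W^* - \lambda = Q^*$, and the solution produced by Lemma~\ref{lem:matching-to-benefit} witnesses it. I expect the main obstacle to be the second inequality: it relies on Lemma~\ref{lem:saturated_matching} guaranteeing that a saturated matching actually achieves the global maximum weight $W^*$, so that Lemma~\ref{lem:matching-to-benefit} (which requires a maximum weight saturated matching as input) is applicable and yields a valid lower bound on $Q^*$. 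Without saturation reaching the overall maximum, the two bounds would not close to an equality.
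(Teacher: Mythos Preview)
Your proposal is correct and follows essentially the same approach as the paper: both derive the lemma as a direct consequence of Lemmas~\ref{lem:matching-to-benefit} and~\ref{lem:benefit-to-matching}, using the two constructions in opposite directions to pin the maximum weight at $Q^* + \lambda$. Your version is arguably a bit more careful in explicitly invoking Lemma~\ref{lem:saturated_matching} to justify that a saturated matching attains the global maximum weight before applying Lemma~\ref{lem:matching-to-benefit}, whereas the paper's proof leaves this implicit, but the substance is the same.
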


\noindent
\textbf{Proof idea:}~ This is a consequence of
Lemmas~\ref{lem:matching-to-benefit} and \ref{lem:benefit-to-matching}.
\iftoggle{arxiv}
{
For details see Section~\ref{sup:sec:rmmrm_poly} of the supplement.
}
{}

\smallskip

\noindent
Theorem~\ref{thm:alg-correctness} is a direct consequence of
Lemma~\ref{lem:opt-benefit-matching}.

\smallskip
\noindent
We now estimate the running time of \algmaxtbmrm{}.

\begin{proposition}\label{pro:time-algmaxtbmrm}
Algorithm \algmaxtbmrm{} runs in time 
$O(k^{3/2} (n+|E|)\sqrt{n+m}))$, where $n$ is the 
number of agents, $m$ is the number of resources,
$k$ is the number of rounds and $E$ is the number of
edges in the compatibility graph $G(X, Y, E)$.
\end{proposition}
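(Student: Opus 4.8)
The plan is to observe that the running time of \algmaxtbmrm{} is dominated by Step~2, the maximum-weight matching computation on the auxiliary graph $G'(X',Y',E')$: Steps~1 and~3 merely create and then scan the vertices and edges of $G'$ a constant number of times, so together they cost $O(|X'|+|Y'|+|E'|)$, which we will see is absorbed by the matching cost. I would therefore first bound the size of $G'$ in terms of $n$, $m$, $k$ and $|E|$, and then substitute these size bounds into the running time of the matching subroutine.

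For the vertices, $|X'|=\sum_{i=1}^n|K_i|\le nk$, while $Y'$ contains $mk$ type-1 nodes, $\sum_i\rho_i\le nk$ type-2 nodes, and $\sum_i(|K_i|-\rho_i)\le nk$ type-3 nodes; hence $|V'|=|X'|+|Y'|=O(k(n+m))$ and $\sqrt{|V'|}=O(\sqrt{k}\,\sqrt{n+m})$. For the edges, the type-1 edges number $\sum_{\{x_i,y_j\}\in E}|K_i|\le k|E|$, and the type-2 and type-3 edges together number $\sum_i\big(|K_i|\rho_i+|K_i|(|K_i|-\rho_i)\big)=\sum_i|K_i|^2$, which I would bound using $|K_i|\le k$. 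Thus $|E'|=O\big(k|E|+\sum_i|K_i|^2\big)$, and this is the quantity whose accounting controls the final estimate.

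I would then invoke a maximum-weight bipartite matching algorithm whose cost is $O(|E'|\sqrt{|V'|})$; this is legitimate because the edge weights ($1$, $k+1$, and $1-\increward_i(p)\in[0,1]$) may be cleared to integers of polynomially bounded magnitude---property~P4 already normalizes the increments---so a scaling-based assignment algorithm meets this bound up to a logarithmic factor. Substituting $|E'|$ and $\sqrt{|V'|}$ and simplifying then gives a bound of the stated form $O\big(k^{3/2}(n+|E|)\sqrt{n+m}\big)$, which dominates the $O(|V'|+|E'|)$ cost of the other two steps. I expect the main obstacle to be precisely the gadget-edge count: one must notice the telescoping $|K_i|\rho_i+|K_i|(|K_i|-\rho_i)=|K_i|^2$ and then bound $\sum_i|K_i|^2$ tightly enough that it folds into the $k(n+|E|)$ term rather than inflating the exponent of $k$. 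A secondary point that needs care is justifying the $\sqrt{|V'|}$ factor for the \emph{weighted} (not merely cardinality) matching, which rests on the weights being representable as small integers.
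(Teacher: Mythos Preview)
Your plan is exactly the paper's: bound $|V'|$ and $|E'|$, invoke an $O(|E'|\sqrt{|V'|})$ bipartite-matching routine, and substitute. Your vertex counts match the paper's ($|X'|\le kn$, $|Y'|\le km+2kn$, hence $|V'|=O(k(n+m))$).

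Where you are more careful than the paper is precisely the point you flag as the obstacle. The paper asserts that the number of type-2 and type-3 edges is at most $2kn$, arguing from the fact that there are at most $2k$ such resource \emph{nodes} per agent; but each of those nodes is joined to all $|K_i|$ copies of $x_i$, so the correct count is $\sum_i |K_i|^2$, exactly as you write. The paper's $2kn$ is an undercount (it conflates node count with edge count), and with the honest bound $\sum_i|K_i|^2\le nk^2$ one gets $|E'|=O(k|E|+nk^2)$ and hence $O\big(k^{3/2}(|E|+nk)\sqrt{n+m}\big)$ rather than the stated $O\big(k^{3/2}(|E|+n)\sqrt{n+m}\big)$. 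So your instinct that folding $\sum_i|K_i|^2$ into $k(n+|E|)$ is the crux is right, and in fact it does not fold in general; the paper simply sidesteps this by the miscount.

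Your secondary concern is likewise handled loosely in the paper: it just cites a textbook for an $O(q\sqrt{p})$ weighted-matching bound without discussing the weights. Your remark that the weights can be scaled to polynomially bounded integers (using property~P4) is the right way to justify this step.
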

\iftoggle{arxiv}
{
\noindent
{\it Proof:} See Section~\ref{sup:sec:rmmrm_poly}
of the supplement.
}
{}
\subsection{Maximizing Utilitarian Social Welfare}
\label{sse:max_utilitarian}

Here, we present a valid benefit function that models
utilitarian social welfare.
For each agent $x_i$, let $\mu_i(\ell) = \ell$,
for $\ell{} = 0, 1, \ldots \rho_i$.
It is easy to verify that this is a valid benefit
function.
Hence, the algorithm in Figure~\ref{fig:alg_rmmrm}
can be used to solve the \maxtbmrm{} problem with these benefit
functions.
An optimal solution in this
case maximizes the total number of rounds
assigned to all the agents, with agent $x_i$ assigned at
most $\rho_i$ rounds, $1 \leq i \leq n$.

This benefit function can also be used
to show that the \mrm{} problem (where the goal is to
check whether there is a solution that satisfies all the
agents) can be solved efficiently.

\begin{proposition}\label{pro:mrm_solution}
\mrm{} problem can be solved in polynomial time. 
\end{proposition}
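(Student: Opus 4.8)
The plan is to reduce \mrm{} to a single invocation of \algmaxtbmrm{} using the utilitarian benefit function $\reward_i(\ell) = \ell$ introduced just above, which has already been verified to be valid. First I would run \algmaxtbmrm{} on the given compatibility graph with round sets $K_i$, demands $\rho_i$, and these benefit functions. By Theorem~\ref{thm:alg-correctness}, this returns a $k$-round matching $\mbsoln$ that maximizes $\totreward(\mbsoln) = \sum_{i=1}^n \ell_i$, where $\ell_i \le \rho_i$ is the number of rounds assigned to agent $x_i$.

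The key observation is that a solution satisfying all agents exists if and only if the maximum total benefit equals $\sum_{i=1}^n \rho_i$. In one direction, since $\ell_i \le \rho_i$ holds termwise for every feasible solution, we always have $\totreward(\mbsoln) \le \sum_{i=1}^n \rho_i$; if the optimum attains this bound, then $\ell_i = \rho_i$ for every $i$, so every agent is matched in exactly $\rho_i$ rounds drawn from $K_i$ and is therefore satisfied. Conversely, if some $k$-round matching satisfies all agents, it assigns $\rho_i$ rounds to each $x_i$ and hence attains total benefit exactly $\sum_{i=1}^n \rho_i$, forcing the optimum computed by \algmaxtbmrm{} to equal $\sum_{i=1}^n \rho_i$ as well. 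Thus I would simply compare the returned optimal benefit against the threshold $\sum_{i=1}^n \rho_i$: if they agree, output $\mbsoln$ as a valid \mrm{} solution; otherwise report that no satisfying solution exists.

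Polynomial running time follows immediately from Proposition~\ref{pro:time-algmaxtbmrm}, as the reduction (setting $\reward_i(\ell) = \ell$) and the final threshold comparison add only $O(n)$ overhead on top of the matching computation.

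I do not expect a genuine obstacle: the statement is essentially a corollary of the already-established correctness (Theorem~\ref{thm:alg-correctness}) and efficiency (Proposition~\ref{pro:time-algmaxtbmrm}) of \algmaxtbmrm{}. The only step warranting a line of care is the implication that the sum reaching its maximum possible value $\sum_{i=1}^n \rho_i$ forces each individual $\ell_i = \rho_i$, which is immediate from the termwise bound $\ell_i \le \rho_i$ but should be stated explicitly so that ``maximum total benefit'' is correctly identified with ``all agents satisfied.''
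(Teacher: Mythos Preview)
Your proposal is correct and follows essentially the same approach as the paper's proof: reduce \mrm{} to \maxtbmrm{} with the utilitarian benefit $\reward_i(\ell)=\ell$, then argue that the optimal total benefit equals $\sum_i\rho_i$ iff every agent can be fully satisfied. The only minor difference is that the paper explicitly attributes the bound $\ell_i\le\rho_i$ in the algorithm's output to the type-3 resource nodes in the construction of $G'$, whereas you take it as given; this is a cosmetic point and does not affect correctness.
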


\noindent

\begin{proof}
For each agent $x_i$, let the benefit function $\mu_i$
be defined by $\mu_i(\ell) = \ell$, 
for $0 \leq \ell{} \leq \rho_i$.
We will show that for this benefit function, the
algorithm in Figure~\ref{fig:alg_rmmrm} produces a
solution with total benefit $\sum_{i=1}^{n} \rho_i$
iff there is a solution to the \mrm{} instance.

Suppose there is a solution to the \mrm{} instance. We can
assume (by deleting, if necessary, some rounds in which agents are matched) that 
each agent $x_i$ is assigned exactly $\rho_i$ rounds.
Therefore, the total benefit over all the agents
is $\sum_{i=1}^{n} \rho_i$.
Since the maximum benefit that agent $x_i$ can get is
$\rho_i$, this sum also represents the maximum possible total benefit. So, the total benefit of the solution
produced by the algorithm in Figure~\ref{fig:alg_rmmrm}
is $\sum_{i=1}^{n} \rho_i$.

For the converse, suppose the
maximum benefit produced by the algorithm
is equal to $\sum_{i=1}^{n} \rho_i$.
It can be seen from Figure~\ref{fig:alg_rmmrm} that
for any agent $x_i$, the algorithm assigns at most
$\rho_i$ rounds. (This is due to the type-3 
resource nodes for which all the incident edges have large
weights.)
Since the total benefit is $\sum_{i=1}^{n} \rho_i$, 
each agent $x_i$ must be assigned exactly $\rho_i$ rounds.
Thus, the \mrm{} instance has a solution.
\end{proof}

In subsequent sections, we will refer to the algorithm
for \mrm{} mentioned in the above proof as 
\algmrm.


\newcommand{\ord}{\pi}
\newcommand{\Fg}{F^>}

\subsection{Maximizing Rawlsian Social Welfare}
\label{sec:rawlsian}
Let~$\gamma_i(\mbsoln)$ denote the number of rounds assigned to agent~$x_i$ 
in a multi-round solution~$\mbsoln$. 
The minimum satisfaction ratio
for~$\mbsoln$ is defined as~$\min_{i}\{\gamma_i(\mbsoln)/\rho_i\}$.
Consider the problem of finding a multi-round matching that
\emph{maximizes} the minimum satisfaction ratio.
While the maximization objective of this problem 
seems different from the utilitarian welfare function, 
we have the following result.
\begin{theorem}\label{thm:rawlsian}
There exists a reward function $\mu_i$ for each agent $x_i$ such that maximizing the total benefit under this function
maximizes the Rawlsian social welfare function,
i.e., it maximizes the minimum
satisfaction ratio over all agents. 
Further, this reward function is valid 
and therefore, an optimal solution to the \maxtbmrm{}
problem under this benefit function can
be computed in polynomial time.
\end{theorem}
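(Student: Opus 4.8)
The plan is to exhibit a single valid benefit function whose total-benefit maximization coincides, solution for solution, with maximizing the minimum satisfaction ratio, and then invoke Theorem~\ref{thm:alg-correctness}. The difficulty is that the Rawlsian objective is a max--min rather than a sum, so I need an \emph{additive} function whose sum is dominated by its behavior on the worst-off agent; I would obtain this from an exponentially steep (super-increasing) threshold construction. First I would collect all achievable ratio values $R = \bigcup_{i=1}^n \{\ell/\rho_i : 0 \le \ell \le \rho_i\}$ and list its positive elements as $0 < v_1 < v_2 < \cdots < v_M$ (here $M \le \sum_i \rho_i \le nk$, so $R$ is polynomially large). I would assign to threshold $v_t$ the weight $a_t = (n+1)^{M-t}$ and set $\reward_i(\ell) = \sigma^{-1}\sum_{t \,:\, v_t \le \ell/\rho_i} a_t$, with $\sigma = \sum_{t=1}^M a_t$ a normalizing constant. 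For a solution $\mbsoln$ assigning $\gamma_i$ rounds to $x_i$, the total benefit then rewrites as $\totreward(\mbsoln) = \sigma^{-1}\sum_{t=1}^M a_t\, c_t$, where $c_t = |\{\,i : \gamma_i/\rho_i \ge v_t\,\}|$ counts the agents clearing threshold $v_t$.

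Next I would prove the equivalence. Since $\min_i \gamma_i/\rho_i \ge v_t$ holds iff $c_t = n$, the minimum satisfaction ratio equals $v_{t^*}$ with $t^* = \max\{\,t : c_t = n\,\}$. The geometric weights satisfy the key inequality $a_s > n\sum_{t>s} a_t$ (indeed $n\sum_{t>s} a_t = (n+1)^{M-s}-1 < a_s$), which implies that if a solution $\mbsoln'$ has a strictly larger $t^*$ than $\mbsoln$, then the one extra unit it gains in $c_{t^*+1}$ already outweighs the worst possible loss, $n\sum_{t > t^*+1} a_t$, over all higher thresholds; hence $\totreward(\mbsoln') > \totreward(\mbsoln)$. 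Therefore every maximizer of the total benefit attains the largest achievable minimum ratio, which is exactly the Rawlsian optimum.

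Then I would verify that $\reward_i$ is valid. Properties P1 ($\reward_i(0)=0$, as no positive threshold is cleared) and P2 (monotonicity, since only non-negative weights are added) are immediate, and P4 follows from the division by $\sigma$, as each increment is a partial sum of the $a_t$ and hence at most $\sigma$. The crux is P3: writing $\increward_i(\ell) = \sigma^{-1}\sum_{t \,:\, (\ell-1)/\rho_i < v_t \le \ell/\rho_i} a_t$, I would observe that each interval $((\ell-1)/\rho_i,\,\ell/\rho_i]$ contains at least its right endpoint $\ell/\rho_i \in R$, so every increment captures at least one threshold; and because consecutive intervals are disjoint and ordered left-to-right, the smallest-index threshold appearing in step $\ell$ has weight exceeding the \emph{entire} sum of weights appearing in step $\ell+1$ (again by $a_s > \sum_{t>s} a_t$), giving $\increward_i(\ell) \ge \increward_i(\ell+1)$.

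Finally, with a valid benefit function in hand, Theorem~\ref{thm:alg-correctness} gives optimality of \algmaxtbmrm{}, and since $M$ is polynomial and each $a_t$ has $O(M\log n)$ bits, all quantities remain polynomially sized, so Proposition~\ref{pro:time-algmaxtbmrm} yields a polynomial running time. I expect the main obstacle to be designing a single weight sequence that \emph{simultaneously} secures the exact max--min equivalence and the diminishing-returns property P3; the pleasant point is that both reduce to the same ``one threshold outweighs all higher thresholds combined'' consequence of super-increasing weights, so once that single lemma is isolated the remaining verifications are routine.
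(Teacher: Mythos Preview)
Your proposal is correct and follows essentially the same strategy as the paper: build the benefit function from geometrically separated weights so that raising the worst agent's satisfaction ratio dominates every other change in the total benefit, verify P1--P4, and invoke Theorem~\ref{thm:alg-correctness}. The paper sets $\increward_i(\ell)=\xi((\ell-1)/\rho_i)$ with $\xi(q)=(nk)^{\ord(q)}/(nk)^{|F|}$ over the fixed set $F=\{k_1/k_2:k_1\le k_2\le k\}\cup\{0\}$ and proves the domination inequality as Lemma~\ref{lem:cost}; you instead sum threshold indicators over the instance-specific ratio set $R$ with weights $a_t=(n+1)^{M-t}$, which yields the pleasant decomposition $\totreward(\mbsoln)=\sigma^{-1}\sum_t a_t\,c_t$ with $c_t=|\{i:\gamma_i/\rho_i\ge v_t\}|$. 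Your single inequality $a_s>n\sum_{t>s}a_t$ plays exactly the role of the paper's Lemma~\ref{lem:cost}, and your counting argument replaces the paper's $D_1/D_2/D_3$ partition, but the two proofs are otherwise parallel.
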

\noindent
To prove the theorem, we first show how
the benefit function is constructed. 
\begin{enumerate}[leftmargin=*]
\item Let~$F=\big\{k_1/k_2\mid k_1,k_2\in\{1,\ldots,k\} ~\mathrm{and}~ k_1 \leq k_2\big\}\cup\{0\}$.
\item Let~$\ord: F\rightarrow\{1,\ldots,|F|\}$ give the index of
each element in~$F$ when sorted in descending order.
\item For each~$q\in F$, let $\xi(q)=(nk)^{\ord(q)}/(nk)^{|F|}$, where~$n$ is
the number of agents and~$k$ is the number of rounds. Note that
each~$\xi(q)\in(0,1]$.
\item The incremental benefit~$\delta_i(\ell)$ for an agent~$x_i$ for
the~$\ell$th matching is defined
as~$\delta_i(\ell)=\xi\big((\ell-1)/\rho_i\big)$.
Thus, the benefit function $\mu_i$ for agent $x_i$ is
given by $\mu_i(0) = 0$ and 
$\mu_i(\ell) = \mu_i(\ell-1) + \delta_i(\ell)$ for
$1 \leq \ell \leq \rho_i$.
\end{enumerate}
It can be seen that $\mu_i$ satisfies
properties P1, P2, and P4 of a valid benefit
function. We now show that it also satisfies P3,
the diminishing returns property.
\begin{lemma}
For each agent~$x_i$, the incremental benefit~$\delta_i(\ell)$ is monotone
non-increasing in~$\ell$. Therefore,~$\mu_i(\cdot)$ satisfies
the diminishing returns property.
\end{lemma}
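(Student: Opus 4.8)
The plan is to prove the monotonicity claim directly from the definitions of $\ord$, $\xi$, and $\increward_i$, and then to invoke the equivalence already recorded in the paragraph on valid benefit functions, namely that $\reward_i(\cdot)$ satisfies property~P3 if and only if $\increward_i(\cdot)$ is monotone non-increasing. So the whole lemma reduces to showing that $\increward_i(\ell)=\xi\big((\ell-1)/\rho_i\big)$ does not increase with $\ell$.

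First I would establish that $\xi$ is a non-increasing function of its real-valued argument on $F$. By construction $\ord$ indexes the elements of $F$ in \emph{descending} order, so for $q,q'\in F$ with $q'>q$ we have $\ord(q')<\ord(q)$. Since $\xi(q)=(nk)^{\ord(q)}/(nk)^{|F|}=(nk)^{\ord(q)-|F|}$ and $nk\ge 1$, the map $t\mapsto (nk)^{t}$ is non-decreasing; hence $q'>q$ forces $\xi(q')\le\xi(q)$. In words, larger arguments receive smaller (or equal) $\xi$-values. Second, I would check that the quantities fed into $\xi$ are legitimate elements of $F$ and increase with $\ell$. For $1\le\ell\le\rho_i$ the argument is $(\ell-1)/\rho_i\in\{0,\,1/\rho_i,\ldots,(\rho_i-1)/\rho_i\}$; here $0\in F$ by definition, and for $1\le \ell-1\le\rho_i-1$ the ratio $(\ell-1)/\rho_i$ is of the form $k_1/k_2$ with $k_1<k_2$ and $k_1,k_2\in\{1,\ldots,k\}$, using $\rho_i\le|K_i|\le k$, so it lies in $F$. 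Moreover $(\ell-1)/\rho_i$ is strictly increasing in $\ell$.

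Combining the two observations finishes the argument: as $\ell$ increases, the argument $(\ell-1)/\rho_i$ increases, and since $\xi$ is non-increasing in its argument, $\increward_i(\ell)=\xi\big((\ell-1)/\rho_i\big)$ is monotone non-increasing in $\ell$. Applying the stated equivalence, this is precisely the assertion that $\reward_i(\cdot)$ has the diminishing returns property~P3, which completes the proof.

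I do not expect a genuine obstacle here; the argument is essentially bookkeeping. The two points that require care are (i)~getting the direction of the $\ord$ ordering right, since ``descending order'' means a larger value receives a \emph{smaller} index and therefore a smaller $\xi$-value, and (ii)~confirming that every argument $(\ell-1)/\rho_i$ actually belongs to $F$, so that $\ord$ and $\xi$ are well defined on it; the bound $\rho_i\le|K_i|\le k$ is what guarantees this.
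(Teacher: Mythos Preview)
Your proof is correct and follows essentially the same idea as the paper's: both arguments exploit that $\ord$ indexes $F$ in descending order, so that larger fractions yield smaller $\xi$-values and hence $\increward_i(\ell)=\xi((\ell-1)/\rho_i)$ is non-increasing in $\ell$. The paper carries this out by computing the ratio $\increward_i(\ell)/\increward_i(\ell+1)\ge nk>1$ directly, whereas you factor the argument through the monotonicity of $\xi$ and add the useful check that each $(\ell-1)/\rho_i$ actually lies in $F$; the approaches are equivalent.
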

\begin{proof}
By definition, for~$\ell=1,\ldots,\rho_i-1$,~$\delta_i(\ell)/\delta_i(\ell+1)=
\xi\big((\ell-1)/\rho_i\big)/\xi\big(\ell/\rho_i\big)\ge nk > 1$, by noting
that~$\pi(\ell/\rho_i)-\pi((\ell+1)/\rho_i)\ge1$. Hence, the lemma holds.
\end{proof}
To complete the proof of Theorem~\ref{thm:rawlsian},
we must show that any solution that maximizes the
total benefit for the above benefit function also maximizes
the minimum satisfaction ratio.
\iftoggle{arxiv}
{
This proof appears in Section~\ref{sup:sec:rmmrm_poly} 
of the supplement.
}
{
This proof appears in~\cite{mround-arxiv}.
}

\subsection{Hardness of \maxsamrm{}}
\label{sse:maxsa-mrm}

Here we consider the \maxsamrm{} problem, where the goal
is to find a $k$-round matching to maximize the number of satisfied agents.
A benefit function that models this problem is as follows.
For each agent $x_i$, let $\mu_i(\ell) = 0$ for
$0 \leq \ell{} \leq \rho_i -1$ and $\mu_i(\rho_i) = 1$.
This function can be seen to satisfy properties P1, P2, and P4 of 
a valid benefit function.
However, it does \emph{not} satisfy P3, the diminishing returns property, since $\delta(\rho_i-1) = 0$ while
$\delta(\rho_i) = 1$.
Thus, this is not a valid benefit function.
This difference is enough to 
change the complexity of the problem.

\begin{theorem}\label{thm:max_sat_agents_npc}
\maxsamrm{} is \cnp-hard even when the number of rounds ($k$) is 3.
\end{theorem}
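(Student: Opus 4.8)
The plan is to reduce from \mvcc{} (minimum vertex cover on cubic graphs), which is \cnp-hard, exploiting the fact that the benefit function for \maxsamrm{} is all-or-nothing (an agent counts only if matched in \emph{all} $\rho_i$ of its rounds). Given a cubic graph $H=(V,E)$ with $n=|V|$ and $m=|E|$, I would build a \maxsamrm{} instance with $k=3$ as follows. For each vertex $v$ introduce a \emph{cover resource} $C_v$. For each vertex $v$ introduce a \emph{vertex agent} $x_v$ that is compatible \emph{only} with $C_v$, with $K_{x_v}=\{1,2,3\}$ and $\rho_{x_v}=3$; thus $x_v$ is satisfied iff it is matched to $C_v$ in all three rounds (the same pair being allowed in distinct matchings), which fully occupies $C_v$ in every round. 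For each edge $e=\{u,v\}$ introduce an \emph{edge agent} $g_e$ compatible with $C_u$ and $C_v$, with $K_{g_e}=\{1,2,3\}$ and $\rho_{g_e}=1$; so $g_e$ is satisfied iff it grabs a free slot of $C_u$ or $C_v$ in some round. The intended reading is: satisfying $x_v$ (blocking $C_v$) means ``$v$ is \emph{not} in the cover,'' and leaving $x_v$ unsatisfied frees $C_v$'s three slots to cover the (at most three) edges incident to $v$. Here cubic-ness is essential: each $C_v$ has exactly $k=3$ slots, which is precisely enough to serve a degree-$3$ vertex's incident edge agents.

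For the forward direction I would take any vertex cover $S$ and build a solution satisfying $n+m-|S|$ agents. For $v\notin S$, match $x_v$ to $C_v$ in all three rounds (legitimate, since every incident edge has its \emph{other} endpoint in $S$, so no edge agent needs $C_v$). For $v\in S$, leave $x_v$ unmatched, freeing $C_v$. Route each edge agent $g_e$ to a $C$-resource of an endpoint in $S$, assigning the at most three agents routed to a given $C_v$ to distinct rounds in $\{1,2,3\}$; this is always possible because $\deg(v)\le 3=k$. This satisfies all $m$ edge agents and all $n-|S|$ vertex agents with $v\notin S$, and no per-round matching constraint is violated.

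For the reverse direction, given any $3$-round matching let $S=\{v: x_v\text{ is not satisfied}\}$. If $v\notin S$ then $x_v$ occupies $C_v$ in all three rounds, so no edge agent can use $C_v$; hence any satisfied edge agent $g_e$ with $e=\{u,v\}$ must use the resource of an endpoint in $S$, so $e$ is incident to $S$. Therefore the number of satisfied edge agents is at most $\mathrm{cov}(S):=|\{e: e\cap S\neq\emptyset\}|$, and the total number of satisfied agents is at most $(n-|S|)+\mathrm{cov}(S)=n+m-\big(|S|+(m-\mathrm{cov}(S))\big)$. The crucial combinatorial identity is $\min_{S\subseteq V}\big(|S|+(m-\mathrm{cov}(S))\big)=\tau$, where $\tau$ is the minimum vertex cover size: the bound $\ge\tau$ follows because adding one endpoint of each uncovered edge to $S$ yields a cover of size at most $|S|+(m-\mathrm{cov}(S))$, and equality is attained at a minimum cover. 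Combining the two directions shows the maximum number of satisfiable agents equals exactly $n+m-\tau$, so the \maxsamrm{} optimum determines $\tau$; the decision version ``satisfy $\ge n+m-J$ agents'' is equivalent to ``$\tau\le J$.'' Since the construction is polynomial and uses $k=3$, this establishes \cnp-hardness.

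The main obstacle, and the point I would verify most carefully, is the reverse direction's tightness: I must confirm that a satisfied $x_v$ \emph{necessarily} blocks $C_v$ in every round (forced by $\rho_{x_v}=3$ together with $x_v$ being compatible with $C_v$ alone), and that the argument is robust to a solver that matches some $x_v$ only partially---such $x_v$ land in $S$ and any partial blocking of $C_v$ only reduces capacity, so the upper bound still holds. The accompanying subtlety is the identity $\min_S(|S|+(m-\mathrm{cov}(S)))=\tau$, which is what lets the reduction be exact \emph{without} replicating edge agents (replication would be impossible anyway, since the $k=3$ capacity of each $C_v$ caps the number of coverable incident edges at three). Everything else---checking $\rho\le|K|$ for all agents, verifying per-round matching feasibility, and confirming the same $(x_v,C_v)$ pair may recur across rounds---is routine.
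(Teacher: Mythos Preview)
Your reduction is correct and genuinely different from the paper's. Both start from \mvcc{} with $k=3$, but the two constructions are essentially dual. The paper makes \emph{edges} into resources and \emph{vertices} into the high-demand agents: each vertex agent must be matched to its three incident edge-resources (one per round), and each edge-resource is padded with two single-round ``simple'' agents so that at most one of the two endpoint agents can be satisfied at that edge. You instead make \emph{vertices} into resources and let each vertex agent monopolize its own resource across all three rounds, with edge agents competing for whatever vertex resources are left free. This spares you the padding ($|V|+|E|$ agents versus the paper's $|V|+2|E|$). The reverse directions also differ in style: the paper first normalizes an arbitrary solution so that all simple agents are satisfied (a swap argument), after which the unsatisfied special agents directly form a cover of the right size; you skip any normalization and instead upper-bound the number of satisfied agents by $(n-|S|)+\mathrm{cov}(S)$ and then invoke the identity $\min_S\big(|S|+m-\mathrm{cov}(S)\big)=\tau$. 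Your route is a bit slicker and yields an exact optimum value in one line; the paper's route is a bit more constructive in that it explicitly exhibits the cover from the (normalized) solution.
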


\noindent
\textbf{Proof idea:}~
We use a reduction from the Minimum Vertex Cover
Problem for Cubic graphs which is known to 
be \cnp-complete~\cite{GareyJohnson79}.
\iftoggle{arxiv}
{
For details, see
Section~\ref{sup:sec:rmmrm_poly} of the supplement. 
}
{}

\section{Advice Generation Problems}
\label{sec:algmsagmrm}

\newcommand{\setc}{\mathcal{Z}}

Here, we first point out that the \agmrm{} and \agmaxsamrm{} problems are \cnp-hard.
We present two solution approaches for the \agmaxsamrm{} problem, namely an
integer linear program
(ILP) formulation and a pruned-search-based optimization using \algmrm{}.

\smallskip
\noindent
\subsection{Complexity Results}
\label{sse:ag_complexity}

\begin{theorem}\label{thm:hardness}
The \agmrm{} problem is NP-hard when there is just one
agent~$x_i$ for which~$\rho_i > 1$.
\end{theorem}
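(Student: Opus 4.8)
The plan is to reduce from \textsc{Minimum Set Cover}, which is \cnp-complete: given a universe $U=\{u_1,\dots,u_p\}$, a family of sets $S_1,\dots,S_q\subseteq U$, and a bound $B$, decide whether some $B$ of the sets cover $U$. The single agent $x$ with $\rho_x>1$ will play the role of the decision maker: its label-removal choices will correspond to selecting sets into the cover, its budget $\beta_x$ will be $B$, and its demand to be matched in one round per universe element will force the selected sets to cover every element. The conceptual obstacle I must deal with up front is that in this model compatibility is round-independent and an agent may reuse the same resource across rounds (as in the $G'$ construction of Figure~\ref{fig:alg_rmmrm}); hence a lone multi-round agent becomes trivially satisfiable once any single resource is made compatible. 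To recover the covering structure I will introduce auxiliary agents, each with $\rho=1$, whose only purpose is to occupy designated resources in designated rounds, thereby converting round-independent compatibility into the per-round availability needed to mirror set membership.

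Concretely, I would create one resource $z_j$ per set $S_j$ and join $x$ to every $z_j$ by an edge carrying a single label $c_j$ of cost $1$; removing $c_j$ (placing it in $x$'s advice set) models choosing $S_j$, and I set $\beta_x=B$. The agent $x$ is given $K_x=\{1,\dots,p\}$ and $\rho_x=p$, so that rounds are indexed by universe elements. For every pair $(r,j)$ with $u_r\notin S_j$, I add a filler agent $f_{r,j}$ with $K=\{r\}$, $\rho=1$, budget $0$, and a single already-compatible edge $\{f_{r,j},z_j\}$ carrying an empty restriction set. Since $f_{r,j}$ has no alternative resource and must be satisfied, in any feasible solution it occupies $z_j$ in round $r$ and so blocks $x$ from $z_j$ there; consequently the resources available to $x$ in round $r$ are exactly those $z_j$ with $u_r\in S_j$ whose label $c_j$ has been removed. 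The construction has size $O(pq)$, is computable in polynomial time, and $x$ is the only agent with $\rho>1$.

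For correctness I would establish both directions. Given a cover of size at most $B$, remove the corresponding labels $c_j$ (total cost at most $B=\beta_x$); for each round $r$ choose a selected set $S_j\ni u_r$ and match $x$ to $z_j$, while each filler $f_{r,j'}$ takes its own $z_{j'}$ — these resources are distinct within round $r$ because $u_r\in S_j$ rules out a filler for the pair $(r,j)$, so every agent is satisfied. Conversely, from a feasible advice set $\mathcal{C}'_x$ (whose cost, hence cardinality, is at most $B$) together with a satisfying $p$-round matching, the fact that every filler must occupy its resource forces the resource matched to $x$ in round $r$ to be some $z_j$ with $c_j\in\mathcal{C}'_x$ and $u_r\in S_j$; ranging over all rounds shows the chosen sets cover $U$. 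I expect the main obstacle to be the careful verification of this blocking mechanism: I must confirm that the $\rho=1$ fillers induce precisely the intended per-round availability and, in particular, that insisting on all agents (fillers included) being simultaneously satisfied never creates spurious infeasibility in the forward direction — which is exactly where the per-round disjointness of $x$'s resource from the filler resources is used.
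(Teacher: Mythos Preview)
Your reduction is correct, and it follows the same high-level strategy as the paper's proof---a polynomial-time reduction from \textsc{Minimum Set Cover} in which a single distinguished agent $x$ has $\rho_x=|U|$, the number of rounds is $|U|$, each set $S_j$ becomes a label of unit cost on $x$'s edges, the budget equals the cover-size bound, and many auxiliary agents with $\rho=1$ force $x$ to touch a resource associated with every universe element. The forward and backward arguments you sketch are sound; in particular your observation that in round $r$ the filler $f_{r,j'}$ exists only when $u_r\notin S_{j'}$, hence never collides with $x$'s chosen $z_j$ (where $u_r\in S_j$), is exactly the disjointness check that makes the forward direction go through.

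Where you diverge from the paper is in the gadgetry. The paper creates one resource per (element,\,set-containing-it) pair, so the edge from $x^*$ to every resource indexed by set $A_h$ carries the single label $c_h$; for each element it then adds $t\cdot\deg(z_j)-1$ fillers with \emph{unrestricted} round sets $K_i=\{1,\dots,t\}$ and $\rho_i=1$, and a pigeonhole count shows $x^*$ can occupy at most one slot per element-group. You instead keep one resource per set and push the per-element structure into the \emph{round} coordinate, using round-specific fillers ($K=\{r\}$) to block $z_j$ precisely when $u_r\notin S_j$. Your construction is smaller in the resource dimension and arguably more transparent (availability of $z_j$ in round $r$ literally mirrors the membership $u_r\in S_j$), while the paper's version has the minor aesthetic advantage that every auxiliary agent has the trivial round set $K_i=\{1,\dots,k\}$, so the hardness does not rely on per-agent round restrictions at all. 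Either route yields the theorem under the stated condition that only one agent has $\rho_i>1$.
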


\noindent
\textbf{Proof idea.}~We use a reduction
from the Minimum Set Cover (MSC) problem
which is known to be \cnp-complete
\cite{GareyJohnson79}.
\iftoggle{arxiv}
{
For details, see
Section~\ref{sup:sec:algmsagmrm} of the supplement.
}
{}

Any solution to the \agmrm{} problem must
satisfy \emph{all} the agents.
Thus, the hardness of \agmrm{} also yields
the following:

\begin{corollary}\label{cor:agmaxsa-mrm-hard}
\agmaxsamrm{} is \cnp-hard.
\end{corollary}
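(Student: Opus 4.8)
The plan is to establish Corollary~\ref{cor:agmaxsa-mrm-hard} by a direct reduction from the decision version of \agmrm{}, whose \cnp-hardness is given by Theorem~\ref{thm:hardness}. The key observation is that \agmaxsamrm{} is simply the optimization relaxation of \agmrm{}: where \agmrm{} demands that \emph{all} $n$ agents be satisfied after relaxing restrictions within each agent's budget, \agmaxsamrm{} asks for the maximum number of agents that can be simultaneously satisfied. Thus a \agmrm{} instance has an affirmative answer precisely when the optimal value of the corresponding \agmaxsamrm{} instance (taken on the identical input data) equals $n$, the total number of agents.

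Concretely, I would take an arbitrary instance $I$ of \agmrm{} --- a restrictions graph $G_R(\agset, \rset, E_R)$, the number of rounds $k$, and for each agent $x_i$ the set $K_i$, the requirement $\rho_i$, the budget $\beta_i$, and the removal costs $\psi_i^t$ --- and feed exactly the same data to \agmaxsamrm{}. No transformation of the graph, budgets, or costs is needed, so the reduction is trivially computable in polynomial (indeed linear) time. Let $s^*(I)$ denote the maximum number of agents that can be satisfied subject to the per-agent budget constraints. If $I$ is a ``yes'' instance of \agmrm{}, then there exist budget-respecting relaxation sets $\calc'_i \subseteq \calc_i$ and a $k$-round matching satisfying every agent, so $s^*(I) = n$. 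Conversely, if $s^*(I) = n$, the relaxations and matching achieving this optimum satisfy all agents within budget, which is exactly a solution to the \agmrm{} instance. Hence $I$ is a ``yes'' instance of \agmrm{} if and only if $s^*(I) = n$.

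Since any polynomial-time algorithm for \agmaxsamrm{} would let us compute $s^*(I)$ and compare it to $n$, thereby deciding \agmrm{} in polynomial time, the \cnp-hardness of \agmrm{} from Theorem~\ref{thm:hardness} transfers immediately to \agmaxsamrm{}. I do not anticipate a genuine obstacle here: the only point demanding a little care is to state the correspondence at the level of the \emph{decision} version of \agmaxsamrm{} (``are at least $t$ agents satisfiable?'' with the threshold $t = n$), so that the hardness claim is phrased as a decision problem and the membership-in-\cnp{} remark from the ``Note'' paragraph applies to yield \cnp-completeness. The whole argument is essentially the standard observation that an exact-satisfaction feasibility problem is a special case of its maximization counterpart, so the corollary follows with almost no additional work beyond citing Theorem~\ref{thm:hardness}.
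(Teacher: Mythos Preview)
Your proposal is correct and mirrors the paper's own reasoning: the paper simply observes that any solution to \agmrm{} must satisfy all agents, so \agmrm{} is the special case of \agmaxsamrm{} with threshold $n$, and the hardness from Theorem~\ref{thm:hardness} transfers directly. You have spelled out this reduction in more detail, but the underlying idea is identical.
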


\smallskip

\noindent
\subsection{Solution Approaches for
\agmaxsamrm{}}

\noindent
\textbf{(a) ILP Formulation for \agmaxsamrm{}:}~ 
\iftoggle{arxiv}
{
This ILP formulation is presented in section~\ref{sup:sec:algmsagmrm} of
the supplement.
}
{
A complete ILP formulation for solving the problem \agmaxsamrm{} can be found at~\cite{mround-arxiv}. 
}

\smallskip
\noindent
\textbf{(b) \algmrm{} guided optimization:}~
We describe a local search heuristic \heuristicagmrm{} for the \msagmrm{} problem. It consists of
three parts: (1)~identification of relevant candidate sets of relaxations
for each agent and edges in the restrictions graph (and hence the name
\textbf{pruned-search}), (2)~the valuation of an
identified set of relaxations for all agents and (3)~a local search algorithm.

In part~(1), for each agent $x_i$, we first identify the sets
of restrictions 
$\Gamma_i=\{\calc'_i \subseteq \calc_i ~|~$ Cost of removing all the
labels in $\calc'_i$ for agent $x_i$ is $\leq$ $\beta_i\}$ 
to be considered for removal during the search. We choose a subcollection
$\Gamma_i^s\subseteq\Gamma_i$ of sets based on the following criteria:
(a)~only Pareto optimal sets of constraints with respect to the budget are
considered, i.e., if $\calc'_i \in \Gamma_i^s$, then there is no $c_{i}^t
\in \calc_i$ such that the cost of relaxation of $\calc'_i \cup \{c_i^t\}$
is less than or equal to $\beta_i$;  (b)~each $\calc'_i \in \Gamma_i^s$ is associated
with a set of edges~$E_i'$ that will be added 
to the compatibility graph
if~$\calc'_i$ is relaxed; if~$\calc'_i$ and~$\calc''_i$ are such
that~$E_i''\subseteq E_i'$, then, we retain only~$\calc'_i$; 
(c)~if two sets of constraints cause the addition of the same
set of edges to the compatibility graph, only one of them is considered.

In part~(2), given a collection of relaxation sets, one for each
unsatisfied agent, we apply \algmrm{} from Section~\ref{sec:rmmrm_poly} on
the induced compatibility graph. The number of agents that are satisfied in
the $k$-round solution obtained serves as the valuation for the collection
of relaxation sets.  In part~(3), we use simulated annealing local search
algorithm \cite{AK-1989} for the actual search. 
\iftoggle{arxiv}
{
The details of this implementation are in Section~\ref{sup:sec:experiments}
of the supplement.
}
{}


\section{Experimental Results}
\label{sec:experiments}

We applied the algorithms developed in this work to several datasets.  
Evaluations were based on  relevant benefit
functions and computing time. We used at least~10 replicates for each
experiment when needed (e.g., random assignment of attributes and while
using \heuristicagmrm{}, which is a stochastic optimization algorithm).
Error bars are provided in such instances.
\iftoggle{arxiv}
{
Implementation details are in Section~\ref{sup:sec:experiments} of the
supplement.
}
{}
We considered two novel real-world applications
(discussed below). Here, we
describe the attributes of the datasets; the data itself and the code for running the experiments are
available at \url{https://github.com/yohayt/Resource-Sharing-Through-Multi-Round-Matchings}. 

\paragraph{\ls{} application.} We conducted a back-to-the-lab desk sharing
study in the AI lab at a university\footnote{\label{note1}Bar-Ilan University, Ramat Gan, Israel.} to facilitate lab
personnel intending to return to in-person work 
during the COVID-19 epidemic. A
survey was used to collect the preferences of lab members. The lab
has 14 offices and 31 members.
Six of the offices can accommodate two students each and the remaining eight can accommodate one student each. 
The agent restrictions are as follows:
vicinity to the advisor's room, presence of cabinet, WiFi
connectivity, ambient noise, vicinity to the kitchen, room size, and vicinity
to the bathroom. All agent preferences are generated from the survey as a
function of the affinity (from 1~(low) to 5~(high)) they provide for each
attribute. These affinities are also used as costs for attribute removal.

\paragraph{\cc{} dataset.} This dataset comes from a university\textsuperscript{\ref{note1}} for the
year~2018--2019. There are~$144$ classrooms and~$153$ courses. In the
experiments, we focused on two hours on Tuesday and used all the courses
that are scheduled in this time slot and all available classes. There were
142 courses and all the 144 classrooms were available for them.  Each
classroom has two attributes: its \emph{capacity} and the \emph{region} to
which it belongs. Although the problem of assigning classrooms to courses
is quite common~\cite{phillips2015integer}, we did not find any publicly
available dataset that could be used here. The number of rounds of
participation is chosen randomly between~$1$ and~$3$ as this data was not
available. Also, to generate~$K_i$,~$\rho_i$ rounds are sampled uniformly
followed by choosing the remaining days with probability~$0.5$. 
We used~10 replicates for each experiment.


The generation of the restrictions and compatibility graphs from the agent
preferences and resource attributes is described in
detail in
\iftoggle{arxiv}
{Section~\ref{sec:experiments} of the supplement.}
{~\cite{mround-arxiv}.}

\begin{figure}[htb]
\centering
\includegraphics[width=.5\columnwidth]{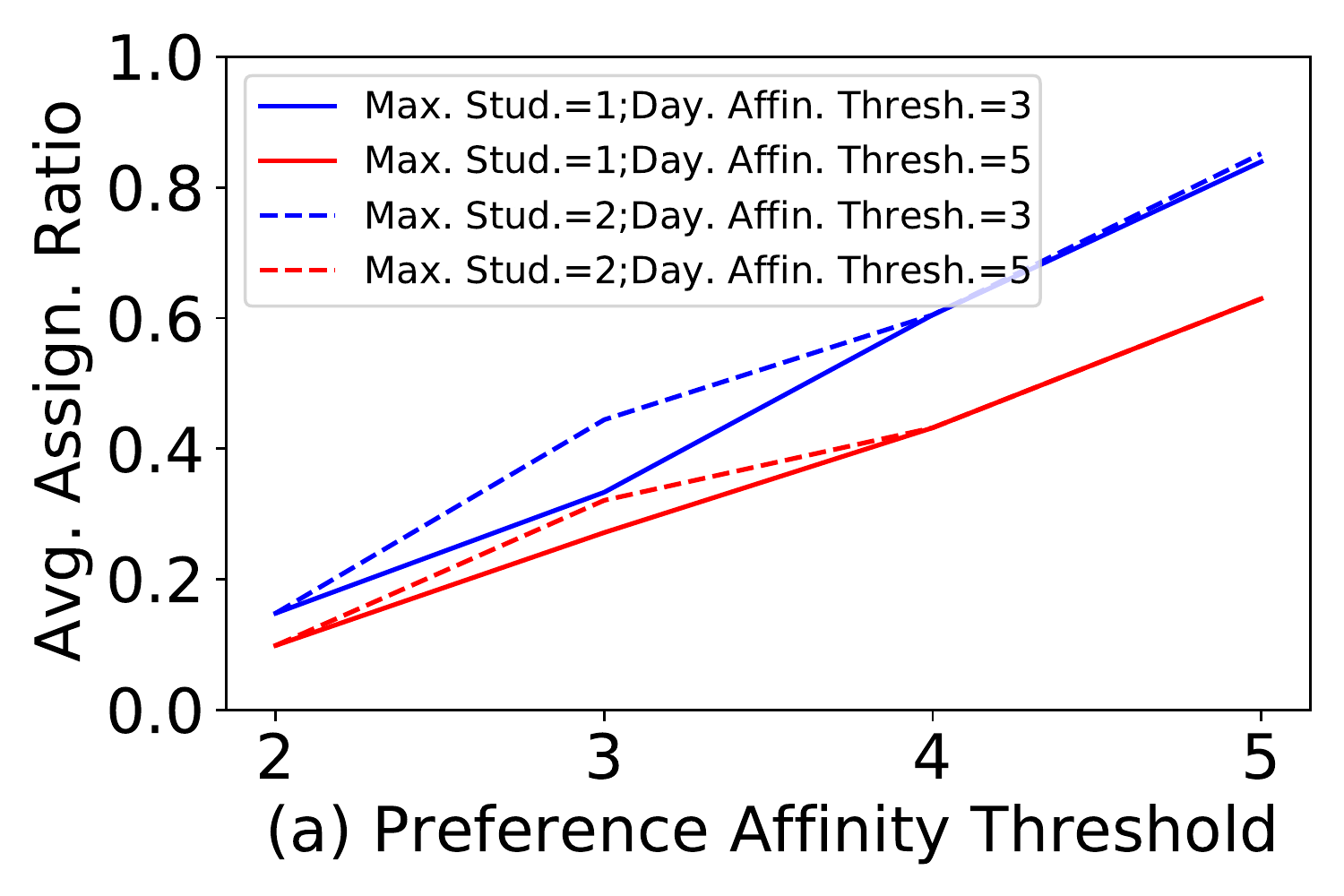}
\includegraphics[width=.48\columnwidth]{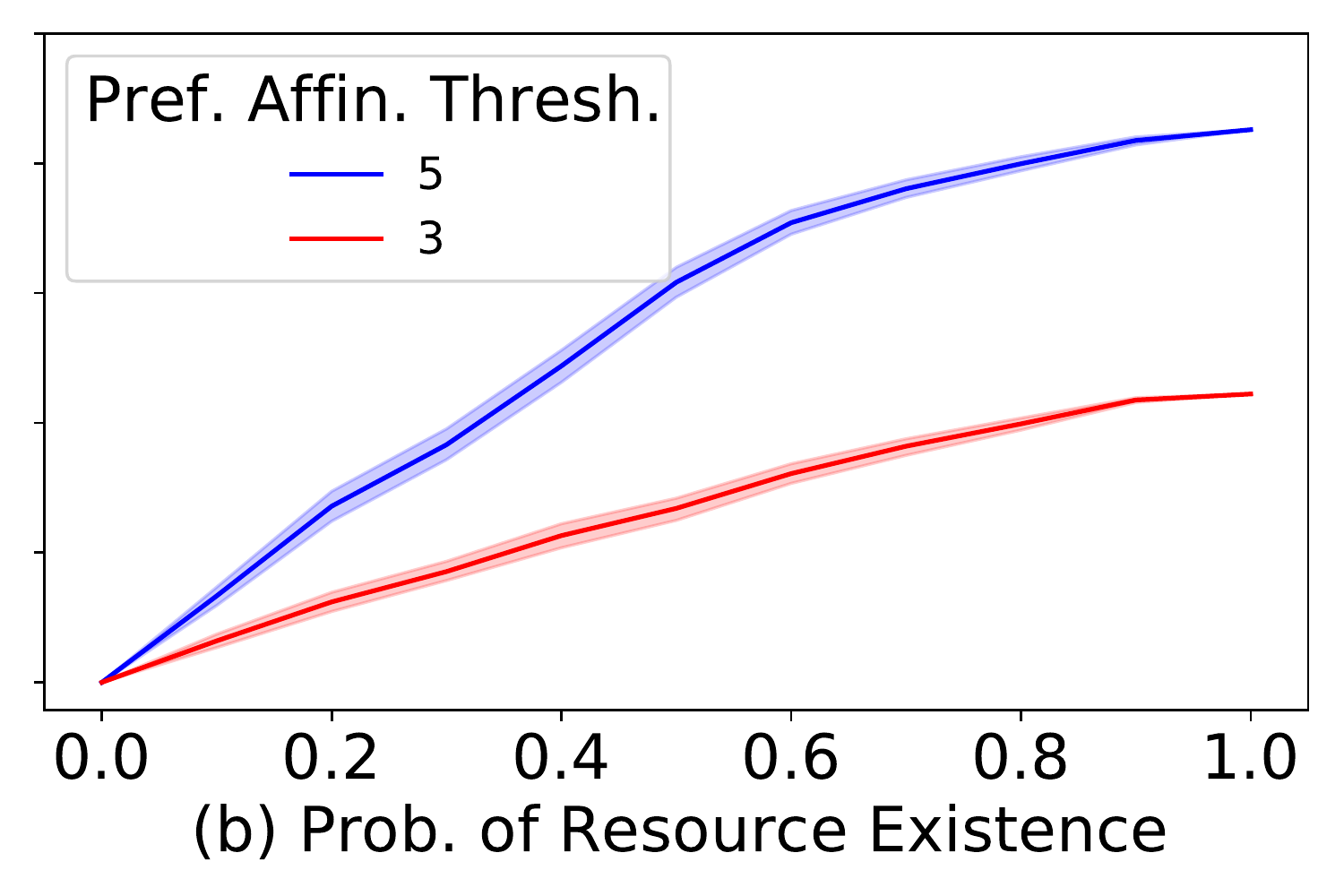}
\includegraphics[width=.5\columnwidth]{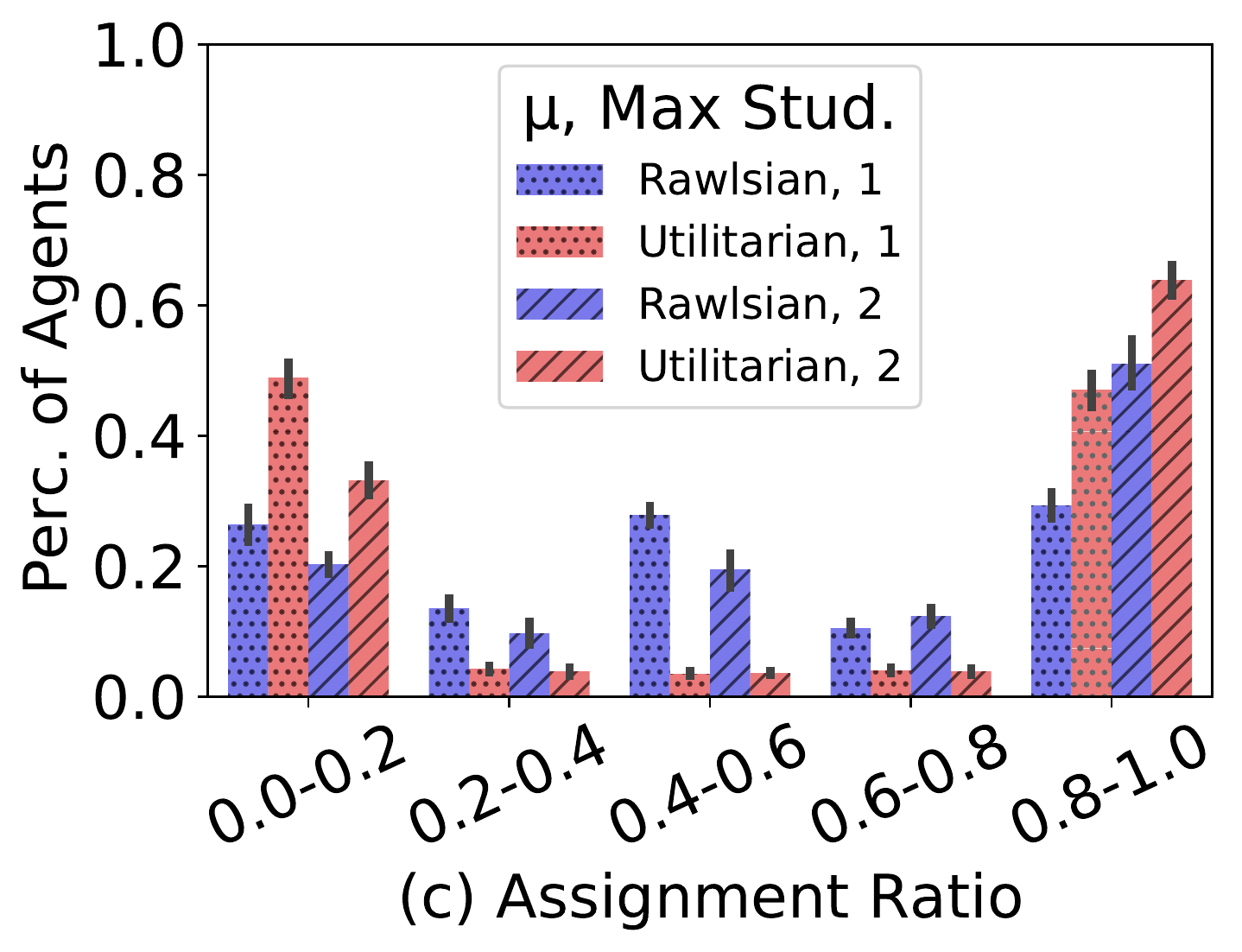}
\includegraphics[width=.49\columnwidth]{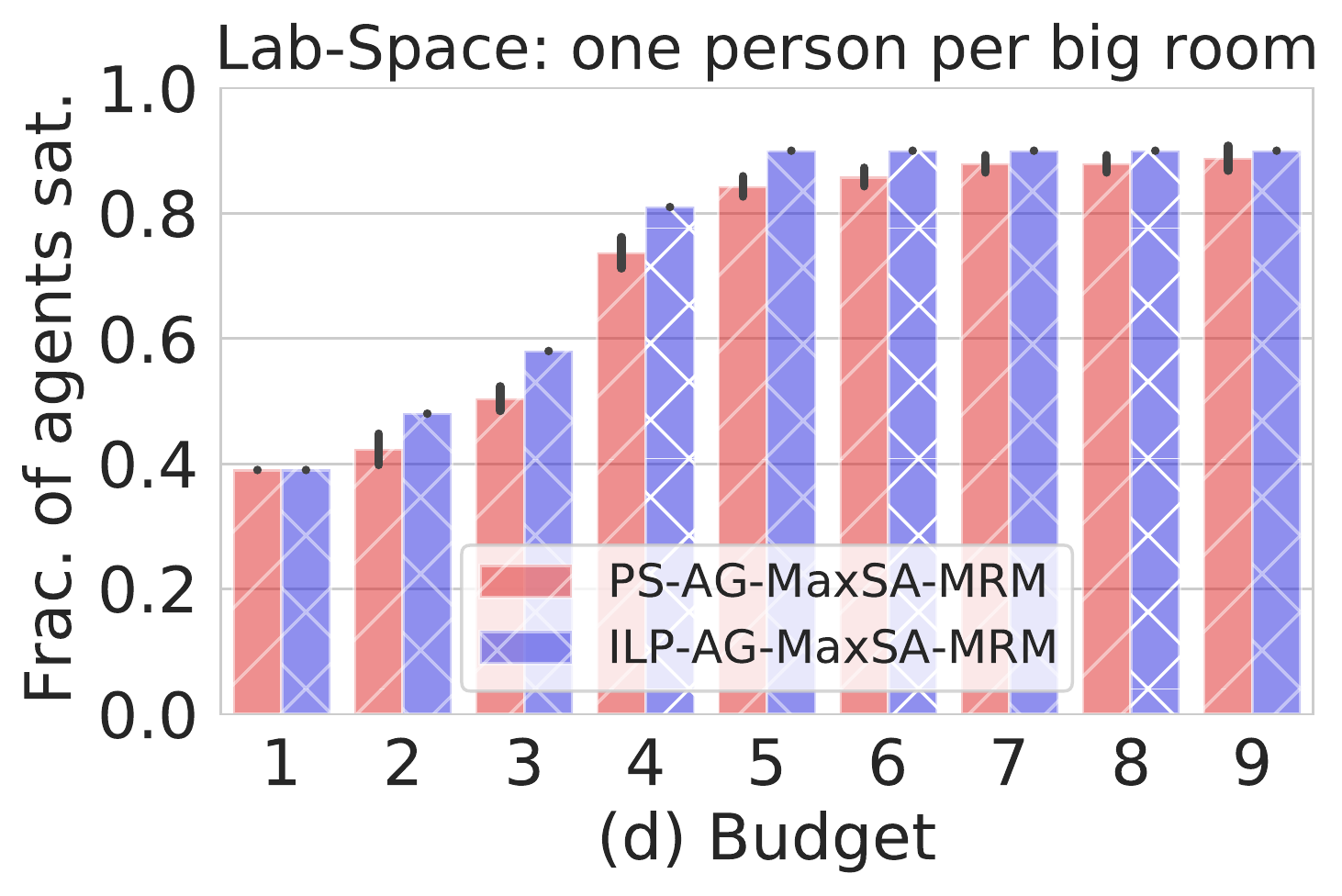}
\caption{\ls{} experiment results: (a)~Average assignment ratio for
different preference affinity thresholds; (b)~Average assignment ratio when
the number of resources is restricted; (c)~Comparison of different welfare
functions for resource probability~$0.5$, and preference affinity
and day affinity thresholds of~$5$ and~$2$ respectively; and (d)~Evaluation
of the advice generation algorithm for increasing budget.
\iftoggle{arxiv}
{
(See figure~\ref{fig:ag2students} for the case with two
people per big room.)
}
{(Plots for the case with two
people per big room are in~\cite{mround-arxiv}.) 
}
\label{fig:ls_total}
}

\end{figure}
\paragraph{\ls{} multi-round matching.} Through this study, the head of the lab
(\prin{}) wanted answers to the following questions: Is it possible to
accommodate only one person per office? Is it possible to maximize the
number of relevant assignments? Is it possible to achieve
all this by satisfying as many preferences of lab members as possible?  The
number of rounds is~$5$ (one for each weekday).  We defined a preference
affinity threshold. If the agent's survey score for affinity (1-5) is
at least as much as the threshold, then we add this preference as a constraint.
Note that the compatibility graph generated for threshold~$\tau$ is a
subgraph of the one generated for~$\tau+1$. The permissible sets of rounds~$K_i$ are
generated the same way. A day affinity threshold is set. If the agent's
preference for a weekday is lower than this threshold, then we do not add
it to~$K_i$. In Figure~\ref{fig:ls_total}(a), we have results for maximizing the
utilitarian social welfare for
various compatibility graphs generated by adding constraints based on the
agent's affinity to each preference. We see that adhering strictly to the
preferences of agents gives a very low average assignment ratio
$\gamma_i/\rho_i$, where $\gamma_i$ is the number of
rounds assigned to agent $x_i$ (who requested $\rho_i$ rounds). We considered two scenarios for large rooms: one or
two students. We note that accommodating two students significantly
improves the average assignment ratio. Adhering to the preferred days of
students does not seem to have many costs, particularly when the preference
affinity threshold is high. In Figure~\ref{fig:ls_total}(b), we
demonstrate how critical the current set of resources is for the functioning of
the lab. We withheld only a portion of the resources (by sampling)
to satisfy agent preferences. However, with just 60\% of the resources, it is
possible to get an average assignment ratio of~$0.6$, albeit by ignoring
most of the agent preferences. In Figure~\ref{fig:ls_total}(c), we compare
utilitarian social welfare with Rawlsian social welfare. We note that in
the utilitarian welfare, many agents end up with a lower assignment ratio
compared to Rawlsian, while the total number of matchings across rounds is
the same. 
\iftoggle{arxiv}
{
(A theorem in this regard is presented in 
Section~\ref{sup:sec:gen_util_theorem}
of the supplement; it states
that for valid and monotone strictly increasing benefit functions, an optimal solution for total benefit also optimizes the total
number of rounds assigned to agents.)
}
{
(A theorem in this regard is presented in 
~\cite{mround-arxiv}.
It states
that for valid and monotone strictly increasing benefit functions, an optimal solution for total benefit also optimizes the total
number of rounds assigned to agents.)
}
Hence, from the perspective
of fairness in agent satisfaction, the Rawlsian reward function performs
better.

\begin{figure}[htb]
\centering
\includegraphics[width=.47\columnwidth]{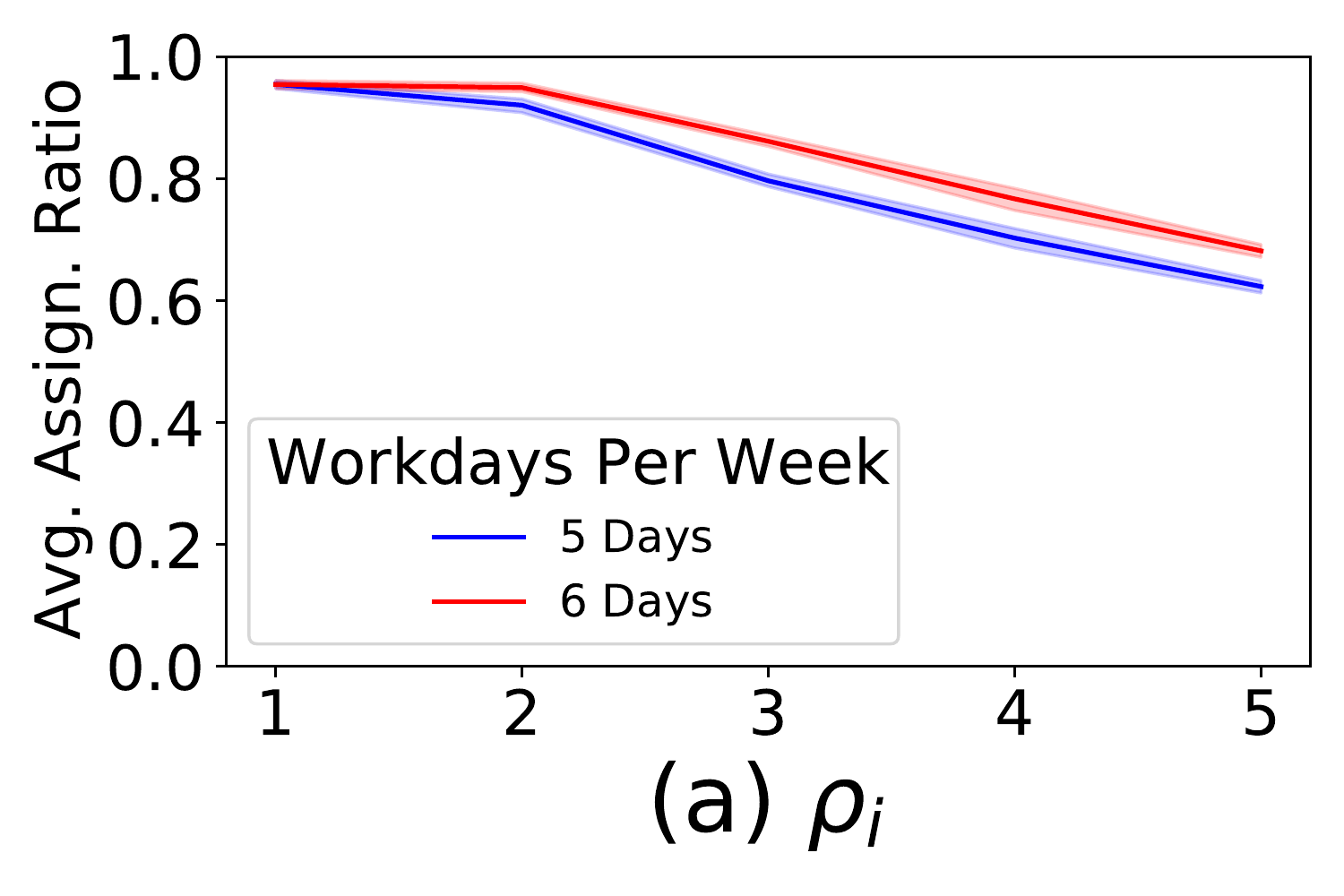}
\includegraphics[width=.47\columnwidth]{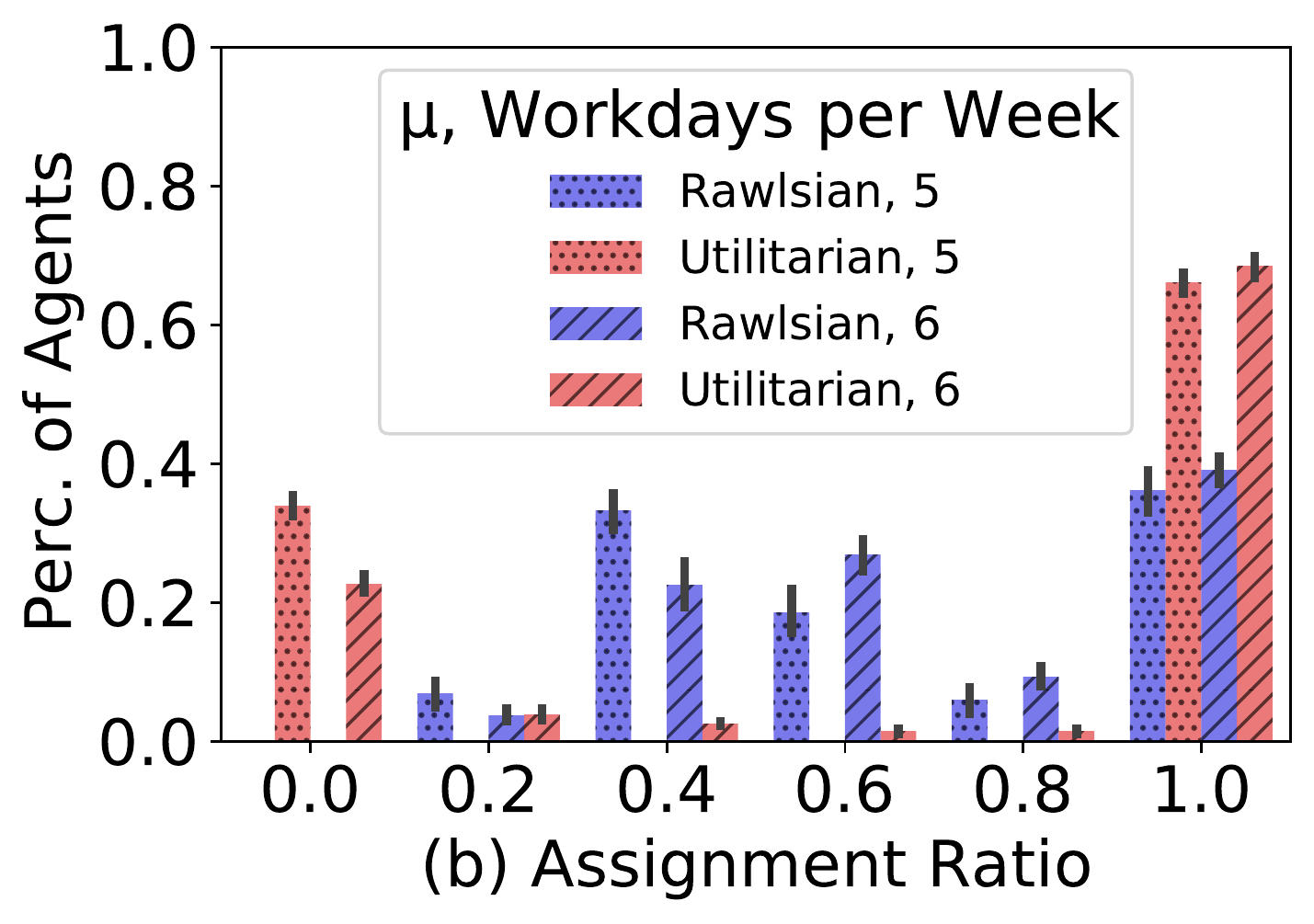}
\includegraphics[width=.47\columnwidth]{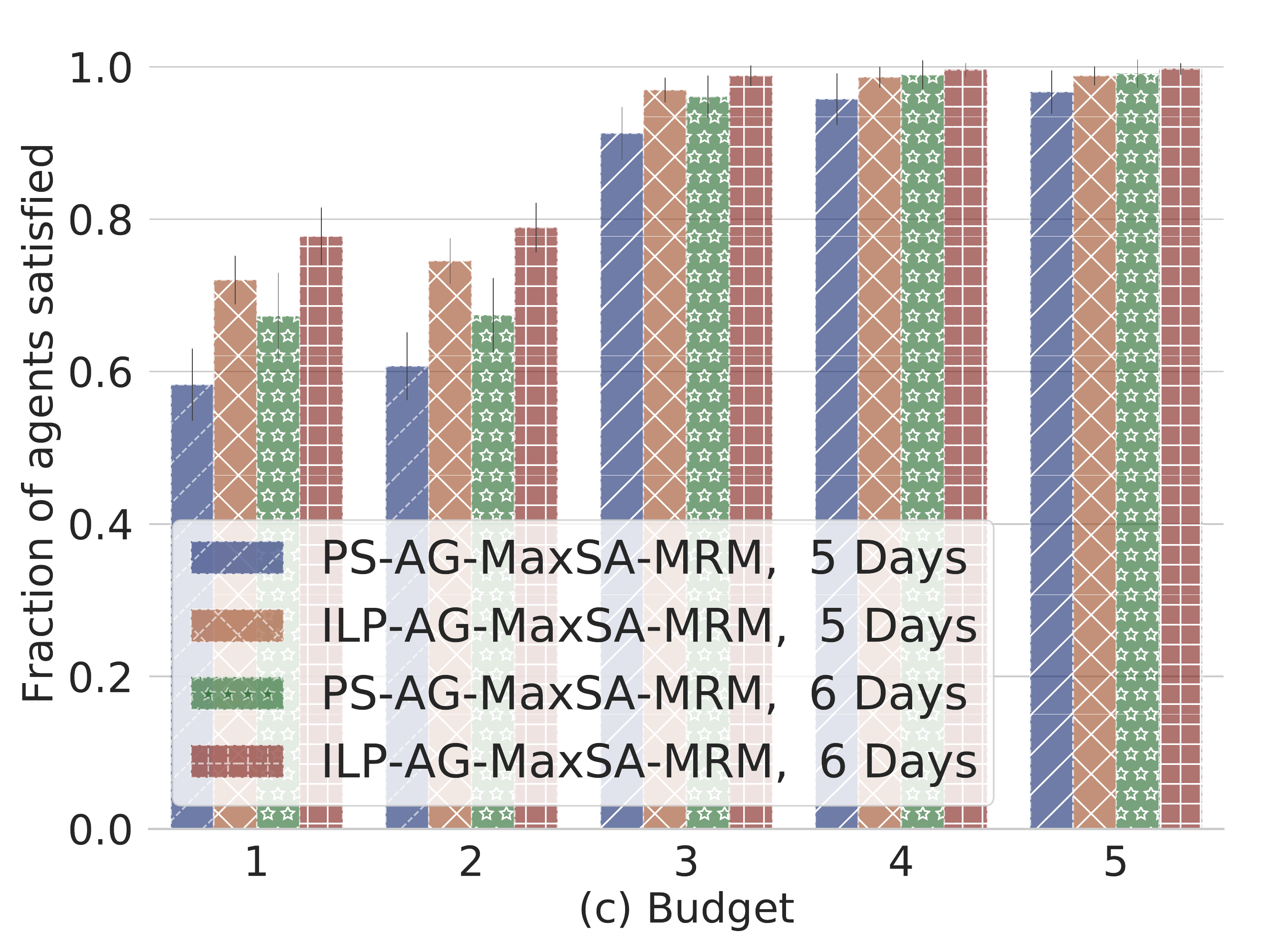}
\includegraphics[width=.47\columnwidth]{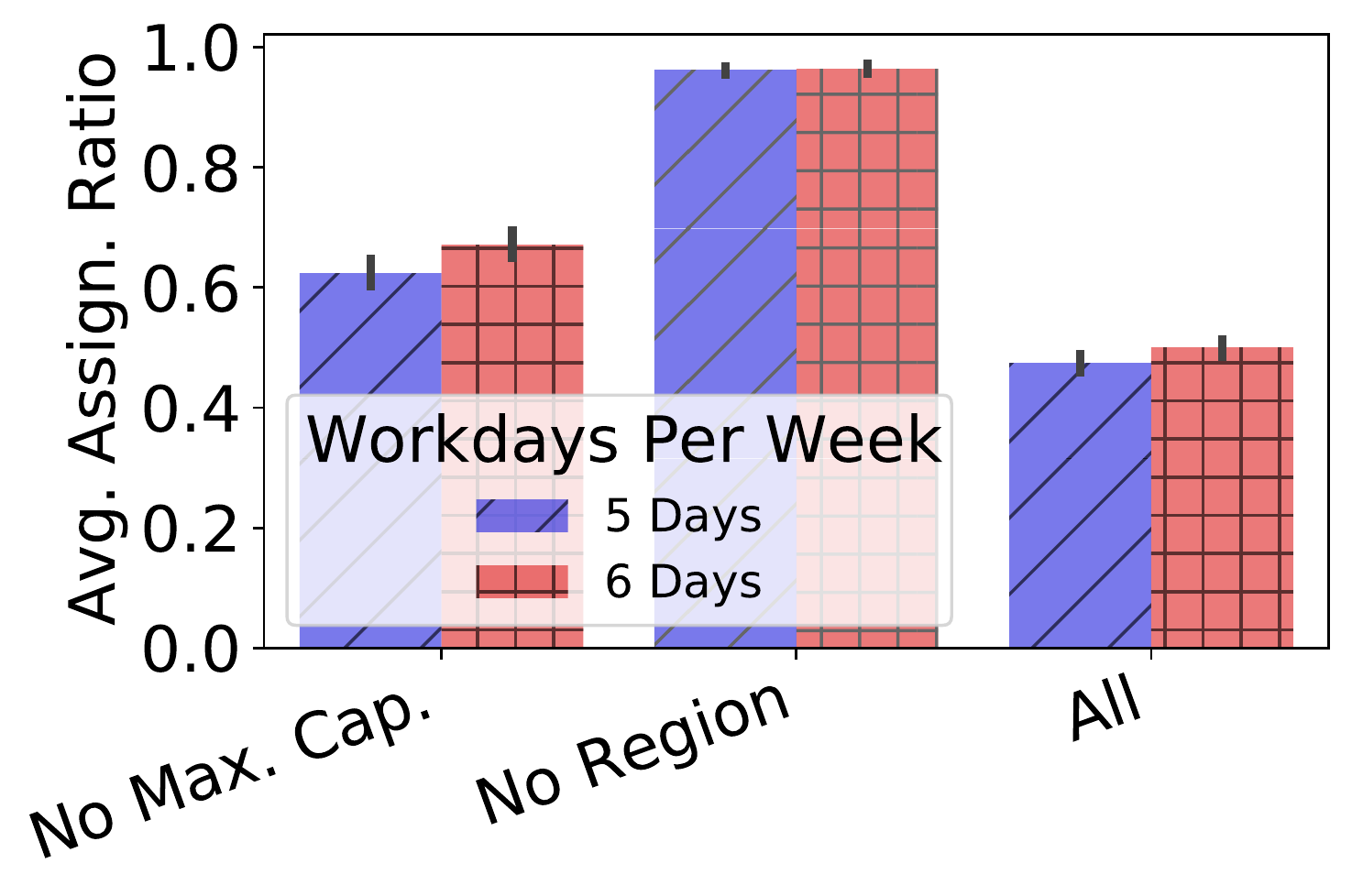}
\caption{\cc{} experimental results: (a)~Average assignment ratio vs. number
    of rounds required per agent; (b)~Comparison of different welfare
    functions. (c)~Evaluation of advice generation algorithm for increasing
budget. (d)~Importance of attributes.
\label{fig:cc}}
\end{figure}

\paragraph{\ls{} advice generation.}
From the results in Figure~\ref{fig:ls_total}(d), we observe
that there is a sharp increase in the number of agents satisfied for a
budget of~4. In the data, there is a strong preference for WiFi (an average
score of~4.3 out of~5), while more than~40\% of the rooms have poor
connectivity. When most of the agents relax this preference, they have
access to many rooms. This partially explains the sharp increase in the
number of matchings. We also observe that the performance of
\heuristicagmrm{} is close to that of optimum (given by \ilpmsagmrm). 


\paragraph{\cc{} multi-round matching.} Here, we considered two scenarios:
having classes (i)~five days a week and (ii)~six days a week. A six-day
week can accommodate more courses while on the other hand, facilities have
to be kept open for an additional day. The results of maximizing the utilitarian 
multi-round matching
are in Figure~\ref{fig:cc}(a). We observe that as~$\rho_i$ is increased,
the average assignment ratio decreases slowly. The difference between five and six days a week is not
significant. Figure~\ref{fig:cc}(b) compares utilitarian and Rawlsian reward
functions. Unlike the \ls{} results, here we clearly see that the minimum
assignment ratio is higher in the former case. 

\paragraph{\cc{} advice generation} In Figure~\ref{fig:cc}(c) we
observe that only for low budgets, the difference between five-day
week and six-day week solutions is significant.  In the same regime, we see
a significant difference between \heuristicagmrm{} and \ilpmsagmrm.  By
inspecting the solution sets, we observed that relaxing the region
attribute is most effective in increasing the number of satisfied agents, while
minimum capacity is least effective. To study the importance of different
attributes, we conducted experiments where a single chosen attribute was
omitted from the restrictions list.  Figure~\ref{fig:cc}(d) indicates that
the region attribute has the most impact on the assignments
ratio.
\paragraph{Scaling to larger networks.}
To analyze the performance of the of the advice generation algorithms with
respect to the size of the network, we experimented with complete bipartite
graphs of various sizes. 
\iftoggle{arxiv}
{The results are in Section~\ref{sup:sec:experiments} of the
supplement.}
{The results are in~\cite{mround-arxiv}.} 
In these experiments, we note that
not only is the solution obtained using the \heuristicagmrm{} close to
\ilpmsagmrm, it is orders of magnitude faster.

\section{Directions for Future Work}
\label{sec:concl}
We conclude by mentioning a few directions for future work. One direction
is to consider other models for producing compatibility graphs from
restrictions graphs.  (For example, an edge may be added to the
compatibility graph if at least one label on that edge is removed.) It will
also be interesting to consider the multi-round matching problems in an
online setting, where the set of agents varies over time and compatibility
is round-dependent.  Finally, it is of interest to investigate
approximation algorithms with provable performance guarantees for
\maxsamrm{} and \agmaxsamrm{}.


\clearpage
\section{ Acknowledgments}

We thank the AAAI 2023 reviewers for their 
feedback.
This research has been partially supported by the Israel Science Foundation under grant 1958/20, the
EU Project TAILOR under grant952215,
University of Virginia Strategic
Investment Fund award number SIF160,
and the US National Science Foundation grant
OAC-1916805 (CINES).


\bibliography{refs}
\iftoggle{arxiv}
{
\clearpage
\appendix
\onecolumn
\begin{center}
\fbox{{\Large\textbf{Technical Supplement}}}
\end{center}

\bigskip\bigskip

\noindent
\textbf{Paper title:}~  Resource Sharing Through Multi-Round Matchings 

\medskip


\section{Additional Material for Section~\ref{sec:rmmrm_poly}}
\label{sup:sec:rmmrm_poly}

\medskip


\subsection{Statement and Proof of Lemma~\ref{lem:saturated_matching}}

\medskip

\noindent
\textbf{Statement of Lemma~\ref{lem:saturated_matching}:}~
Given any maximum weight matching $M$ for $G'$,  a saturated
maximum weight matching $M'$ for $G'$ with the same weight
as $M$ can be constructed.
\begin{proof}
Recall that for each agent~$x_i$, there are~$|K_i|$ copies denoted
by~$x_i^t$,~$t\in K_i$ in~$G'$ by our construction. First, we will show by contradiction that
every type-3 resource is matched in~$M$.
Suppose a type-3 resource~$y^3_{i,p}$ is not matched. 
By our construction, this resource has edges only to
copies of nodes corresponding to $x_i$.
There are two cases.

\noindent
\underline{Case 1:}~
There is a node $x_i^{t'} \in X'$ that is not
matched in $M$. Here, we can add the edge 
$\{x_i^{t'}, y^3_{i,p}\}$ with weight $k+1 \geq 2$ to $M$;
this would contradict the fact that $M$ is a maximum
weight matching for $G'$.

\noindent
\underline{Case 2:}~
Every copy of $x_i$ is matched in $M$.
In this case, at least one copy of $x_i$, say $x_i^{t'}$,
is matched to a type-1 or type-2 resource node, and
weight of the corresponding edge is at most 1.
We can replace that edge in $M$ by the edge
$\{x_i^{t'}, y^3_{i,p}\}$ with weight $k+1 \geq 2$
to create a new matching whose weight is larger than
that of $M$. Again, this contradicts the maximality
of $M$. 

\smallskip
We thus conclude that every type-3 resource node 
is matched in $M$.

\smallskip
We now argue that $M$ can be modified to include
all the nodes of $X'$ without reducing
the total weight. 
Consider any agent $x_i$.
As argued above, $|K_i|-\rho_i$ copies of $x_i$
are matched to type-3 resources in $M$.
Now consider the remaining~$\rho_i$ copies of~$x_i$.  Suppose a
copy~$x_i^{t'}$ is unmatched in~$M$. Recall that every type-2 resource is
adjacent to every~$x_i^t$ and no other~$x_{i'}^t$ where~$i'\ne i$. Also,
there are~$\rho_i$ type-2 resources. This implies there exists at least one
unmatched type-2 resource to which~$x_i^{t'}$ can be matched. Also, we note
that such a scenario occurs only if the weight corresponding to edge
of~$x_i^{t'}$ and any unmatched type-2 resource is~$0$. (Otherwise, by
adding this edge, one can increase the weight of the matching contradicting
the fact that~$M$ is a maximum weighted matching.)

\smallskip
Therefore, if~$M$ is not saturated, we can construct a saturated
matching~$M'$ from~$M$ as follows. We include all the edges of~$M$ in~$M'$.
Then, we match each unmatched copy~$x_i^{t'}$ to an available type-2
resource. Thus, all nodes of~$X'$ are matched in~$M'$.
\end{proof}


\subsection{Statement and Proof of Lemma~\ref{lem:benefit-to-matching}}

\medskip

\noindent
\textbf{Statement of Lemma~\ref{lem:benefit-to-matching}:}~
Suppose there is a solution to the \maxtbmrm{} problem instance with
benefit $Q$. Then, there  a saturated matching $M$ for $G'$ 
with weight $Q + \lambda$ can be constructed.
\begin{proof}
Let~$\mbsoln$ be the solution to the \maxtbmrm{} problem. We will
construct a saturated matching~$M$ with weight $Q+\lambda$ as follows. Consider any
agent~$x_i$. If~$x_i$ is matched to~$y_j$ in round~$t$ in~$\mbsoln$,
then,~$\{x_i^t,y_j^t\}$ is added to~$M$. There are~$\gamma_i$ such edges.
Of the remaining~$|K_i|-\gamma_i$ copies of~$x_i$,~$|K_i|-\rho_i$ copies
are matched to type-3 resources. The remaining~$\rho_i-\gamma_i$ copies are
matched to type-2 resources in the following manner.
Let~$\overline{X}_i=\{x_i^{t_1},x_i^{t_2},\ldots,x_i^{t_{\rho_i-\gamma_i}}\}$
be the remaining copies ordered in an arbitrary manner. We will add the
edges~$\{x_i^{t_\ell},y^3_{i,\rho_i-\ell+1}\}$ to~$M$
for~$\ell=1,\ldots,\rho_i-\gamma_i$. Note that every~$x_i^t$,~$t\in K_i$ is
matched, and therefore, every node in~$X'$ is matched.

\smallskip
Now, we compute the total weight of~$M$. 
Again, consider any agent $x_i$. Each edge corresponding to a type-3
node contributes weight~$k$, and there are~$|K_i|-\rho_i$ such edges for~$x_i$. Hence, the total contribution from type-3 resources
is~$k(|K_i|-\rho_i)$. There are~$\gamma_i$ edges corresponding to type-1
resources. Since each such edge has weight~$1$, it contributes~$\gamma_i$
to the sum. Finally, the edges corresponding to type-2 resources
contribute~$\sum_{\ell=1}^{\rho_i-\gamma_i}w(x_i^{t_\ell},y^2_{i,\rho_i-\ell+1})
= \sum_{\ell=1}^{\rho_i-\gamma_i}\big(1-\increward_i(\rho_i-\ell+1)\big)$.
Combining the terms corresponding to type-1 and type-2 resources, 
the total weight of the edges in $M$ due to agent $x_i$ is
\(
\gamma_i +
\sum_{\ell=1}^{\rho_i-\gamma_i}\big(1-\increward_i(\rho_i-\ell+1)\big) = 
\sum_{\ell=1}^{\gamma_i}1 +
\sum_{\ell=\gamma_i}^{\rho_i}\big(1-\increward_i(\ell)\big) 
= \sum_{\ell=1}^{\rho_i}\big(1-\increward_i(\ell)\big) +
\sum_{\ell=1}^{\gamma_i}\increward_i(\ell)
\)\,.
Combining this over all the agents, the total
weight of the matching $M$ is given by~$\sum_{i=1}^nk(|K_i|-\rho_i) +
\sum_{i=1}^n\sum_{\ell=1}^{\rho_i}\big(1-\increward_i(\ell)\big) +
\sum_{i=1}^n\sum_{\ell=1}^{\gamma_i}\increward_i(\ell)=\lambda + Q$,
as stated in the lemma.
\end{proof}

\medskip


\subsection{Statement and Proof of Lemma~\ref{lem:opt-benefit-matching}}

\medskip

\noindent
\textbf{Statement of Lemma~\ref{lem:opt-benefit-matching}:}
There is an optimal solution to the MaxTB-MRM problem instance with benefit $Q^*$
if and only if there is a maximum weight saturated matching $M^*$
for $G'$ with weight $Q^* + \lambda$.

\begin{proof} Suppose there is an optimal solution to the \maxtbmrm{}
instance with benefit $Q^*$.  By
Lemma~\ref{lem:benefit-to-matching}, there is a saturated matching $M^*$
for $G'$ with weight $Q^* + \lambda$.  Now, if $G'$ has a saturated
matching with weight $W'$ \emph{larger than} $Q^* + \lambda$, then by
Lemma~\ref{lem:matching-to-benefit}, we would have a solution to the
\maxtbmrm{} problem with benefit $W'-\lambda > Q^*$.  This contradicts the
assumption that $Q^*$ is the maximum benefit for the \maxtbmrm{} instance.
In other words, $M^*$ is a maximum weight saturated matching for $G'$.  The
other direction can be proven in similar way.
\end{proof}

\medskip


\subsection{Statement and Proof of Proposition~\ref{pro:time-algmaxtbmrm}}

\medskip
\textbf{Statement of Proposition~\ref{pro:time-algmaxtbmrm}:}~
Algorithm \algmaxtbmrm{} runs in time 
$O(k^{3/2} (n+|E|)\sqrt{n+m}))$, where $n$ is the 
number of agents, $m$ is the number of resources,
$k$ is the number of rounds and $E$ is the number of
edges in the compatibility graph $G(X, Y, E)$.

\begin{proof} We first estimate the number of nodes and
edges in the bipartite graph $G'(X', Y', E')$
constructed in Step~1 of \algmaxtbmrm{}.
For each agent, there are at most $k$ nodes in $X'$.
Thus, $|X'| \leq kn$.
For each resource, there are $k$ nodes corresponding to
type-1 resources.
Thus, the total number of type-1 resource nodes is $km$.
For each agent, the total number of type-2 and type-3 resource nodes is at most $2k$. Therefore, the total number of type-2 and type-3
resource nodes over all the agents is $2kn$.
Thus, $|Y'| \leq km+ 2kn$.
Thus, the number of nodes in $G'$ is at most $km+3kn$.
We now estimate $|E'|$. 
For each edge $e \in E$, there are $k$ edges in $E'$
corresponding to edges incident on type-1 resources.
Thus, the number of edges of $G'$ incident on type-1 resource nodes
is at most $k|E|$.
As mentioned above, the total number of type-2 and type-3 nodes
for each agent is at most $2k$. Thus, the total number of edges
in $E'$ contributed by all agents to type-2 and type-3 resource nodes is at most $2kn$. Thus, $|E'| \leq k|E| + 2kn$.
It can be seen that the running time of the algorithm is dominated
by Step~2, which computes a maximum weighted matching in $G'$.
It is well known that for a bipartite graph with $p$ nodes and $q$
edges, a maximum weighted matching can be computed in
time $O(q\sqrt{p})$ \cite{CLRS-2009}.
Since $G'$ has at most $3kn+km$ nodes and at most
$k|E| + 2kn$ edges, the running time of \algmaxtbmrm{}
is $O(k^{3/2} (|E|+n)\sqrt{n+m}))$.
\end{proof}

\medskip

\subsection{Statement and Proof of Theorem~\ref{thm:rawlsian}}

\medskip

\noindent
\textbf{Statement of Theorem~\ref{thm:rawlsian}:}~
There exists a benefit function $\mu_i$ for each agent $x_i$ such that maximizing the total benefit under this function
maximizes the Rawlsian social welfare function,
i.e., it maximizes the minimum
satisfaction ratio over all agents. 
Further, this benefit function is valid 
and therefore, an optimal solution to the \maxtbmrm{}
problem under this benefit function can
be computed in polynomial time.

\smallskip

The benefit function mentioned in the above theorem was
defined in Section~\ref{sec:rmmrm_poly}.
For the reader's convenience, we have reproduced the
definition below.

\smallskip

\noindent
\begin{enumerate}[leftmargin=*]
\item Let~$F=\big\{k_1/k_2\mid k_1,k_2\in\{1,\ldots,k\} ~\mathrm{and}~ k_1 \leq k_2\big\}\cup\{0\}$.
\item Let~$\ord: F\rightarrow\{1,\ldots,|F|\}$ correspond to the index of
each element in~$F$ when sorted in descending order.
\item For each~$q\in F$, let $\xi(q)=(nk)^{\ord(q)}/(nk)^{|F|}$, where~$n$ is
the number of agents and~$k$ is the number of rounds. Note that
each~$\xi(q)\in(0,1]$.
\item The incremental benefit~$\delta_i(\ell)$ for an agent~$x_i$ for
the~$\ell$th matching is defined
as~$\delta_i(\ell)=\xi\big((\ell-1)/\rho_i\big)$.
Thus, the benefit function $\mu_i$ for agent $x_i$ is
given by $\mu_i(0) = 0$ and 
$\mu_i(\ell) = \mu_i(\ell-1) + \delta_i(\ell)$ for
$1 \leq \ell \leq \rho_i$.
\end{enumerate}
It can be seen that $\mu_i$ satisfies
properties P1, P2 and P4 of a valid benefit
function. We now show that it also satisfies P3,
the diminishing returns property.
\begin{lemma}
For each agent~$x_i$, the incremental benefit~$\delta_i(\ell)$ is monotone
non-increasing in~$\ell$. Therefore,~$\mu_i(\cdot)$ satisfies
diminishing returns property.
\end{lemma}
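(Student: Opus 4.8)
The plan is to reduce the diminishing-returns claim (P3) to the monotonicity claim, using the equivalence already recorded in the paper: $\mu_i(\cdot)$ satisfies P3 if and only if the increments $\delta_i(\cdot)$ are monotone non-increasing. Hence it suffices to prove the first sentence of the lemma, namely that $\delta_i(\ell) \ge \delta_i(\ell+1)$ for every $\ell \in \{1,\ldots,\rho_i-1\}$. First I would unfold the definition $\delta_i(\ell) = \xi\big((\ell-1)/\rho_i\big)$, so that comparing consecutive increments reduces to comparing $\xi$ evaluated at the two arguments $(\ell-1)/\rho_i$ and $\ell/\rho_i$, and then exploit the exponential form of $\xi$ by forming the ratio $\delta_i(\ell)/\delta_i(\ell+1)$.

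Next I would record two elementary facts about these arguments. Both lie in $F$: the value $\ell/\rho_i$ has the admissible form $k_1/k_2$ with $k_1 = \ell \le \rho_i = k_2 \le k$, and $(\ell-1)/\rho_i$ is either the explicitly adjoined element $0$ (when $\ell=1$) or again of that form. Moreover they are distinct, since $(\ell-1)/\rho_i < \ell/\rho_i$ strictly. Because $\pi$ ranks the elements of $F$ in \emph{descending} order, the smaller value receives the larger (integer) rank, so $\pi\big((\ell-1)/\rho_i\big) > \pi\big(\ell/\rho_i\big)$, and since the ranks are integers this gap is at least $1$.

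I would then substitute $\xi(q) = (nk)^{\pi(q)}/(nk)^{|F|}$ into the ratio to obtain $\delta_i(\ell)/\delta_i(\ell+1) = (nk)^{\pi((\ell-1)/\rho_i) - \pi(\ell/\rho_i)} \ge (nk)^1 = nk > 1$, where the final strict inequality is safe because a nonempty comparison requires $\rho_i \ge 2$ and hence $k \ge 2$, so $nk \ge 2$. This yields $\delta_i(\ell) > \delta_i(\ell+1)$, establishing that the increments are (in fact strictly) non-increasing, and P3 then follows immediately from the stated equivalence.

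The one delicate point is the rank-gap step, and I would take care there: the entire construction of $\xi$ is engineered precisely so that any decrease in its argument forces the exponent $\pi$ to jump by at least one full unit, which in turn makes $\xi$ drop by a factor of at least $nk$. The obstacle is therefore not a calculation but a bookkeeping subtlety — confirming that $(\ell-1)/\rho_i$ and $\ell/\rho_i$ are genuinely distinct \emph{members} of $F$ (rather than merely unequal reals that might coincide after reduction), so that they occupy distinct positions in the descending sort and their integer ranks differ by at least one. Once this is pinned down, everything else is direct substitution into the definitions of $\xi$ and $\delta_i$.
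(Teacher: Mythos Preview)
Your argument is correct and follows essentially the same route as the paper: form the ratio $\delta_i(\ell)/\delta_i(\ell+1)$, observe that the two arguments $(\ell-1)/\rho_i$ and $\ell/\rho_i$ are distinct members of $F$ so their descending-order ranks differ by at least one, and conclude that the ratio is at least $nk>1$. If anything you are more careful than the paper's terse proof, explicitly checking membership in $F$ (including the $\ell=1$ case) and justifying $nk>1$.
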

\begin{proof}
By definition, for~$\ell=1,\ldots,\rho_i-1$,~$\delta_i(\ell)/\delta_i(\ell+1)=
\xi\big((\ell-1)/\rho_i\big)/\xi\big(\ell/\rho_i\big)\ge nk > 1$, by noting
that~$\pi(\ell/\rho_i)-\pi((\ell+1)/\rho_i)\ge1$. Hence, the lemma holds.
\end{proof}
To complete the proof of Theorem~\ref{thm:rawlsian},
we need to show that any solution that maximizes the benefit function defined above also maximizes
the minimum satisfaction ratio.
We start with some definitions and a lemma.

\smallskip

For an agent~$x_i$ and~$q\in F$, let~$\Fg_i(q)=\{\ell\mid 0\le\ell\le\rho_i
\text{ and } \ell/\rho_i>q\}$. We will now show that the incremental
benefit~$\delta_i(\ell)$ obtained by agent~$x_i$ for matching~$\ell$ is
greater than the sum of all incremental benefits~$\delta_{i'}(\ell')$, for
all agents~$x_{i'}$, $i'\ne i$ and for all~$\ell'$
satisfying~$\ell'/\rho_{i'}>\ell/\rho_i$.
\begin{lemma}\label{lem:cost}
For any agent~$x_i$ and non-negative
integer~$\ell\le\rho_i$, we have ~$\delta_i(\ell)>\sum_{i'=i}\sum_{\ell'\in
\Fg_{i'}(\ell/\rho_i)}\delta_{i'}(\ell')$.
\end{lemma}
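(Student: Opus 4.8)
The plan is to exploit the superincreasing, geometric structure built into the weights $\xi$. Set $q_0=(\ell-1)/\rho_i$ and $j_0=\pi(q_0)$, so that the left-hand side is exactly $\increward_i(\ell)=\xi(q_0)=(nk)^{j_0}/(nk)^{|F|}$. The whole argument then becomes a statement about the \emph{index} $\pi$ of each increment: I first want to show that every increment $\increward_{i'}(\ell')=\xi((\ell'-1)/\rho_{i'})$ appearing on the right-hand side sits strictly higher than $q_0$ in the descending order defining $\pi$, i.e. $\pi((\ell'-1)/\rho_{i'})\le j_0-1$, and hence has weight at most $(nk)^{j_0-1}/(nk)^{|F|}$ — a full factor $nk$ below $\increward_i(\ell)$. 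Extracting this from the membership condition defining $\Fg_{i'}$ is the delicate point (see below), but once it holds, each single competing increment is provably lighter than $\increward_i(\ell)$.

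The remaining work is to control how many competing increments can accumulate at each weight level and then sum the resulting geometric tail. For a fixed value $q\in F$, each agent contributes at most one increment whose argument equals $q$, since $(\ell'-1)/\rho_{i'}=q$ determines $\ell'$ uniquely; thus at most $n$ increments (in fact at most $n-1$, as the sum excludes $i$) can share a common weight $\xi(q)$. Grouping the sum by distinct levels and using $\pi((\ell'-1)/\rho_{i'})\le j_0-1$ gives
\[
\sum_{i'\neq i}\sum_{\ell'\in\Fg_{i'}(\ell/\rho_i)}\increward_{i'}(\ell')\;\le\; n\sum_{t=1}^{j_0-1}\frac{(nk)^t}{(nk)^{|F|}}\;=\;\frac{n}{(nk)^{|F|}}\cdot\frac{(nk)^{j_0}-nk}{nk-1}\;<\;\frac{n}{nk-1}\cdot\frac{(nk)^{j_0}}{(nk)^{|F|}}.
\]
Comparing with $\increward_i(\ell)=(nk)^{j_0}/(nk)^{|F|}$, the claimed strict inequality reduces to $n/(nk-1)<1$, i.e. $n(k-1)>1$, which holds in the nontrivial regime $k\ge2,\,n\ge2$; the case $n=1$ is vacuous since the right-hand side is empty.

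The main obstacle is exactly the interplay between multiplicity and strictness, and it is what dictates the choice of base $nk$ for the weights. A crude count — ``at most $nk$ competing increments, each of weight at most $(nk)^{j_0-1}/(nk)^{|F|}$'' — only yields $\le$, because it pretends all competitors could pile up at the single heaviest admissible level $j_0-1$; the geometric grouping above is what restores the strict $<$, and it works only because the per-level multiplicity $n$ is beaten by the base $nk$ with enough slack that the whole tail (over up to $|F|$ levels) still collapses below $\xi(q_0)$. I would therefore spend the most care verifying the index claim $\pi((\ell'-1)/\rho_{i'})\le j_0-1$ for every increment counted by $\Fg_{i'}$: this amounts to confirming that membership forces the \emph{starting} ratio $(\ell'-1)/\rho_{i'}$ (not merely the ending ratio $\ell'/\rho_{i'}$) to exceed $(\ell-1)/\rho_i$, since only the starting ratio governs the weight $\xi$, and it is precisely this step that makes each competing term genuinely lighter than $\increward_i(\ell)$.
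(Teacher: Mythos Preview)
Your overall strategy matches the paper's: bound every competing increment by $\delta_i(\ell)/(nk)$ and then count. On the counting side, your worry that the crude count yields only $\le$ is misplaced: the sum runs over $i'\ne i$, so there are at most $(n-1)k$ terms in total, and once each is at most $\delta_i(\ell)/(nk)$ the whole sum is at most $\frac{n-1}{n}\,\delta_i(\ell)<\delta_i(\ell)$, strictly. This is exactly how the paper argues, and your geometric regrouping, while correct, is unnecessary.

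The genuine gap is the index claim you yourself flag as ``delicate''. You need that $\ell'/\rho_{i'}>\ell/\rho_i$ forces $(\ell'-1)/\rho_{i'}>(\ell-1)/\rho_i$, and this implication is simply false: take $\ell=\ell'=1$, $\rho_i=3$, $\rho_{i'}=2$; then $1/2>1/3$ yet both starting ratios equal $0$, so $\delta_i(1)=\delta_{i'}(1)=\xi(0)=1$ and the promised factor-$nk$ separation collapses to $1$. In fact this breaks the lemma itself as stated, not just the proof: with $n=2$, $k=3$, $\rho_1=3$, $\rho_2=2$, taking $i=1$ and $\ell=1$ gives left side $\delta_1(1)=1$ but right side $\delta_2(1)+\delta_2(2)=1+(nk)^{-2}>1$. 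The paper's own proof glosses over the same step (it jumps from $\pi(\ell/\rho_i)-\pi(\ell'/\rho_{i'})\ge1$ directly to a ratio bound on $\xi$ evaluated at the \emph{shifted} arguments $(\ell-1)/\rho_i$ and $(\ell'-1)/\rho_{i'}$), so you are in good company; but the plan as written cannot be completed at precisely the point you singled out.
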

\begin{proof}
For any~$\ell'\in \Fg_{i'}(\ell/\rho_i)$, note that
$\ell/\rho_i<\ell'/\rho_{i'}$.
Hence,~$\pi(\ell/\rho_i)-\pi(\ell'/\rho_{i'})\ge1$. This implies
that~$\delta_i(\ell)/\delta_{i'}(\ell') =
\xi\big((\ell-1)/\rho_i\big)/\xi\big((\ell'-1)/\rho_{i'}\big)\ge nk$.
Since~$\rho_{i'}\le k$, the number of 
$\delta_{i'}(\ell')$ terms
per~$i'$ is at most $k$.
Since there are at most~$n-1$ agents~$x_{i'}$,~$i'\ne i$,~ it
follows that there are at most~$(n-1)k$ terms~$\delta_{i'}(\ell')$ in total. Therefore,~$\sum_{i'=i}\sum_{\ell'\in
\Fg_{i'}(\ell/\rho_i)}\frac{\delta_{i'}(\ell')}{\delta_{i}(\ell)}<1$.
\end{proof}

\smallskip
We are now ready to complete the proof of Theorem~\ref{thm:rawlsian}.

\medskip


\noindent
\textbf{Proof of Theorem~\ref{thm:rawlsian}.}
The proof is by contradiction. Suppose~$\mbsoln^*$ is an optimal solution
given the benefit function defined above. We will show that if there exists
a solution~$\mbsoln$ with minimum satisfaction ratio greater than that
of~$\mbsoln^*$, then the total
benefit~$\totreward(\mbsoln)>\totreward(\mbsoln^*)$, contradicting the fact
that~$\mbsoln^*$ is an optimal solution. Let~$q(\mbsoln^*)$
and~$q(\mbsoln)$ denote the minimum satisfaction ratio in~$\mbsoln^*$
and~$\mbsoln$.
For a given solution $\mbsoln$, we use $\gamma_i(\mbsoln)$
to denote the number of rounds assigned to agent $x_i$,
$1 \leq i \leq n$.

We recall that the benefit function for~$\mbsoln$ can be written
as~$\totreward(\mbsoln)=\sum_{i=1}^n\sum_{\ell=1}^{\gamma_i(\mbsoln)}
\delta_i(\ell)$. For each~$i$, the~$\delta_i(\ell)$ terms can be
partitioned into three blocks as follows.

\begin{description}
\item{(i)} $D_1(\mbsoln)=\{(i,\ell)\mid \forall
i,~\ell/\rho_i\le q(\mbsoln^*)\}$, 
\item{(ii)} $D_2(\mbsoln)=\{(i,\ell)\mid
\forall i,~q(\mbsoln^*)<\ell/\rho_i\le q(\mbsoln)\}$, and
\item{(ii)} $D_3(\mbsoln)=\{(i,\ell)\mid \forall i,~\ell>q(\mbsoln)\}$. 
\end{description}

\noindent
We can
partition the terms corresponding to~$\totreward(\mbsoln^*)$ in the same
way, and these blocks are denoted by~$D_1(\mbsoln^*)$, $D_2(\mbsoln^*)$ and $D_3(\mbsoln^*)$.
Since
every~$x_i$ has a satisfaction ratio of at least~$q(\mbsoln^*)$ in
both~$\mbsoln^*$ and~$\mbsoln$, all the terms with~$\ell/\rho_i\le
q(\mbsoln^*)$ will be present. Therefore,~$D_1(\mbsoln^*)=D_1(\mbsoln)$.

\smallskip

In~$\mbsoln^*$, let~$x_{i'}$ be an agent for which the satisfaction ratio
is~$q(\mbsoln^*)$. There are no~$(i',\ell)$ terms in~$D_2(\mbsoln^*)$ while
for every agent~$x_i$, all~$(i,\ell)$ terms
satisfying~$q(\mbsoln^*)<\delta_i(\ell)\le q(\mbsoln)$ are present
in~$D_2(\mbsoln)$. Therefore,~$D_2(\mbsoln^*)\subset D_2(\mbsoln)$. Now,
\begin{align*}
\totreward(\mbsoln)-\totreward(\mbsoln^*) = & 
\sum_{(i,\ell)\in D_2(\mbsoln)}\delta_i(\ell) +
\sum_{(i,\ell)\in D_3(\mbsoln)}\delta_i(\ell) -
\sum_{(i,\ell)\in D_2(\mbsoln^*)}\delta_i(\ell) -
\sum_{(i,\ell)\in D_3(\mbsoln^*)}\delta_i(\ell) \\
& >
\sum_{(i,\ell)\in D_2(\mbsoln)}\delta_i(\ell) -
\sum_{(i,\ell)\in D_2(\mbsoln^*)}\delta_i(\ell) -
\sum_{(i,\ell)\in D_3(\mbsoln^*)}\delta_i(\ell) \\
\end{align*}
Now, let~$(i',\ell')=\argmin_{(i,\ell)\in D_2(\mbsoln)\setminus
D_2(\mbsoln^*)} \ell/\rho_i$. Also, recalling that~$D_2(\mbsoln^*)\subset
D_2(\mbsoln)$,
\begin{align*}
\totreward(\mbsoln)-\totreward(\mbsoln^*) & >
\sum_{(i,\ell)\in D_2(\mbsoln)}\delta_i(\ell) -
\sum_{(i,\ell)\in D_2(\mbsoln^*)}\delta_i(\ell) -
\sum_{(i,\ell)\in D_3(\mbsoln^*)}\delta_i(\ell) \\
& > \delta_{i'}(\ell') -
\sum_{(i,\ell)\in D_3(\mbsoln^*)}\delta_i(\ell) > 0\\
\end{align*}
where the inequality follows from Lemma~\ref{lem:cost}.
Thus, $\totreward(\mbsoln)>\totreward(\mbsoln^*)$ and this
contradicts the optimality of $\mbsoln^*$.
The theorem follows.
\qed

\medskip

\subsection{Proof of the Complexity of \maxsamrm{}}

\medskip

\noindent
\textbf{Statement of Theorem~\ref{thm:max_sat_agents_npc}:}
The \maxsamrm{}{} problem is \cnp-hard even when
the number of rounds ($k$) is 3.

\medskip

\noindent
\textbf{Proof:}~ 
To prove \cnp-hardness, we use a reduction from the Minimum Vertex Cover
Problem for Cubic graphs, which we denote as \mvcc{}.
A definition of this problem is as follows: given
an undirected graph $H(V_H, E_H)$ where the degree of each node is 3
and an integer $h \leq |V_H|$, is there a vertex cover of size at most $h$
for $H$ (i.e., a subset $V'_H \subseteq V_H$
such that $|V'_H| \leq h$, and for each edge $\{x,y\} \in E_H$, at least
one of $x$ and $y$ is in $V'_H$)?
It is known that \mvcc{} is \cnp-complete \cite{GareyJohnson79}.

\medskip
Given an instance $I$ of \mvcc{} consisting of graph $H(V_H, E_H)$
and integer $h \leq |V_H|$, we produce an instance $I'$ of \msmrm{}
as follows.

\begin{enumerate}
\item For each node $v_i \in V_H$, we create an agent $x_i$,
called a \textbf{special agent}.
For each edge $e_j \in E_H$, we create 
two agents $z_{j,1}$ and $z_{j,2}$, called \textbf{simple agents}.
Thus, the set $X$ of agents consists of $|V_H|$ special agents
and $2|E_H|$ simple agents for a total of $|V_H| + 2|E_H|$ agents.
\item For each edge $e_j \in E_H$, we create
a resource $y_j$.
Thus, the set $Y$ of resources has size $|E_H|$.

\item The edge set $E$ of the compatibility graph $G(X, Y, E)$
is constructed as follows.
For $a \in \{1,2\}$, each simple agent $z_{j,a}$ has an edge in $E$ only to
its corresponding resource $y_j$.
Further, for each edge $e_j = \{v_a, v_b\} \in E_H$,
$E$ has the two edges $\{x_a, y_j\}$ and $\{x_b, y_j\}$.
Since the degree of each node $v_i \in V_H$ is 3,
each special agent has exactly three compatible resources.
The edge set $E$ ensures that each resource is compatible
with exactly two simple agents and two special agents.
It follows that $|E| = 4|Y| = 4|E_H|$.

\item For each agent, the allowed set of rounds is $\{1, 2, 3\}$.

\item The requirement for each simple agent is 1 and
that for each special agent is 3.

\item The number $k$ of rounds is set to 3.

\item The parameter $Q$, the number of agents to
be satisfied, is given by $Q ~=~ (2|E_H| + |V_H| -h)$,
where $h$ is the bound on the size of a vertex cover for $H$.
\end{enumerate}
It can be seen that the above construction can be carried out
in polynomial time.
We now show that the resulting instance $I'$ of \msmrm{}
has a solution iff the \mvcc{} instance $I$ has a solution.

\smallskip
Suppose the \mvcc{} instance $I$ has a solution.
Let $V'_H$ denote a vertex cover of size $h$ for $H$.
We construct a solution for the instance $I'$ of \msmrm{}
consisting of three matchings $M_1$, $M_2$ and $M_3$
as follows.
Initially, these matchings are empty.
Let $X' \subseteq X$ consist of all the simple agents and
those special agents corresponding to the nodes of $H$ in $V_H - V'_H$.
(Thus, the special agents in $X'$ correspond to the nodes of $H$
which are \emph{not} in $V'_H$.)
First, consider each special agent $x_i \in X'$.
Let the three resources that are compatible with $x_i$ be denoted
by $y_a$, $y_b$ and $y_c$ respectively.
We add the edges $\{x_i, y_a\}$, $\{x_i, y_b\}$, and
$\{x_i, y_c\}$ to $M_1$, $M_2$ and $M_3$ respectively.
Since $V'_H$ is a vertex cover, each resource is used in
one of the three rounds by a special agent.
Thus, each of the two simple agents $z_{j,1}$ and $z_{j,2}$
for each resource $y_j$ can be matched in the two remaining
rounds. (For example, if agent $x_i$ is matched to $y_j$ in $M_1$,
$z_{j,1}$ and $z_{j,2}$ can be matched to $y_j$ in $M_2$ and $M_3$
respectively.)
Thus, in this solution, the only unsatisfied agents are the
$h$ special agents corresponding to the nodes in $V'_H$.
In other words, at least $2|E_H| + |V_H| -h$ agents are satisfied.

\smallskip

For the converse, suppose there is a solution to the \msmrm{}
instance $I'$ that satisfies at least
$2|E_H| + |V_H| -h$ agents.
Let $X'$ denote the set of satisfied agents in this solution.
We have the following claim.

\smallskip

\noindent
\textbf{Claim 1:}~
$X'$ can be modified without changing its
size so that it includes \emph{all} the simple agents.

\smallskip

\noindent
\textbf{Proof of Claim 1:}~
Suppose a simple agent $z_{j,a}$ for some $a \in \{1,2\}$
is not in $X'$ (i.e., $z_{j,a}$ is not a satisfied agent).
By our construction, $z_{j,a}$ is only compatible with resource $y_j$.
Since $y_j$ is compatible with two simple agents and two special agents,
the current solution must assign at least one special agent,
say $x_w$, to $y_j$ in one of the matchings.
In that matching, we can replace $x_w$ by $z_{j,a}$ so that $z_{j,a}$
becomes a satisfied agent and $x_w$ becomes an unsatisfied agent.
This change corresponds to adding $z_{j,a}$ to $X'$ and
deleting $x_w$ from $X'$. Thus, this change does \emph{not}
change $|X'|$. By repeating this process, we ensure that $X'$ contains
only special agents and $|X'| ~\geq~ 2|E_H| + |V_H| -h$.

\smallskip

We now continue the main proof.
From the above discussion,  all the unsatisfied agents, that is, the agents
in $X - X'$, are special agents and their number is at most $h$.
Let $V'_H$ denote the nodes of $H$ that correspond to the
agents in $X-X'$.
To see that $V'_H$ forms a vertex cover for $H$, consider
any edge $e_j = \{v_a, v_b\}$ of $H$.
We show that $V'_H$ contains at least one of $v_a$ and $v_b$.
There are four agents compatible with the resource $y_j$
that corresponds to $e_j$.
Two of these are simple agents and the other two are special agents.
However, the number of rounds is 3.
Therefore, there is at least one unsatisfied agent that is
compatible with $y_j$.
By Claim~1, such an unsatisfied agent is a special agent.
The only special agents that are compatible with $y_j$ are
$x_a$ and $x_b$. Thus, at least of one these agents
appears in $X-X'$.
By our construction of $V'_H$, at least one of $v_a$ and $v_b$
appears in $V'_H$. Further, $|V'| \leq h$.
Thus, $V'_H$ is a solution to the \mvcc{} instance $I'$.
This completes our proof of Theorem~\ref{thm:max_sat_agents_npc}. \hfill$\Box$

\bigskip

\section{Additional Material for Section~\ref{sec:algmsagmrm}}
\label{sup:sec:algmsagmrm}

\medskip

\subsection{Proof of the Complexity of \agmrm{}}

\medskip


\paragraph{Statement of Theorem~\ref{thm:hardness}.}
The \agmrm{} problem is NP-hard when there is just one
agent~$x_i$ for which~$\rho_i > 1$.

\medskip

\noindent
\textbf{Proof.}~ We use a reduction from the
Minimum Set Cover (MSC) problem which is known to
be \cnp-complete~\cite{GareyJohnson79}.
Consider an instance of MSC
with~$U=\{z_1,z_2,\ldots, z_t\}$,~$\setc=\{A_1,A_2,\ldots, A_q\}$, and an
integer~$\alpha\le|\setc|$. We will construct an instance of \agmrm{}{}
as follows. Let~$t =|U|$. 
Let degree of an element~$z\in U$,
denoted by~$\deg(z)$, be the number of sets
in~$\setc$ which contain $z$. 
For each~$z_j\in U$, we
create~$a=\deg(z_j)$ resources~$\rset_j=\{y_j^h\mid z_j\in A_h\}$. Thus,
the resource set is~$\rset = \bigcup_{z_j\in U}\rset_j$. For each~$\rset_j$,
we create a set of agents~$\agset_j = \{x_j^1,x_j^2,\ldots\}$,
where~$|\agset_j| = t\cdot\deg(z_j)-1$. For each~$z_j\in U$, we have an
edge for every~$x\in \agset_j$ and~$y\in \rset_j$. Finally, we add a
special agent~$x^*$ with edges to all resources. The set of agents
is~$\agset=\bigcup_{z_j\in U}\agset_j\cup\{x^*\}$. For each~$x\ne x^*$,
there are no labels on the edges that are incident on $x$. For each
edge~$\{x^*,y_j^h\}$, we assign a single label~$c_h$. Each label has
cost~$1$. This completes the construction of~$G_R$. For~$x^*$, the budget
$\beta^*=\alpha$, which is the budget for the MSC instance. For the
rest of the agents, the budget is~$0$. The number of repetitions
for~$x^*$,~$\rho^*=t$ and for the rest of the vertices~$x_i\ne
x^*$,~$\rho_i=1$. This completes the construction of the \agmrm{} instance,
and it can be seen that the construction can be done in polynomial time.
Now, we  show that a solution exists for the MSC
instance iff  a solution exists for the resulting \agmrm{} instance.

\smallskip
Suppose that there exists a solution to the \agmrm{} instance. By our
construction, since~$t\cdot\deg(z_j)-1$ agents in~$\agset_j$ are all
adjacent to the same set of~$\deg(z_j)$ resources, it follows that~$x^*$
can be matched to at most one resource in~$\rset_j$ in the~$k$ rounds.
The solution to the~$\agmrm{}$ instance matches~$x^*$ to~$t$
resources, one in each matching. Thus, $x^*$ is adjacent to
some~$y_j^h\in\rset_j$ for all~$j$. 
This means that the
collection~$\setc'=\{A_h\mid y_j^h\in\rset_j \text { and } x^*\text{
compatible with } y_j^h\}$ covers all the elements in $U$. 
Further, since at most~$\alpha$ labels were removed, it follows that~$|\setc'|=\alpha$. In other words, we have a solution to the
MSC instance.

\smallskip
Let~$\setc'$ be a solution to the MSC instance. We can construct a
solution with~$k$ matchings for the \agmrm{} instance as follows. First,
we remove restrictions~$c_h$ $\forall A_h\in \setc'$. Since~$\setc'$
covers~$U$, for every~$\rset_j$, there exists a resource~$y_j^h$ such
that~$A_h\in \setc'$ and therefore,~$x^*$ is compatible with~$y_j^h$. We
choose exactly one such resource from each~$\rset_j$. Let this chosen set
of resource be denoted by~$\rset'$.  Since~$t=|U|$ matchings are required
and~$|\rset'|=t$, we can choose a distinct~$y\in \rset'$ to match~$x^*$.
This means that each resource in~$Y_j\setminus\rset'$ is available to be
matched to an agent in~$t$ rounds while the resource~$y\in \rset_j\cap\rset'$
can be matched in~$t-1$ rounds. Since there are exactly~$t\cdot\deg(z_j)-1$
resources in~$\agset_j$, each of them can be matched in one of the~$t$
rounds. Hence, we have a solution to the \agmrm{} instance and this completes our proof 
of \cnp-hardness. \qed

\medskip


\subsection{An ILP Formulation for \agmaxsamrm{}}

\medskip

\begin{table}[htbp]
\begin{center}
\begin{tabular}{|c|p{4.5in}|}\hline
\textbf{Symbol} & \textbf{Explanation}\\ \hline\hline
\agset{} & Set of $n$ agents; \agset{} = $\{x_1, x_2, \ldots, x_n\}$ \\ \hline
\rset{} & Set of $m$ resources; \rset{} = $\{y_1, y_2, \ldots, y_m\}$ \\ \hline
$\calc_i$ & Set of labels $\{c_i^1, c_i^2, \ldots, c_i^{\ell_i}\}$
            of associated with agent $x_i$, $1 \leq i \leq n$.
            (Thus, $\ell_i = |\calc_i|$.)
            \\ \hline
$\psi_i^t$ & Cost of removing the label $c_i^t$, $1 \leq t \leq \ell_i$ and
               $1 \leq i \leq n$.  \\ \hline
$\beta_i$ & Cost budget for agent $x_i$, $1 \leq i \leq n$.
            (This is the total cost agent $x_i$ may spend
             on removing labels.)
            \\ \hline
$\rho_i$  & The desired number of rounds requested by an 
            agent. \\ \hline
$\Gamma_e$ & Set of labels appearing on edge $e$. (If an edge $e$ joins
             agent $x_i$ to resource $y_j$, then $\Gamma_e \subseteq \calc_i$.)
             Edge $e$ can be added to the compatibility graph only
             after \emph{all} the labels in $\Gamma_e$ are removed.
            \\ \hline
$k$ & Upper bound on the number of rounds of matching \\ \hline
\end{tabular}
\end{center}
\caption{Notation Used in the Definition of \agmaxsamrm{}{}{}{}
and its ILP Formulation}
\label{tab:notation}
\end{table}

We now provide an ILP formulation for the \agmaxsamrm{} problem.
Recall that the goal of this problem is to generate advice to agents
(i.e., relaxation of restrictions) subject to budget constraints
so that there is $k$-round matching that satisfies the maximum
number of agents.
For an agent $x_i$ and resource $y_j$, 
if the edge $e
= \{x_i, y_j\}$ does \emph{not} appear in the restricted graph, it is
assumed the cost of removing the labels associated with that edge is larger than agent $x_i$'s budget.  
This will ensure that the edge won't appear in the compatibility graph.
The reader may want to consult
Table~\ref{tab:notation} for the notation used in this
ILP formulation.

\medskip

\noindent
\textbf{\ilpagmaxsamrm{}:}

\medskip
\noindent
\textbf{Note:}~ This ILP is also referred to as
\textsc{ILP-MaxSA-AG-MRM}.

\medskip

\noindent
\underline{\textsf{Variables:}} 
Throughout this discussion, unless specified otherwise, 
we use $\forall i$, $\forall j$ and $\forall r$~ to
mean ~$1\le i\le n$, $1\le j\le
m$, and $1\le r\le k$ respectively.

\begin{itemize}[leftmargin=*,noitemsep,topsep=0pt]

\item For each label $c_i^t$, a \{0,1\}-valued variable $w_i^t$,
$1 \leq t \leq \ell_i$ such that
$w_i^t$ is 1 iff label $c_i^t$ 
is removed.

\item For each agent $x_i$ and resource $y_j$,
a \{0,1\}-valued variable~$z_{ij}$ such that
$z_{ij}$ is 1 iff the edge $\{x_i, y_j\}$
appears in the compatibility graph (after removing labels).

\item 
For each agent $x_i$, 
and each resource $y_j \in \rset$,
we have $k$ \{0,1\}-variables
denoted by $a_{ijr}$.
The interpretation is that  $a_{ijr}$ is 1 iff edge $\{x_i, y_j\}$
is in the compatibility graph and agent $x_i$ is matched
to resource $y_j$ in round $r$.
\item
For each agent $x_i$, we have an integer
variable $\eta_i$
that gives the number of rounds in which $x_i$ is matched.
\item
For each agent $x_i$, we have a \{0,1\}-variable $s_i$
that is 1 iff agent $x_i$'s requirement
(i.e., $\eta_i \geq \rho_i$) is satisfied.
\end{itemize}


\medskip

\noindent
\underline{\textsf{Objective:}}~
Maximize $\sum_{i=1}^{n} s_i$.

\medskip

\noindent
\underline{\textsf{Constraints:}} 
\begin{itemize} 
\item For each agent $x_i$, the cost of removing the labels from $\calc_i$
must satisfy the budget constraint:
\(\forall i~
\sum_{t=1}^{\ell_i} \psi_i^t\,w_i^t ~\leq~ \beta_i\,.
\)

\item The edge $e = \{x_i, y_j\}$ appears in the graph only if
\emph{all} the labels in $\Gamma_e$ are removed.
\begin{align*}
z_{ij} \leq~& w_i^t ~~ \mathrm{for~each~} c_i^t \in \Gamma_e \\
z_{ij} \geq~& \bigg[\sum_{c_i^t \in \Gamma_e} w_i^t\bigg] - (|\Gamma_e| -1)
\end{align*}
The first constraint ensures that if any of the labels on the edge
$\{x_i, y_j\}$ is not removed, edge $\{x_i, y_j\}$ won't be added
to the compatibility graph.
The second constraint ensures that if all of the labels on the edge
$\{x_i, y_j\}$ are removed, edge $\{x_i, y_j\}$ \emph{must be}  added
to the compatibility graph.
These constraints are for $\forall i$ and $\forall j$.

\item If an edge $\{x_i, y_j\}$ does not have any labels on it,
then it is included in the compatibility graph:
\(
z_{ij} ~=~1 ~~ \mathrm{for~each~} e =\{x_i,y_j\} ~\mathrm{with}~ \Gamma_e = \emptyset{}.
\) This constraint is for $\forall i$ and $\forall j$.

\item If edge $\{x_i, y_j\}$ is not in the compatibility graph, 
each variable $a_{ijr}$ 
must be set to 0: 
\(
    a_{ijr} ~\leq~ z_{ij},~ \forall i,~ \forall j,~ \forall k.
\)

\item
For each agent $x_i$, the number of rounds in which $x_i$
is matched should be equal to $\eta_i$:
\(
    \eta_i = \sum_{r=1}^{k}\, \sum_{j=1}^{m} a_{ijr},~ \forall i~
\)


\item Each agent $x_i$ is matched to a resource \emph{at most once}
in each round:
\(
    \sum_{j = 1}^m\: a_{ijr} ~\leq~ 1,~ \forall i,~ \forall r.
\)

\item For each resource $y_j$, 
in each round, $y_j$ must be matched to an agent
\emph{at most} once: 
\(
    \sum_{i = 1}^{n} a_{ijr} ~\leq~ 1,~ \forall j,~ \forall r. 
           \)

\item The rounds in which an agent $x_i$ is matched
must be in $K_i$: 
\(
    a_{ijr} ~=~ 0, ~~ r \not\in K_i,~ \forall i,~ \forall j.
 \)

\item
Variable $s_i$ should be 1
iff agent $x_i$ was matched
in at least $\rho_i$ rounds. The following constraints,
which must hold $\forall i$, ensure this
condition.
\begin{align}\label{eqn:s-i-ineq}
\eta_i - \rho_i + 1 \le k\,s_i \le \eta_i - \rho_i + k
\end{align}

\item $w_i^t \in \{0,1\}$, $z_{ij} \in \{0,1\}$, $a_{ijr} \in \{0,1\}$, 
$\eta_i$ is a non-negative integer, and $s_i \in \{0,1\}$.
\end{itemize}

\medskip

\smallskip

\noindent
\underline{\textsf{Obtaining a Solution from the ILP:}}~ 
For each variable $w_i^t$ set to 1, the corresponding
label $c_i^t$ is removed.
These variables can be used to get the set of all labels
each agent must remove; this is the advice that is given to
each agent.
For each variable $a_{ijr}$ which is set to 1 in the
solution, we match agent $x_i$ with resource $y_j$ in round $r$.

\bigskip

\section{Additional Material for Section~\ref{sec:experiments}}
\label{sup:sec:experiments}

\medskip

\subsection{Additional Classroom Example with Restrictions}
\label{app_sec:addl_example}

\begin{figure}[hbt]
\rule{\columnwidth}{0.01in}
\centering
\includegraphics[width=.5\columnwidth]{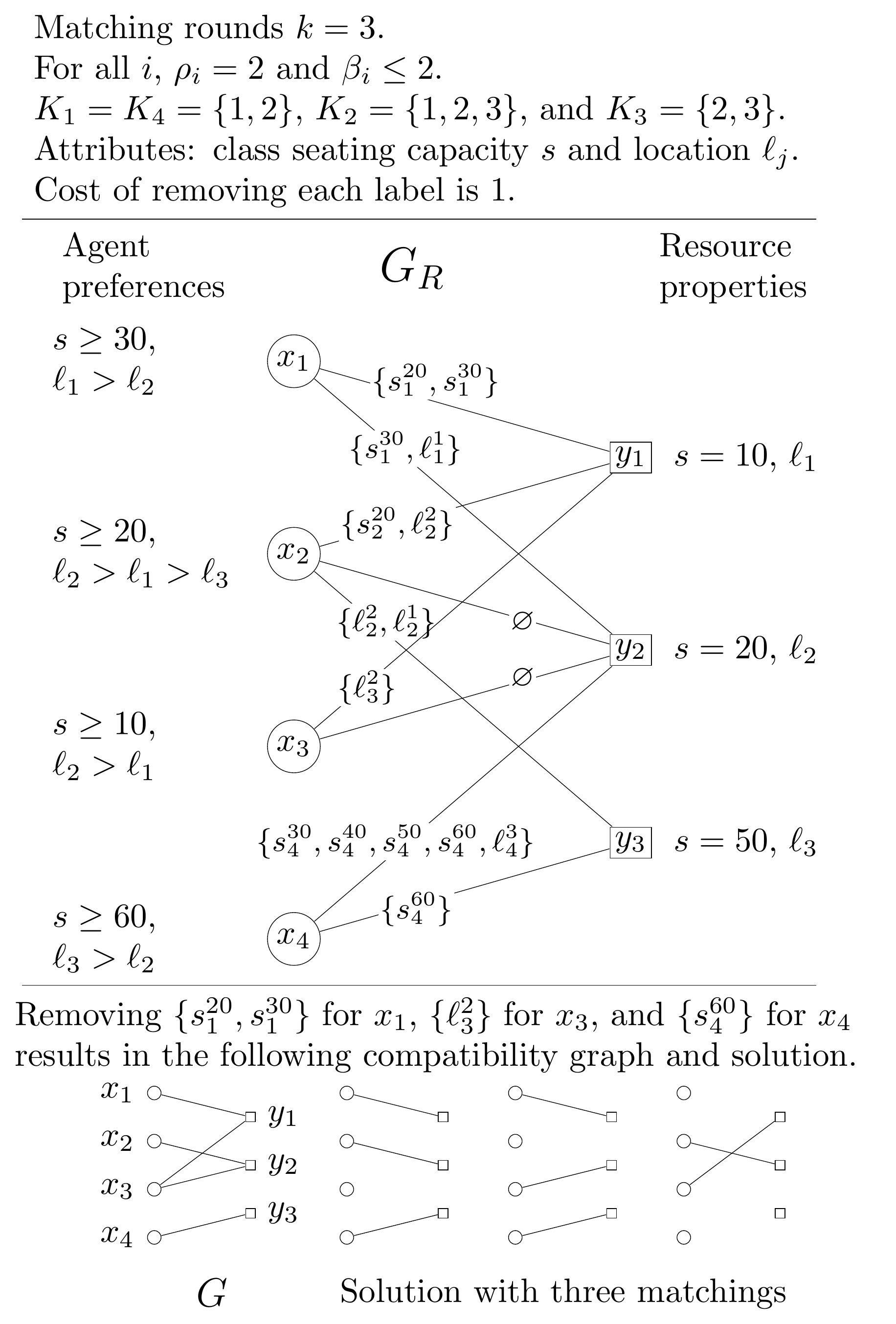}
\caption{An example of the $\agmrm$ problem.\label{fig:example2}}
\rule{\columnwidth}{0.01in}
\end{figure}

\medskip

\paragraph{A course-classroom example.} This example is motivated by the classroom
assignment application considered in this work. See
Figure~\ref{fig:example2}. Suppose there are agents  (cohorts) who need to
be assigned to classrooms on three days ($k=3$) at some specific time such
that each agent is assigned a room at least twice ($\forall i,\,\rho_i=2$).
There are three classrooms. Each agent has preferences regarding the
minimum seating capacity of the room, its location, and the rounds in which
it would like to be matched. For example, agent~$x_1$ requires the room to
have a  capacity (i.e., space) of~$\ge30$,  prefers a room in
location~$\ell_1$ over one in~$\ell_2$, and does not want a room
in~$\ell_3$ (a hard constraint). Also, it wants to be matched in rounds~$1$
and~$2$.  Resource~$y_1$ has capacity~$10$ and is situated in
location~$\ell_1$.  The restrictions graph based on agent preferences and
resource properties is constructed as follows. To minimize the number of
labels, we will assume that each agent can relax its capacity constraint
only in steps of~$10$.  For~$x_1$ to be compatible with~$y_1$, it must
relax its capacity requirement from~$30$ to~$10$.  To represent this, we
introduce two labels~$s_1^{30}$ and~$s_1^{20}$ on the edge between~$x_1$
and~$y_1$.  Removing $s_1^{30}$ implies that~$x_1$ has relaxed its capacity
restriction from~$30$ to~$20$.  If a resource is situated in location
$\ell_j$, then labels corresponding to all locations above~$\ell_j$ in the
agent's preference order are assigned to the edge.  For example, the edge
between~$x_2$ and~$y_3$ has labels~$\ell_2^2$ and~$\ell_2^1$ as~$y_3$
belongs to region~$\ell_3$, which is its least preferred location.  The
restriction graph for this problem is shown in the middle panel of
Figure~\ref{fig:example2}.  A solution to $\agmrm$ in the bottom panels
of the figure.

\subsection{Generation of restrictions graphs}

\medskip

In the real-world datasets, the agent restrictions are mostly induced by
the properties of resources (room size, WiFi connectivity, location, etc.).
Three different types of restrictions are considered: binary attributes
such as yes/no and present/absent; ordinal attributes such as preferences;
and quantitative attributes. The example in Section~\ref{sec:definitions}
has binary attributes. We discretize quantitative attributes like capacity
by generating labels in steps of some fixed value. Suppose an agent~$x$
prefers a resource~$y$ of capacity~5, but~$y$ has capacity~3, then, we add
labels~$\ell_5$ and~$\ell_4$ to the restrictions set of edge~$\{x,y\}$. To relax the capacity constraint and allow an agent to be matched to a
resource of capacity~3, the two labels $\ell_3$ and $\ell_4$ must be
removed. For ordinal attributes we use the following method: suppose~$x$
has the following preference order~$a>b>c$ for some attribute values~$a$, $b$,
and $c$ and~$y$ has attribute~$b$, we add restrictions~$\ell_a$ to the
restrictions set of edge~$\{x,y\}$. Relaxing the restriction to
accommodate~$b$ corresponds to removing~$\ell_a$. 
For both
real-world datasets, each round corresponds to a day.  The cost is assigned
as follows: for binary attributes, the cost is~1, while for quantitative
and ordinal attributes, the cost on a label depends on the distance from
the preferred value: in the above example, the cost~$\ell_5$ is~1 while the
cost of~$\ell_4$ is~2. The same applies to ordinal attributes.

\medskip

\subsection{An Additional Figure for the Lab-Space dataset}
\medskip

As in Figure~\ref{fig:ls_total}(d), Figure~\ref{fig:ag2students} shows that there is a sharp increase in the number of agents satisfied for a
budget of~4. Figure~\ref{fig:ag2students} also implies that the performance of
\heuristicagmrm{} is close to that of optimum (given by \ilpagmaxsamrm{}). Finally,  when budget is high,  the fraction of agents satisfied in the latter is reaching $100\%$ while in the former it is at most ~$90\%$.

\begin{figure}
\centering
\includegraphics[width=.48\columnwidth]{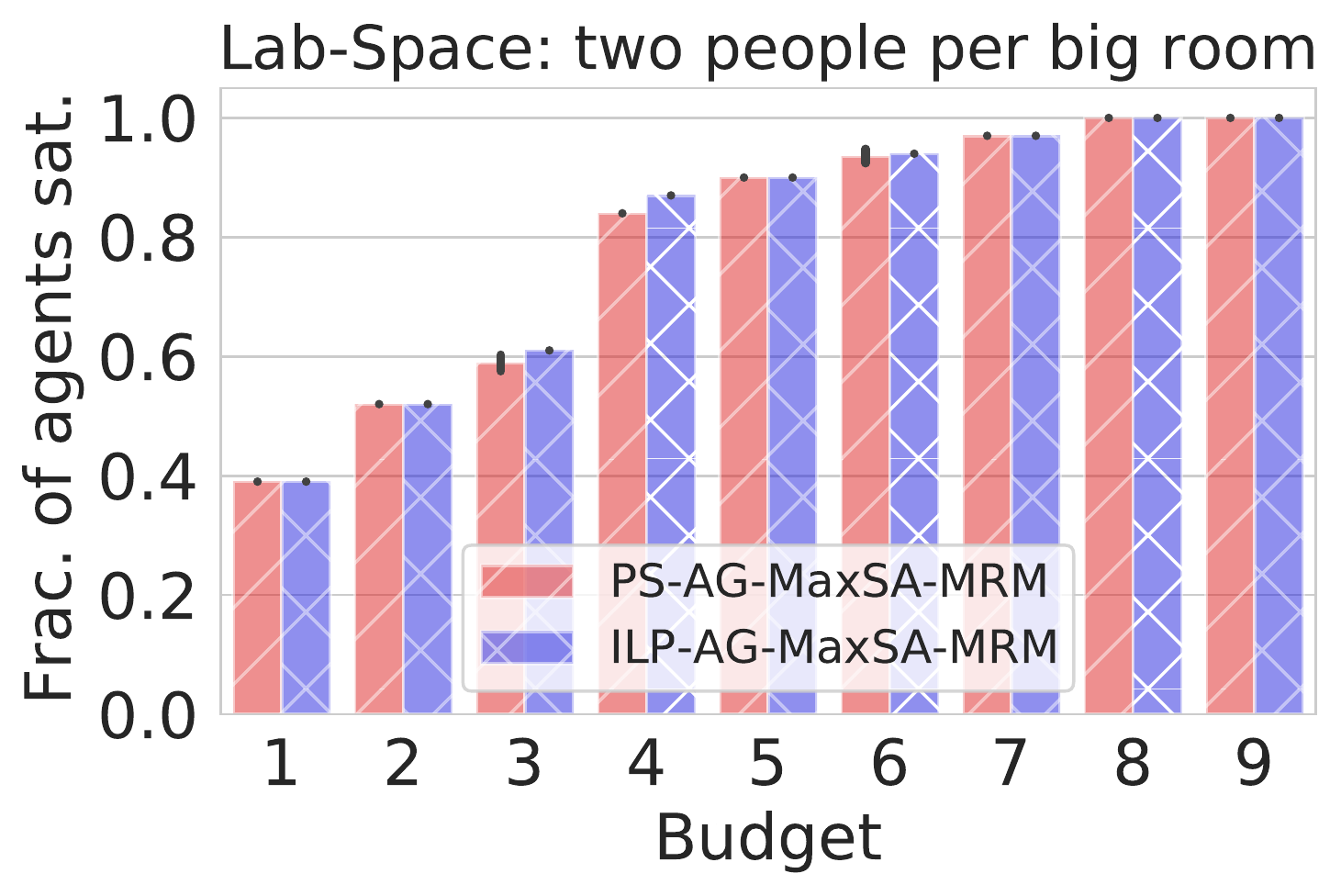}
\caption{\ls{} experiment results: Evaluation
of the advice generation algorithm for increasing budget when each big room may contain up to two students.
\label{fig:ag2students}
}
\end{figure}
\medskip

\subsection{An Additional Figure for the Course-Classroom dataset}
\medskip

In Figure~\ref{fig:prbresources} we
demonstrate how critical the current set of resources is for having a satisfactory solution for the ~\cc{} application.
We withheld only a portion of the resources (by sampling)
 to satisfy the agents' preferences. The motivation behind this is that some of the university's classrooms are to be dynamically allocated for some conferences and other events during the year, and the 
 \prin{} needs to decide in advance what capacity to keep free for such events.
The graph shows that with 80\% of the resources, it is possible to achieve an average assignment ratio of about ~$0.8$.

\begin{figure}
\centering
\includegraphics[width=.48\columnwidth]{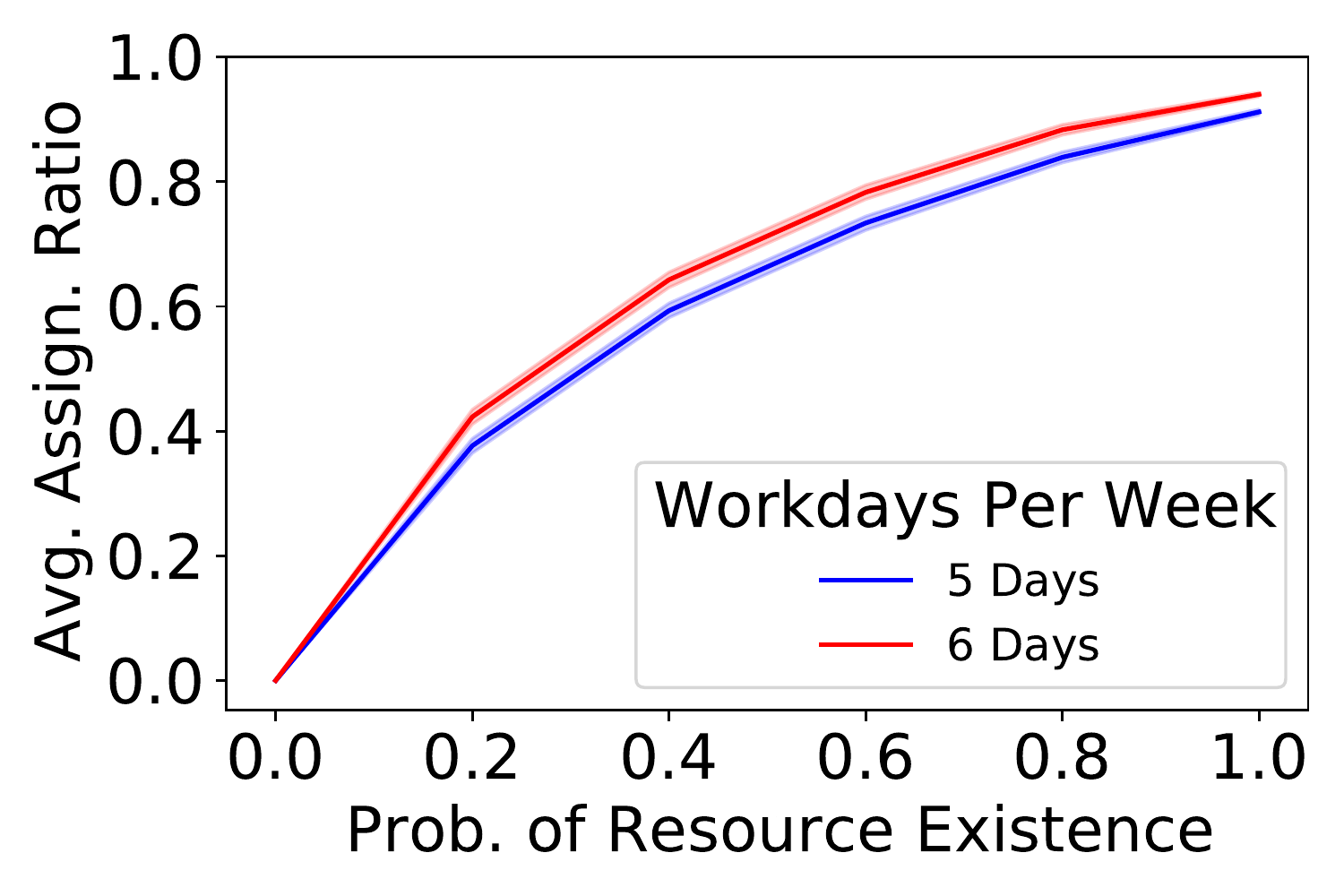}
\caption{\cc{} results. Average assignment ratio when
the number of resources is restricted.
\label{fig:prbresources}
}
\end{figure}

\medskip
\subsection{Additional Figures for Synthetic Graphs}
\medskip
To analyze the performance of the algorithms with respect to size of the
network, we consider complete bipartite graphs where the number of
agents~$n$ and resources~$m$ are the same. The number of agents (same as
resources) is varied from~50 to~200. There are~10 restrictions per agent
assigned costs randomly between~1 and~4. For each edge, we randomly assign
a restrictions subset of size at most five.  For each agent~$x_i$, the
number of rounds~$\rho_i$ is chosen randomly between~$1$ and~$5$.  
To generate~$K_i$,~$\rho_i$ rounds are sampled uniformly followed by choosing
the remaining rounds with probability~$0.5$.  
The results in
Figure~\ref{fig:synth} show the performance of \heuristicagmrm{} with
respect to \ilpagmaxsamrm{} in terms of solution quality and computation
time. We note that not only is the solution obtained using the
\heuristicagmrm{} close to the ILP-based algorithm, but it is also orders of magnitude faster.

\begin{figure}
\centering
\includegraphics[width=.48\columnwidth]{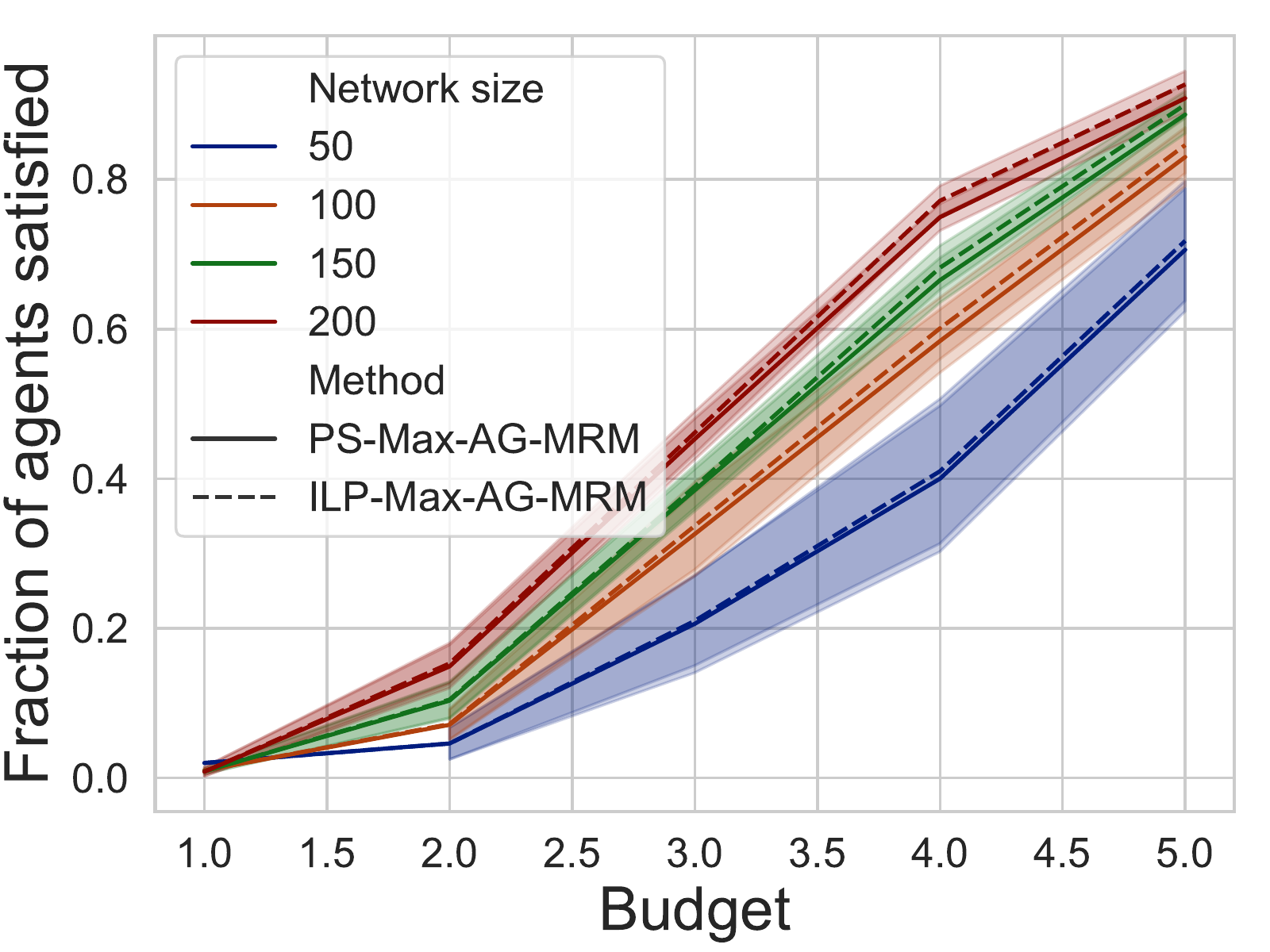}
\includegraphics[width=.48\columnwidth]{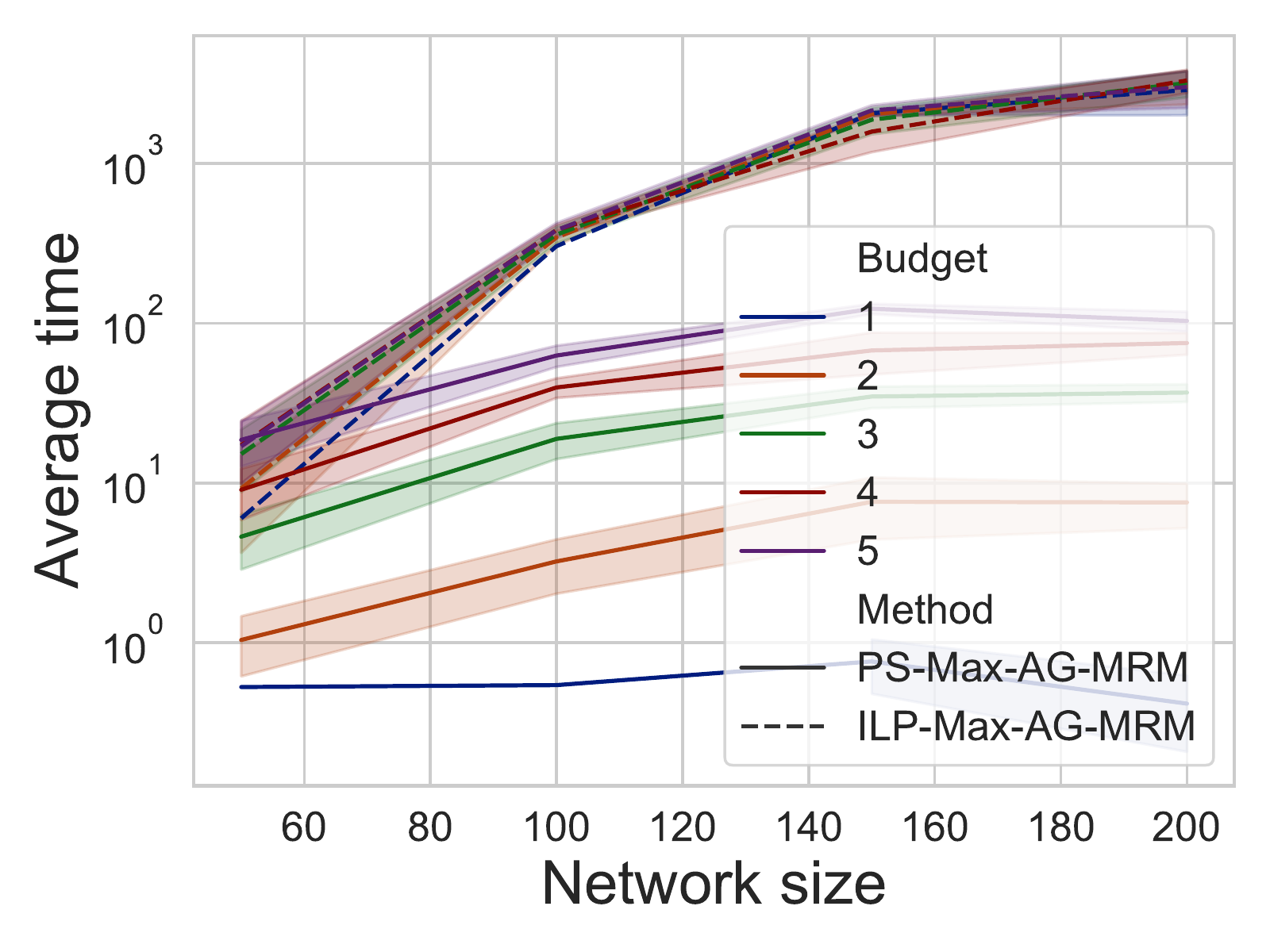}
\caption{Results for synthetic graphs: Solution quality and computation
    time.
\label{fig:synth}
}
\end{figure}

\medskip

\subsection{Implementation Details}

\medskip

All the software modules used to implement our algorithms and heuristics
were written in  Python 3.9. Graphs were generated and graph algorithms
were implemented using the Networkx package \cite{hagberg2008exploring}.
For local search, we used the simulated annealing implementation by Simple
AI \cite{simpleai}. We use “state” to be a set of relaxations per agent
(only the relevant ones as described).  Energy is the number of agents
satisfied or the number of successful matchings, depends on the specific
problem.  We use the geometric schedule\footnote{Kirkpatrick, Scott, C.
Daniel Gelatt, and Mario P. Vecchi. ``Optimization by simulated
annealing.'' Science 220.4598 (1983): 671-680.}. By trial and error, we
found the best setting of T1 to be 0.99 and T0 to be 100. If the
temperature decreased beyond 0.01, we set it to be 0.01. We stopped after
1000 iterations, moved back to the best solution if there was no
improvement in 40 iterations and returned the best found solution. We used
Gurobi Optimizer~\cite{gurobi} for solving the Integer Linear Problems.
The advice generation experiments were performed on a Linux machine equipped with two Intel
2.0 GHz Xeon Gold 6138 CPU processors (40 cores and 80 threads in total)
and 192 GB of memory. The other experiments, which required fewer resources, were performed using the 'Google Colab' platform.
In all places where random numbers were generated, we used the default seeds provided by python 3.9 and the Linux operating system we used.

\section{A General Result on Maximizing Utilitarian Welfare}
\label{sup:sec:gen_util_theorem}

\medskip

\begin{theorem}
\label{thm:total}
Suppose the benefit function $\mu_i$ of each agent $x_i$
is valid and strictly monotone 
increasing in the number of rounds. 
Any  solution that maximizes the total benefit 
corresponding to these benefit functions also maximizes
the utilitarian welfare, i.e., it maximizes the total number of rounds assigned to the agents.
\end{theorem}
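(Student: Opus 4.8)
The plan is to reduce the statement to a single augmenting-path exchange and then invoke strict monotonicity. First I would model any $k$-round matching as an integral $s$--$t$ flow in a layered capacitated network $N$: a source $s$ with an arc of capacity $\rho_i$ to each agent node $x_i$; a unit-capacity arc from $x_i$ to an \emph{agent--round} node $(x_i,h)$ for each $h\in K_i$; a unit-capacity arc from $(x_i,h)$ to a \emph{resource--round} node $(y_j,h)$ whenever $\{x_i,y_j\}\in E$; and a unit-capacity arc from each $(y_j,h)$ to the sink $t$. Integral flows in $N$ correspond exactly to $k$-round matchings: the flow on the arc $s\to x_i$ equals the number $\gamma_i(\mbsoln)$ of rounds assigned to $x_i$, the total flow value equals the utilitarian welfare $\sum_i\gamma_i(\mbsoln)$, and the capacities enforce the at-most-$\rho_i$, one-per-round, and $K_i$ constraints.

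Next I would establish the key lemma: if $\mbsoln$ has welfare $T$ while some $k$-round matching has welfare strictly greater than $T$, then there is a $k$-round matching $\mbsoln'$ whose round-vector dominates that of $\mbsoln$ with \emph{exactly one} coordinate increased by one. By standard max-flow theory, the residual graph of the flow corresponding to $\mbsoln$ contains a simple augmenting $s$--$t$ path $P$, and pushing one unit along $P$ yields an integral flow of value $T+1$. The crux is to show that this augmentation raises exactly one $\gamma_i$ and lowers none, which I would argue from flow conservation at the agent nodes: $P$ leaves $s$ along a single forward arc $s\to x_{i_0}$, raising $\gamma_{i_0}$ by one, and since $P$ is simple it cannot return to $s$; hence at no internal agent node $x_i$ can $P$ traverse $s\to x_i$ or its reverse $x_i\to s$, so any internal visit to $x_i$ enters via some $(x_i,h)\to x_i$ and leaves via some $x_i\to(x_i,h')$, merely rerouting $x_i$ between rounds and leaving the $s\to x_i$ flow---and thus $\gamma_i$---unchanged. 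This conservation argument is the step I expect to require the most care.

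Finally I would argue the theorem by contradiction. Suppose $\mbsoln$ maximizes total benefit but not utilitarian welfare. Applying the lemma gives $\mbsoln'$ with $\gamma_{i_0}(\mbsoln')=\gamma_{i_0}(\mbsoln)+1\le\rho_{i_0}$ for a single agent $x_{i_0}$ and all other counts unchanged. Since $\mu_{i_0}$ is strictly monotone increasing, the incremental benefit $\delta_{i_0}\big(\gamma_{i_0}(\mbsoln)+1\big)$ is strictly positive, so
\[
\totreward(\mbsoln')=\totreward(\mbsoln)+\delta_{i_0}\big(\gamma_{i_0}(\mbsoln)+1\big)>\totreward(\mbsoln),
\]
contradicting the optimality of $\mbsoln$. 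Hence every benefit-maximizing solution also maximizes the total number of assigned rounds. I would emphasize that only strict monotonicity is needed for this argument; the remaining validity properties (P1, P3, P4) are used elsewhere to guarantee that such a benefit-maximizing solution can actually be computed efficiently via \algmaxtbmrm{}.
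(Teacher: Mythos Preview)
Your argument is correct and takes a genuinely different route from the paper's. The paper proves the theorem by a hybrid exchange construction: assuming $\mbsoln^*$ maximizes total benefit while some $\mbsoln$ has strictly larger utilitarian welfare, it iteratively edits $\mbsoln$ into a solution $\mbsoln'$ that dominates $\mbsoln^*$ componentwise (i.e., $\gamma_i(\mbsoln')\ge\gamma_i(\mbsoln^*)$ for all $i$) while never decreasing the total number of assignments, and then invokes strict monotonicity to get $\totreward(\mbsoln')>\totreward(\mbsoln^*)$. You instead encode $k$-round matchings as integral $s$--$t$ flows and use a single augmenting path, arguing from simplicity of the path and conservation at agent nodes that the augmentation raises exactly one $\gamma_{i_0}$ by one and leaves every other count unchanged. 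Both proofs finish with the same strict-monotonicity step. Your flow argument is shorter and localizes the exchange to a single coordinate, which makes the final inequality immediate; the paper's construction is self-contained (no appeal to residual graphs or max-flow theory) but requires separate lemmas for termination and for non-decrease of the total. Your closing remark that only strict monotonicity is actually used here, with the remaining validity properties (P1, P3, P4) needed only for the efficiency of \algmaxtbmrm{}, is correct and worth retaining.
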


\noindent
\textbf{Proof:}~
The proof is by contradiction. Suppose~$\mbsoln^*$ is a  solution
that provides the maximum benefit when each
benefit function satisfies the properties 
mentioned in the theorem. We will show that if there exists
a solution~$\mbsoln$ with number of total assignments greater than that
of~$\mbsoln^*$, then, that solution can be modified to a solution~$\mbsoln'$ with  benefit~$\totreward(\mbsoln')>\totreward(\mbsoln^*)$, contradicting the fact
that~$\mbsoln^*$ is an optimal solution.

For each agent $x_i$, Let $\gamma_i(\mbsoln)$ be the number of matchings
in which $x_i$ participates in $\mbsoln$.

\paragraph{Steps for creating $\mbsoln'$ given $\mbsoln$ and 
$\mbsoln^*$.}
\begin{itemize}
    \item Let~$t=0$ and initialize $\mbsoln'_t=\mbsoln$. 
    \item while$\big(\exists x_i$ for which $\gamma_i(\mbsoln^*)>
    \gamma_i(\mbsoln'_t) \big)$:
    \begin{itemize}
        \item Let~$\mbsoln'_{t+1}=\mbsoln'_t$
        \item pick $\gamma_i(\mbsoln^*)-\gamma_i(\mbsoln_t')$ rounds in 
        which $x_i$ participates in $\mbsoln^*$ but not in $\mbsoln_t'$. 
        \item In each round, match $x_i$ to the 
        same resource it was matched to in $\mbsoln^*$. If that resource is
        already matched to another agent in $\mbsoln_t'$, then replace that
        corresponding edge with the new edge.
        \item $t\leftarrow t+1$
    \end{itemize}
    \item return $\mbsoln'=\mbsoln'_t$.
\end{itemize}

\begin{lemma}
In each iteration of the algorithm, the total number of rounds assigned to all agents does
not decrease, i.e., the total number of rounds in~$\mbsoln'_t$ is at least as large  as that in~$\mbsoln'_{t-1}$.
\end{lemma}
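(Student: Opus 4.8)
The plan is to track the change in the total number of assigned rounds one round at a time. For a solution $\mbsoln$, write its round-$r$ matching as $M_r(\mbsoln)$, so that the total number of assigned rounds equals $\sum_{r=1}^{k}|M_r(\mbsoln)|$. Consider the iteration that produces $\mbsoln'_t$ from $\mbsoln'_{t-1}$ by processing some agent $x_i$ with $\gamma_i(\mbsoln^*)>\gamma_i(\mbsoln'_{t-1})$, and let $S$ be the set of rounds selected in this iteration, i.e., rounds in which $x_i$ participates in $\mbsoln^*$ but not in $\mbsoln'_{t-1}$. The first observation I would make is that the modifications in distinct rounds do not interfere: round $r$ of $\mbsoln'_t$ is obtained from round $r$ of $\mbsoln'_{t-1}$ and round $r$ of $\mbsoln^*$ alone, and the per-round matchings $M_1,\ldots,M_k$ are independent. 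Hence it suffices to bound the change in $|M_r|$ for each $r\in S$ separately and sum.

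Next I would carry out the per-round case analysis. Fix $r\in S$. By the choice of $S$, agent $x_i$ is unmatched in round $r$ of $\mbsoln'_{t-1}$, and there is a resource $y$ to which $x_i$ is matched in round $r$ of $\mbsoln^*$. If $y$ is free in round $r$ of $\mbsoln'_{t-1}$, we simply add the edge $\{x_i,y\}$, so $x_i$ gains one assignment and no agent loses one, giving a net change of $+1$ in $|M_r|$. If instead $y$ is matched to some other agent $x_j$ in round $r$ of $\mbsoln'_{t-1}$, we delete $\{x_j,y\}$ and add $\{x_i,y\}$, so $x_i$ gains one assignment while $x_j$ loses exactly one, giving a net change of $0$. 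In both cases the resulting round-$r$ matching is still a matching, since each of $x_i$ and $y$ is incident to exactly one edge, and $|M_r|$ does not decrease.

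Summing the per-round changes over all $r\in S$, the total number of assigned rounds in $\mbsoln'_t$ is at least that in $\mbsoln'_{t-1}$, which is the claim. Along the way I would also record that the selection step is well-defined: the number of rounds in which $x_i$ participates in $\mbsoln^*$ but not in $\mbsoln'_{t-1}$ is at least $\gamma_i(\mbsoln^*)-\gamma_i(\mbsoln'_{t-1})$, because the two participation sets overlap in at most $\gamma_i(\mbsoln'_{t-1})$ rounds, so enough rounds are available to form $S$.

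There is no real obstacle in this argument; the only point requiring care is the bookkeeping in the second case, where I must confirm that the ``replace'' operation removes \emph{exactly} one edge, so that exactly one agent loses exactly one assignment rather than disturbing other resources or other rounds. Since the replacement is confined to round $r$ and to the single resource $y$, this is immediate, and the net effect of each processed round on $\sum_r|M_r|$ is never negative.
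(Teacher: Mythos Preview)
Your argument is correct and is essentially the same as the paper's, just more detailed: the paper's proof is the one-line observation that in constructing $\mbsoln'_t$, for every edge added at most one edge is removed from $\mbsoln'_{t-1}$, which is exactly the content of your per-round case analysis.
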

\begin{proof}
In the construction of~$\mbsoln'_t$, for every edge added, at most one
edge is removed from~$\mbsoln'_{t-1}$.
\end{proof}

\begin{lemma}
The algorithm for creating $\mbsoln'$ terminates. 
\end{lemma}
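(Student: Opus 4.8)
The plan is to exhibit a bounded, integer-valued potential that strictly increases with each pass through the while-loop, so that only finitely many passes are possible. Define
\[
\Phi(t) ~=~ \big|\{(x_i,r) : x_i \text{ is matched to the same resource in round } r \text{ in both } \mbsoln^* \text{ and } \mbsoln'_t\}\big|.
\]
Since $\Phi(t)$ counts a subset of the matched agent--round pairs of $\mbsoln^*$, it is bounded above by $\sum_{i=1}^n \gamma_i(\mbsoln^*) \le nk$, a finite quantity independent of $t$. Thus it suffices to show that each loop iteration strictly increases $\Phi$.

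First I would analyze the edges \emph{added} in one iteration. When the loop processes an agent $x_i$ with $\gamma_i(\mbsoln^*) > \gamma_i(\mbsoln'_t)$, it selects $\gamma_i(\mbsoln^*)-\gamma_i(\mbsoln'_t)\ge 1$ rounds in which $x_i$ participates in $\mbsoln^*$ but not in $\mbsoln'_t$, and in each such round $r$ it adds the edge $\{x_i,y_j\}$, where $y_j$ is the resource matched to $x_i$ in round $r$ of $\mbsoln^*$. Because $x_i$ was unmatched in round $r$ of $\mbsoln'_t$, this edge is genuinely new and agrees with $\mbsoln^*$; hence each added edge contributes $+1$ to $\Phi$, for a total gain of at least one per iteration.

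The key step, and the main thing to get right, is that the edges \emph{deleted} in the same iteration never decrease $\Phi$. When $\{x_i,y_j\}$ is inserted in round $r$, the only edge possibly removed is some $\{x_{i'},y_j\}$ of round $r$ with $x_{i'}\ne x_i$. In round $r$ of $\mbsoln^*$, resource $y_j$ is matched to $x_i$ (this is precisely how $y_j$ was chosen); since $\mbsoln^*$ restricts to a matching in each round, $y_j$ has no other partner in round $r$ of $\mbsoln^*$. Consequently $\{x_{i'},y_j\}$ with $x_{i'}\ne x_i$ could not have agreed with $\mbsoln^*$, so its deletion removes no pair counted by $\Phi$. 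Equivalently, once an agent--round pair agrees with $\mbsoln^*$ it is never destroyed, so $\Phi$ is non-decreasing throughout the run.

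Combining these observations, every iteration strictly increases the integer $\Phi(t)$ while $\Phi$ stays bounded by $nk$; therefore the loop executes at most $nk$ times and the algorithm terminates. I expect the only delicate point to be the deletion argument, which hinges entirely on $\mbsoln^*$ being a proper matching within each round; the remaining bookkeeping (the bound on $\Phi$ and the strict per-iteration gain) is routine.
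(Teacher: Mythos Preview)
Your argument is correct and is essentially the paper's own proof, just stated more explicitly as a potential-function argument: the paper also observes that every iteration adds at least one edge of $\mbsoln^*$ to $\mbsoln'_t$ and that such edges are never subsequently removed. Your potential $\Phi$ is precisely the count of those shared edges, so the two proofs coincide; your version merely spells out why the deleted edge cannot be counted by $\Phi$ (namely, that $\mbsoln^*$ is a matching in each round), which the paper leaves implicit.
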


\begin{proof}
First, in each iteration, the algorithm adds at least one edge from $\mbsoln^*$ to $\mbsoln'$.
Second, once an edge is added it will never be removed since each edge that is added belong to one of the matchings in $\mbsoln^*$.
\end{proof}

\begin{lemma}
For all  benefit functions that fulfil the conditions of the theorem, 
$\totreward(\mbsoln')>\totreward(\mbsoln^*)$. 
\end{lemma}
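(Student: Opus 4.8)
The plan is to show that the constructed solution $\mbsoln'$ weakly dominates $\mbsoln^*$ in every agent's round count and strictly dominates it in at least one coordinate, and then to lift this to a strict inequality on $\totreward$ using monotonicity (property~P2) together with the \emph{strict} monotonicity hypothesis of the theorem. I would write $\totreward(\mbsoln') = \sum_{i=1}^{n} \mu_i(\gamma_i(\mbsoln'))$ and likewise for $\mbsoln^*$, so that the whole claim reduces to a term-by-term comparison of the benefit values.

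First I would invoke the preceding termination lemma: since the construction halts, the while-loop guard must fail at the returned solution, which is exactly the statement that for every agent $x_i$ we have $\gamma_i(\mbsoln') \ge \gamma_i(\mbsoln^*)$. Combined with P2 (each $\mu_i$ is non-decreasing), this gives $\mu_i(\gamma_i(\mbsoln')) \ge \mu_i(\gamma_i(\mbsoln^*))$ for every $i$, hence the weak bound $\totreward(\mbsoln') \ge \totreward(\mbsoln^*)$. I would also remark that $\gamma_i(\mbsoln') \le \rho_i$ is maintained throughout, since the construction only ever raises $x_i$'s count up to $\gamma_i(\mbsoln^*) \le \rho_i$, so that each argument to $\mu_i$ lies in its domain $\{0,\ldots,\rho_i\}$ and the benefit values are well defined. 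Next I would extract the strict coordinate: chaining the lemma that the total number of assigned rounds never decreases across iterations (giving $\sum_i \gamma_i(\mbsoln') \ge \sum_i \gamma_i(\mbsoln)$) with the contradiction hypothesis $\sum_i \gamma_i(\mbsoln) > \sum_i \gamma_i(\mbsoln^*)$ yields $\sum_i \gamma_i(\mbsoln') > \sum_i \gamma_i(\mbsoln^*)$. Since the sums differ strictly while each term satisfies $\gamma_i(\mbsoln') \ge \gamma_i(\mbsoln^*)$, at least one agent $x_j$ must have $\gamma_j(\mbsoln') > \gamma_j(\mbsoln^*)$; strict monotonicity of $\mu_j$ then gives $\mu_j(\gamma_j(\mbsoln')) > \mu_j(\gamma_j(\mbsoln^*))$, which upgrades the weak sum to the strict bound $\totreward(\mbsoln') > \totreward(\mbsoln^*)$, contradicting the optimality of $\mbsoln^*$.

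The main point requiring care is not the concluding arithmetic but making the coordinate-wise domination genuinely global: the exit condition literally certifies $\gamma_i(\mbsoln') \ge \gamma_i(\mbsoln^*)$ for \emph{all} $i$ only because the loop actually reaches termination, and that in turn rests on the prior lemma's observation that every edge added to $\mbsoln'$ belongs to some matching of $\mbsoln^*$ and is therefore never removed — so raising one agent's count, even by displacing another, cannot indefinitely drop some agent below its $\mbsoln^*$-count. I would therefore be explicit that the strict gain is harvested from the \emph{total}-count lemma (this is where strictness enters) while the per-agent domination is harvested from the termination condition, and that it is the strict monotonicity assumption — strictly stronger than the validity properties used elsewhere in the paper — that converts the single improved coordinate into a strict improvement of $\totreward$.
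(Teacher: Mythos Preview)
Your proof is correct and follows essentially the same approach as the paper's own argument: both obtain the per-agent inequality $\gamma_i(\mbsoln') \ge \gamma_i(\mbsoln^*)$ from the loop's termination condition, combine the non-decreasing-total lemma with the contradiction hypothesis to get a strict coordinate, and then invoke (strict) monotonicity to conclude. Your write-up is simply more explicit than the paper's about which hypothesis supplies which step, and adds the domain check $\gamma_i(\mbsoln') \le \rho_i$ that the paper leaves implicit.
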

\begin{proof}
As before, we use $\gamma_i(\mbsoln)$ to denote the number of
rounds assigned to $x_i$ in solution $\mbsoln$.
Note that for each $i$,  $\gamma_i(\mbsoln') > \gamma_i(\mbsoln^*)$. 
By the contradiction assumption, the number of assignments in $\mbsoln$ is grater than that of $\mbsoln^*$ and therefore initially in $\mbsoln'$.
During the construction, the number of assignments does not decrease and therefore there exists at least one $i$ for which 
$\gamma_i(\mbsoln') > \gamma_i(\mbsoln^*)$.
Since the benefit function, $\reward_i$ is increasing for all $i$, it follows that $\totreward(\mbsoln')>\totreward(\mbsoln^*)$. 
\end{proof}

The last lemma above shows that $\totreward(\mbsoln')>\totreward(\mbsoln^*)$.
This contradicts the optimality of $\mbsoln^*$ and
the theorem follows. \hfill$\Box$
}
{}

\end{document}